\documentclass[twoside,11pt]{article}



\usepackage{linjmlr2e}


\usepackage[utf8]{inputenc} 
\usepackage[T1]{fontenc}    
\usepackage{hyperref}       
\usepackage{url}            
\usepackage{booktabs}       
\usepackage{amsfonts}       
\usepackage{nicefrac}       
\usepackage{microtype}      

\usepackage{graphicx}
\usepackage{subcaption}



\usepackage{amsmath}
\usepackage{amssymb}
\usepackage[capitalize]{cleveref}
\usepackage{breqn}
\usepackage{xspace}
\usepackage{multirow}
\usepackage[pdftex,dvipsnames,table]{xcolor}
\usepackage[colorinlistoftodos,prependcaption,textsize=small]{todonotes}
\usepackage{graphicx}
\usepackage{footnote}
\makesavenoteenv{tabular}
\makesavenoteenv{table}
\usepackage{xargs} 
\usepackage{threeparttable}
\usepackage{tikz}
\usetikzlibrary{calc, matrix, shapes, arrows, positioning, chains, decorations, 
	shapes.gates.logic.US}
\usepackage{mdframed}
\usepackage{algorithm}
\usepackage{algorithmic}
\usepackage{adjustbox}

\input{mysymbol.sty}

\newtheorem{assump}{Assumption}

\crefname{assump}{Assumption}{Assumptions}

\newcommand{\expect}{\mathbb{E}}
\newcommand{\constraint}{\mathcal{K}}
\renewcommand{\var}{\operatorname{Var}}
\renewcommand{\sign}{\operatorname{sgn}}

\newcommand{\AlgQFW}{\texttt{Quantized Frank-Wolfe}\xspace}
\newcommand{\Alg}{\texttt{Stochastic Quantized Frank-Wolfe}\xspace}
\newcommand{\fAlg}{\texttt{Finite-Sum Quantized Frank-Wolfe}\xspace}
\newcommand{\qfw}{QFW\xspace}
\newcommand{\sqfw}{S-QFW\xspace}

\newcommand{\sscheme}{\texttt{Sign Encoding Scheme}\xspace}
\newcommand{\mscheme}{\texttt{Partition Encoding Scheme}\xspace}

\newcommand{\signfw}{\texttt{SignFW}\xspace}
\newcommand{\qfwshort}{\texttt{QFW}\xspace}


\begin{document}


\title{Quantized Frank-Wolfe: Faster Optimization, Lower Communication, and 
Projection Free}

\author{\name Mingrui Zhang \email mingrui.zhang@yale.edu \\
       \addr Department of Statistics and Data Science\\
       Yale University\\
       New Haven, CT 06511
       \AND
       \name Lin Chen \email lin.chen@yale.edu \\
       \addr Yale Institute for Network Science \\
       \addr Department of Electrical Engineering\\
       Yale University\\
       New Haven, CT 06511
   	   \AND
       \name Aryan Mokhtari \email  aryanm@mit.edu\\
       \addr Laboratory for Information and Decision Systems \\
       Massachusetts Institute of Technology \\
       Cambridge, MA 02139
  	   \AND
  	   \name Hamed Hassani \email hassani@seas.upenn.edu \\
  	   \addr Department of Electrical and Systems Engineering \\
  	   University of Pennsylvania \\
  	   Philadelphia, PA 19104 
  	   \AND
  	   \name Amin Karbasi \email amin.karbasi@yale.edu \\
  	   \addr Department of Electrical Engineering and Computer Science \\
  	   Yale University\\
  	   New Haven, CT 06511
}


\maketitle
\begin{abstract}
	How can we efficiently mitigate the overhead of gradient communications in 
	distributed 
	optimization? This problem is at the heart of training scalable machine 
	learning models and has 
	been 
	mainly studied  in the unconstrained setting. In this paper, we propose 
	\AlgQFW (QFW), the first projection-free and 
	communication-efficient algorithm for solving \textit{constrained} 
	optimization problems at 
	scale. We consider 
	both convex and non-convex objective functions, expressed as a finite-sum 
	or more generally a 
	stochastic optimization problem, and provide strong theoretical guarantees 
	on the convergence 
	rate of 
	QFW. This 
	is accomplished by
	proposing novel  quantization schemes that efficiently compress gradients while 
	controlling  
	the 
	noise variance introduced during this process. Finally, we empirically validate 
	the 
	efficiency of 
	QFW in terms of communication and the quality of returned solution against 
	natural 
	baselines. 
	
\end{abstract}
\section{Introduction}
The Frank-Wolfe (FW) method \citep{frank1956algorithm}, also known as 
conditional gradient, has recently received considerable 
attention in the machine learning community, as a projection free algorithm for various \textit{constrained}  convex 
\citep{jaggi2013revisiting,garber2014faster,lacoste2015global,garber2015faster, 
	hazan2016variance,mokhtari2018stochastic} and non-convex 
\citep{lacoste2016convergence,reddi2016stochastic} optimization problems.
In order to apply the FW algorithm to large-scale problems (\emph{e.g.}, training deep neural 
networks\citep{ravi2018constrained,schramowski2018neural,berrada2018deep}, 
RBMs\citep{ping2016learning}) parallelization is unavoidable. To this end, 
distributed FW variants have been 
proposed for specific problems, \emph{e.g.}, online 
learning \citep{zhang2017projection}, learning low-rank matrices 
\citep{zheng2018distributed}, and optimization under block-separable constraint 
sets \citep{wang2016parallel}. 
A significant performance bottleneck of distributed optimization methods is the 
cost of communicating gradients, typically handled by using a parameter-server 
framework.  Intuitively, if each worker in the distributed 
system transmits the entire gradient, then at least $d$ floating-point numbers 
are communicated for each worker, where $d$ is the dimension of the problem. This 
communication 
cost can be a huge burden on the performance of parallel optimization 
algorithms 
\citep{chilimbi2014project,seide20141,strom2015scalable}. To circumvent this 
drawback, communication-efficient parallel algorithms have received 
significant attention. One major approach is to quantize the 
gradients while maintaining sufficient information 
\citep{de2015taming,abadi2016tensorflow,wen2017terngrad}. For 
\textit{unconstrained} optimization, when projection is not required for implementing Stochastic Gradient Descent (SGD), several communication-efficient distributed methods have been proposed, including 
QSGD \citep{alistarh2017qsgd}, SIGN-SGD 
\citep{bernstein2018signsgd}, and Sparsified-SGD~\citep{stich2018sparsified}. 

In the constrained setting, and in particular for distributed FW algorithms, 
the communication-efficient versions were only studied for 
specific problems such as sparse learning 
\citep{bellet2015distributed,lafond2016d}. 
In this paper, however, we develop \AlgQFW (QFW), a general 
communication-efficient distributed FW for both convex and non-convex objective 
functions. We study the performance of QFW in in two widely 
recognized 
settings: 1) stochastic and 2) finite-sum optimization.
Let $\ccalK\subseteq \reals^d$ be the constraint set.
For \textit{constrained stochastic optimization} the goal is to 
solve
\begin{equation}\label{stochastic_problem}
\min_{x \in \constraint} f(x)\ :=\ \min_{x \in \constraint} \expect_{z\sim 
P}[\tilde{f}(x,z)],
\end{equation}
where $x\in \reals^d$ is the optimization variable,  $Z \in 
\reals^q$ is a random variable drawn from a  distribution $P$, 
which determines the choice of a stochastic function 
$\tilde{f}:\reals^d\times \reals^q\to \reals$. For \textit{constrained 
finite-sum optimization} we further assume that  $P$  is a uniform distribution 
over $[N] = \{1, 2, \cdots, N\}$ and the goal is to solve a special case of 
Problem~\eqref{stochastic_problem}, namely, 
\begin{equation}\label{finite_sum_problem}
\min_{x \in \constraint} f(x)\ :=\ \min_{x \in \constraint} \frac{1}{N} 
\sum_{i=1}^N f_i(x).
\end{equation} 
In parallel settings, we suppose that there is a master and $M$ 
workers, and each worker maintains a local copy of $x$.
At every iteration of the stochastic case, each 
worker 
has access to independent stochastic 
gradients of $f$; whereas in the finite-sum case, we assume $N=Mn$, thus the 
objective function can be 
decomposed as $f(x)=\frac{1}{Mn}{\sum_{m\in[M],i\in[n]}f_{m,i}(x)}$, 
and each worker $m$ has access to the exact gradients of $n$ component 
functions $f_{m,i}(x)$ for all 
$i \in [n]$. 
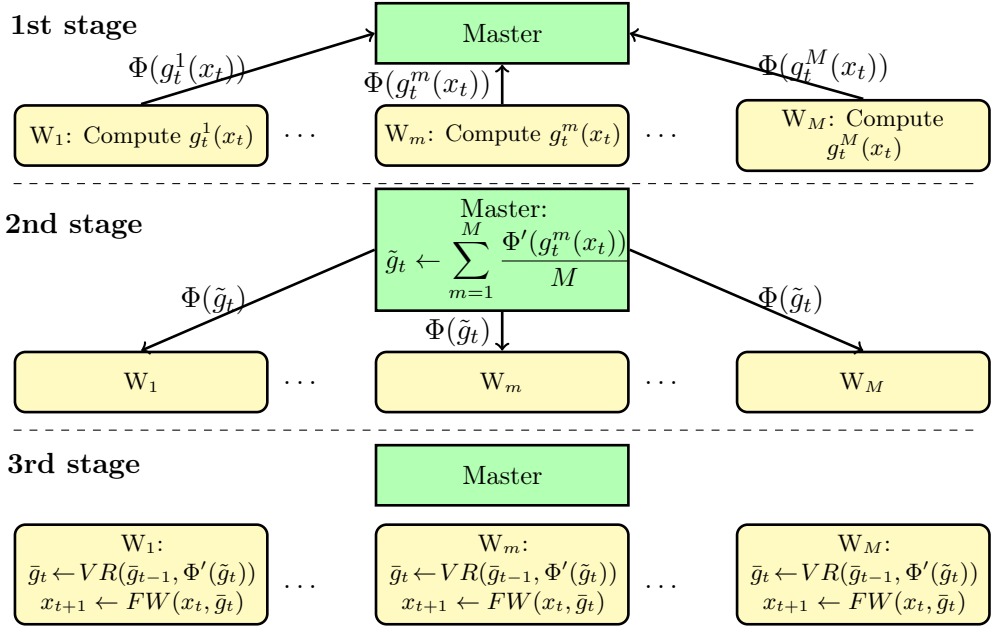
\begin{figure}[t]
	\begin{center}

\def \thisplotscale {0.8}
\def \unit {1 cm}

%
\tikzstyle{block}         = [ draw,
                              rectangle, rounded corners,
                              minimum height = 0.8*\unit,
                              minimum width  = 1.1*\unit,
                              text width     = 3.1*\unit,
                              text badly centered,
                              line width=1pt,
                              fill = yellow!30, 
                              font = \footnotesize, 
                              anchor = west]
\tikzstyle{bold block}    = [ block,
                              fill = blue!40]
\tikzstyle{light block}   = [ block,
                              fill = blue!10]
\tikzstyle{connector}     = [ draw, 
                              -stealth, 
                              shorten >=2,
                              shorten <=2,]
\tikzstyle{dot dot dot}   = [ draw, 
                              dashed]

\tikzstyle{blockbig}         = [ draw,
                              rectangle, 
                               minimum height = 0.8*\unit,
                              minimum width  = 1.1*\unit,
                              text width     = 3.1*\unit,
                              text badly centered,
                               line width=1pt,
                              fill = green!30, 
                              font = \small, 
                              anchor = west]

\begin{tikzpicture}[scale=0.8]

\node at (-2,15.3) {\textbf{1st stage}};

\path (3,15.2)                 node [blockbig] (C0) {Master}; 

\path (-3,13.5)                 node [block] (C1) {W$_1$: \black{Compute $g_t^1(x_t)$}}; 

\path (3,13.5)                 node [block] (C2) {W$_m$: \black{Compute $g_t^m(x_t)$}}; 
\path (9,13.5)                 node [block] (C3) {W$_M$: \black{Compute $g_t^M(x_t)$}}; 

\node at (1.8,13.5) {\ldots};
\node at (7.8,13.5) {\ldots};

\draw [dashed] (-3,12.7) -- (13,12.7);

\draw [->,line width=1pt, black] (C1.north) -- (C0.west)  node[midway,above,left] {$\black{\Phi(g_t^1(x_t))}$} ; 
\draw [->,line width=1pt, black]  (C2.north) -- (C0.south) node[midway,left] {$\black{\Phi(g_t^m(x_t))}$}; 
\draw [->,line width=1pt, black] (C3.north) -- (C0.east)   node[midway,right] {$\black{\Phi(g_t^M(x_t))}$}; 

\node at (-2,12) {\textbf{2nd stage}};

\path (3,11.6)                 node [blockbig] (C10) {Master:\\ \black{$\displaystyle{\tilde{g}_t \gets \sum_{m=1}^M \frac{\Phi^{\prime}(g_t^m(x_t))}{M}}$}}; 

\path (-3,9.4)                 node [block] (C11) {W$_1$}; 
\path (3,9.4)                 node [block] (C12) {W$_m$}; 
\path (9,9.4)                 node [block] (C13) {W$_M$}; 

\node at (1.8,9.4) {\ldots};
\node at (7.8,9.4) {\ldots};

\draw [->,line width= 1pt, black]  (C10.west) -- (C11.north) node[midway,above,left] {$\black{\Phi(\tilde{g}_t)}$} ; 
\draw [->,line width=1pt, black]  (C10.south) -- (C12.north)  node[midway,left] {$\black{\Phi(\tilde{g}_t)}$}; 
\draw [->,line width=1pt, black]  (C10.east)  -- (C13.north)  node[midway,right] {$\black{\Phi(\tilde{g}_t)}$}; 
\draw [dashed] (-3,8.6) -- (13,8.6);

\node at (-2,8) {\textbf{3rd stage}};

\path (3,7.85)                 node [blockbig] (C90) {Master}; 

\path (-3,6.2)                 node [block] (C91)  {W$_1$:\\
		\black{$\bar{g}_t \!\gets \!VR(\bar{g}_{t-1} ,\Phi^{\prime}(\tilde{g}_t))$ $x_{t+1} \gets FW(x_t,\bar{g}_t )$}}; 
\path (3,6.2)                 node [block] (C92) {W$_m$:\\
		\black{$\bar{g}_t \!\gets \!VR(\bar{g}_{t-1} ,\Phi^{\prime}(\tilde{g}_t))$ $x_{t+1} \gets FW(x_t,\bar{g}_t )$}}; 
\path (9,6.2)                 node [block] (C93) {W$_M$:\\
		\black{$\bar{g}_t \!\gets \!VR(\bar{g}_{t-1} ,\Phi^{\prime}(\tilde{g}_t))$ $x_{t+1} \gets FW(x_t,\bar{g}_t )$}};

\node at (1.8,6) {\ldots};
\node at (7.8,6) {\ldots};
\end{tikzpicture} 
	\end{center}
	\caption{Stages of our general Quantized Frank-Wolfe scheme at time $t$. 
		In the first stage, each worker $m$ computes its local gradient 
		estimation 
		$g_t^m(x_t)$ and sends the quantized version $\Phi(g_t^m(x_t))$ to the 
		master node. In the second stage, master computes the average of 
		decoded 
		received signals $\Phi^{\prime}(g_t^m(x_t))$, \emph{i.e.}, $\tilde{g}_t 
		\gets 
		\frac{1}{M}\sum_{m=1}^M {\Phi^{\prime}(g_t^m(x_t))}$ and then sends its 
		quantized version $\Phi(\tilde{g}_t)$ to the workers. In the third 
		stage, 
		workers use the decoded gradient average computed by all workers 
		$\Phi'(\tilde{g}_t)$ and their previous gradient estimation 
		$\bar{g}_{t-1}$ to update their new gradient estimation $\bar{g}_{t}$ 
		via a variance reduction (VR) scheme. Once the variance reduced 
		gradient 
		approximation $\bar{g}_{t}$ is evaluated, workers compute the new 
		variable 
		$x_{t+1}$ by following the update of Frank-Wolfe (FW).}\label{fig:workflow2}
\end{figure}

This way the task of computing gradients is divided among the 
workers.  The 
master node aggregates local gradients from the workers, and sends the 
aggregated gradients back to them so that each worker can update the 
model (\emph{i.e.}, their own iterate) locally. 
Thus, by 
transmitting quantized gradients, we can reduce the communication complexity 
(\emph{i.e.}, number of transmitted bits) 
significantly. The workflow diagram of the distributed quantization scheme is 
summarized in Figure \ref{fig:workflow2}. 
Finally, we should highlight that there is a 
trade-off between gradient 
quantization and information flow. Intuitively, more intensive 
quantization reduces the communication cost, but also loses more 
information, which may decelerate the convergence rate. 

\textbf{Our contributions:} In this paper, we propose a novel 
distributed projection-free framework that handles quantization for constrained convex and 
non-convex optimization problems in stochastic and finite-sum cases. It is 
well-known that 
unlike projected gradient-based methods, FW methods 
may diverge when fed with stochastic gradient \citep{mokhtari2018stochastic}. 
Indeed, a similar issue arises in a distributed setting where nodes exchange 
\textit{quantized gradients} which are noisy estimates of the gradients. By 
incorporating appropriate variance reduction techniques in 
different settings, we show that with quantized gradients, we can obtain a 
provably convergent method which  preserves the 
convergence rates of the vanilla unquantized method in most cases. We believe 
our work presents the first quantized, distributed, and projection-free method, 
in contrast to all the previous works which consider quantization in the 
unconstrained setting. 
Our theoretical results for \AlgQFW (QFW) are summarized in   
\cref{tab:contribution}, where the SFO complexity is the 
required number of stochastic gradients in stochastic case, and the IFO 
complexity is the number of exact gradients for component functions in 
finite-sum case. To be more specific, we show that (i) QFW improves the IFO complexity 
$\mathcal{O}(1/\epsilon^2)$  of the SVRF method 
\citep{hazan2016variance} to 
$\mathcal{O}(n/\epsilon)$ for finite-sum convex case, by using the newly 
proposed SPIDER variance reduction 
technique; (ii) QFW preserves the SFO/IFO complexities of the SFW algorithm 
\citep{mokhtari2018stochastic} for stochastic convex case, and the accelerated 
NFWU method \citep{shen2019complexities} for finite-sum non-convex case; (iii) QFW has slightly worse SFO complexity $\mathcal{O}(1/\epsilon^4)$ than 
that of SVFW-S \citep{reddi2016stochastic}, $\mathcal{O}(1/\epsilon^{10/3})$, 
for the stochastic non-convex case, while it uses quantized gradients. 


\begin{table*}[t]
	\centering
	\caption{SFO/IFO Complexity and average communication bits in different 
		settings, where $M$ is the number of workers, 
		$z_{1}=\lceil\log_2[({\sqrt{n}dT^2}/{M})^{1/2}+1]\rceil, 
		z_{2}=\lceil\log_2[(\sqrt{n}dT^2)^{1/2}+1]\rceil, z_{3}=\lceil 
		\log_2[({4\sqrt{n}dT}/{M})^{1/2}+1] \rceil, z_{4}=\lceil 
		\log_2[(4\sqrt{n}dT)^{1/2}+1] \rceil$.}
	\label{tab:contribution}
		\begin{tabular}{|c|c|c|c|}   	
			\hline
			Setting & Function   & SFO/IFO Complexity  & Average Bits  \\ 
			\hline 
			stoch. & convex  & $\mathcal{O}(1/\epsilon^{3})$ & 
			$(M + 1)(2d+ 32)$ \\ 
			\hline
			stoch. & non-convex  & 
			$\mathcal{O}(1/\epsilon^{4})$ 
			& $(M + 1)(2d + 32)$ \\ 
			\hline
			finite-sum & convex  & $\mathcal{O}(n/\epsilon)$ & 
			$d(Mz_{1}+z_{2})+(M+1)(d+32)$ \\
			\hline
			finite-sum & non-convex & 
			$\mathcal{O}(\sqrt{n}/\epsilon^{2})$ & 
			$d(Mz_{3}+z_{4})+(M+1)(d+32)$ \\
			\hline 
		\end{tabular}
\end{table*}

\section{Gradient Quantization Schemes} \label{sec:encoding schemes}
As mentioned  earlier, the communication cost can be reduced effectively by sending 
quantized gradients. In this section, we introduce a 
quantization 
scheme called  s-\mscheme. Consider the gradient vector $g\in \reals^d$ and let $g_i$ be the $i$-th 
coordinate of the gradient. The s-\mscheme encodes 
$g_i$ into an element from the set 
$\{\pm1,\pm\frac{s-1}{s},\cdots,\pm\frac{1}{s},0\}$ in a random way. To do so, 
we first compute the ratio ${|g_i|}/{\|g\|_\infty}$ and find the 
indicator $l_i \in \{0,1,\cdots,s-1\}$ such that $|g_i|/\|g\|_\infty \in 
[l_i/s, (l_i+1)/s]$. 
Then we define the random variable $b_i$ as
\begin{equation}\label{s_multi_prob_dist}
b_i=
\begin{cases}
l_i/s, & \quad \text{w.p.}\ \ 
1-\frac{|g_i|}{\|g\|_\infty}s+l_i, \\
(l_i+1)/s, & \quad \text{w.p.}\ \  
\frac{|g_i|}{\|g\|_\infty}s-l_i.
\end{cases}
\end{equation}
Finally, instead of transmitting $g_i$, we send $\sign(g_i) \cdot b_i$, 
alongside the norm 
$\|g\|_\infty$. It can be verified that 
$\expect[b_i|g]= {|g_i|}/{\|g\|_\infty}$. So we define the corresponding 
decoding scheme as $\phi^{\prime}(g_i) = \sign(g_i) b_i\|g\|_\infty$ to ensure 
that $\phi^{\prime}(g_i)$ is an unbiased estimator of $g_i$. We note that this 
quantization scheme is similar to the Stochastic Quantization method in 
\citep{alistarh2017qsgd}, except that we use $\ell_\infty$-norm while they adopt the 
$\ell_2$-norm.
In the $s$-\mscheme,  for each coordinate $i$, we need 1 bit to 
transmit 
$\sign(g_i)$. Moreover, since $b_i \in \{0,{1}/{s},\dots, (s-1)/{s}, 
1\}$, we need $z=\log_2(s+1)$
bits to send $b_i$. Finally, we need 32 bits to transmit $\|g\|_\infty$. Hence, 
the total number of communicated bits is  $32+d(z+1)$. Here, by ``bits'' we 
mean the number of 0’s and 1’s transmitted.

One major advantage of the $s$-\mscheme is that by tuning the 
partition parameter $s$ or the corresponding assigned bits $z$, we can smoothly 
control the trade-off between gradient 
quantization and information loss, which helps distributed 
algorithms to attain their best performance. We proceed to characterize the variance of the $s$-\mscheme. 

\begin{lemma}
	\label{lem:var_multi}
	The variance of  $s$-\mscheme $\phi$ for any $g\in \reals^d$ is 
	bounded by 
	\begin{equation}\label{eq:var_multi_claim}
	\var[\phi^{\prime}(g)|g] \leq \frac{d}{s^2}\|g\|_\infty^2.
	\end{equation}
\end{lemma}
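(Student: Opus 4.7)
The plan is to reduce the vector-level variance to a sum of scalar variances, one per coordinate, and then bound each scalar variance using the two-point distribution of $b_i$.

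First I would note that the randomness in $\phi'(g)$ decomposes coordinatewise: each $b_i$ is drawn independently according to \eqref{s_multi_prob_dist}, so
\begin{equation*}
\var[\phi'(g)\mid g] \;=\; \sum_{i=1}^d \var[\phi'(g_i)\mid g] \;=\; \|g\|_\infty^2 \sum_{i=1}^d \var[b_i\mid g],
\end{equation*}
where the last equality uses that $\sign(g_i)$ and $\|g\|_\infty$ are deterministic conditional on $g$. Thus the problem reduces to bounding $\var[b_i\mid g]$.

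Next I would compute $\var[b_i\mid g]$ directly. Setting $q_i := \frac{|g_i|}{\|g\|_\infty}s - l_i \in [0,1]$, the variable $b_i$ takes value $l_i/s$ with probability $1-q_i$ and $(l_i+1)/s$ with probability $q_i$. Since the two atoms differ by $1/s$, $b_i$ is an affine transform of a Bernoulli$(q_i)$ random variable scaled by $1/s$, so a short expansion yields
\begin{equation*}
\var[b_i\mid g] \;=\; \frac{q_i(1-q_i)}{s^2} \;\le\; \frac{1}{4 s^2}.
\end{equation*}
This is the only actual calculation and it is routine; the AM--GM style bound $q(1-q)\le 1/4$ already gives a factor strictly better than what the lemma claims.

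Plugging this into the coordinatewise sum gives
\begin{equation*}
\var[\phi'(g)\mid g] \;\le\; \|g\|_\infty^2 \cdot d \cdot \frac{1}{4 s^2} \;\le\; \frac{d}{s^2}\|g\|_\infty^2,
\end{equation*}
which is exactly \eqref{eq:var_multi_claim}. There is no real obstacle here; the only subtlety worth stating explicitly in the write-up is the independence (and hence additivity of variances) of the per-coordinate randomizations, and the fact that the encoding is unbiased so that $\var[\phi'(g_i)]$ really is $\|g\|_\infty^2 \var[b_i]$ rather than $\mathbb{E}[\phi'(g_i)^2]$.
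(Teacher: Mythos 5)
Your proof is correct and follows essentially the same route as the paper: decompose the squared deviation coordinatewise, factor out $\|g\|_\infty^2$, and bound the fluctuation of each $b_i$. The only difference is the last step, where the paper uses the crude pointwise bound $\bigl||g_i|/\|g\|_\infty - b_i\bigr| \le 1/s$ (since both quantities lie in the same interval of length $1/s$), whereas you compute the Bernoulli variance exactly and get $q_i(1-q_i)/s^2 \le 1/(4s^2)$, which sharpens the constant by a factor of $4$ but is otherwise the same argument; note also that the coordinatewise additivity already follows from linearity of expectation and unbiasedness, so independence of the $b_i$ is not actually needed.
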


If we set $s=1$, we obtain the \sscheme,
which requires communicating the encoded 
scalars $\sign(g_i) b_i \in \{\pm1,0\}$ and the norm $\|g\|_{\infty}$. Since $z 
= 
\log_2(s+1)=1$, the overall communicated bits for each worker are 
$32+2d$ per round. We characterize its variance in \cref{lem:var_single}.


\begin{lemma}
\label{lem:var_single}
	The variance of  \sscheme is given by
	\begin{equation}\label{lem:var_single_claim}
	\var[\phi^{\prime}(g)|g]= \|g\|_1\|g\|_\infty - \|g\|_2^2.
	\end{equation}
\end{lemma}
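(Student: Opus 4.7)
The plan is to compute the variance coordinate-by-coordinate and then sum, exploiting the fact that the random perturbations $b_i$ in the \sscheme are generated independently across coordinates.

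First, I would specialize the distribution in \eqref{s_multi_prob_dist} to $s=1$. Since $|g_i|/\|g\|_\infty \in [0,1]$, the indicator satisfies $l_i = 0$ (with the only edge case $|g_i| = \|g\|_\infty$ putting all mass on $b_i = 1$, which is consistent with both endpoints of the formula). Consequently,
\begin{equation*}
\phi'(g_i) \;=\; \sign(g_i)\,b_i\,\|g\|_\infty \;=\;
\begin{cases}
\sign(g_i)\,\|g\|_\infty, & \text{w.p. } |g_i|/\|g\|_\infty, \\
0, & \text{w.p. } 1 - |g_i|/\|g\|_\infty.
\end{cases}
\end{equation*}

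Next, I compute the first and second conditional moments of each coordinate. A direct calculation gives $\expect[\phi'(g_i)\mid g] = \sign(g_i)\,\|g\|_\infty \cdot (|g_i|/\|g\|_\infty) = g_i$ (confirming unbiasedness), and $\expect[\phi'(g_i)^2 \mid g] = \|g\|_\infty^2 \cdot (|g_i|/\|g\|_\infty) = |g_i|\,\|g\|_\infty$. Therefore
\begin{equation*}
\var[\phi'(g_i)\mid g] \;=\; |g_i|\,\|g\|_\infty - g_i^2.
\end{equation*}

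Finally, since the coordinates are quantized independently, the variance of the vector (viewed as a sum of squared deviations) decomposes as $\var[\phi'(g)\mid g] = \sum_{i=1}^d \var[\phi'(g_i)\mid g]$. Summing the per-coordinate expression yields $\|g\|_\infty \sum_i |g_i| - \sum_i g_i^2 = \|g\|_1\|g\|_\infty - \|g\|_2^2$, which is the claimed identity. There is no real obstacle here; the only subtlety is to be careful with the corner case at the coordinate achieving the maximum, and to interpret ``variance of a random vector'' as the trace of the covariance matrix (equivalently, the expected squared Euclidean deviation), which is the standard convention used throughout the paper and is needed to match the coordinate-wise decomposition.
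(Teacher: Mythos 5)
Your proposal is correct and follows essentially the same route as the paper: both compute the per-coordinate second moment (using that for $s=1$ the quantized coordinate is $\sign(g_i)\|g\|_\infty$ with probability $|g_i|/\|g\|_\infty$ and $0$ otherwise, so $\expect[\phi'(g_i)^2\mid g]=|g_i|\,\|g\|_\infty$) and sum $|g_i|\,\|g\|_\infty-g_i^2$ over $i$. The only cosmetic difference is that the paper expands $\expect[\|\phi'(g)-g\|^2\mid g]$ directly rather than via first and second moments, and note that the coordinate-wise decomposition needs only linearity of expectation (the trace-of-covariance interpretation you cite), not independence across coordinates.
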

\begin{remark}
	For the probability distribution of the random variable $b_i$, 
	instead of   $\|g\|_\infty$, we can use  other norms $\|g\|_p$ (where $ p 
	\geq 1$). But it can be verified that the $\ell_\infty$-norm
	leads to the smallest variance for \sscheme. That is also the reason why we 
	do not use 
	$\ell_2$-norm as in \citep{alistarh2017qsgd}.
\end{remark}
\section{Stochastic Optimization}
In this section, we aim to solve the constrained stochastic optimization 
problem defined in \eqref{stochastic_problem} in a distributed fashion. 
In particular, we are interested in projection-free (Frank-Wolfe 
type) methods and execute quantization to reduce the communication cost between 
the workers and the master. Recall that we assume at each round $t$, 
each worker $m \in [M]$ has access to an unbiased estimator of the objective 
function 
gradient $\nabla f(x_t)$, which is denoted by $g_t^m(x_t)$, \emph{i.e.}, 
$\nabla f(x_t) = \expect[g_t^m(x_t) |x_t]$. We further assume that the 
stochastic gradients are independent of each other.

\begin{algorithm}[t]
	\begin{algorithmic}[1]
		\STATE {\bfseries Input:} constraint set $\constraint$, iteration 
		horizon $T$, 
		initial point $x_1 \in \constraint$, $\bar{g}_0 \gets 0$, step sizes 
		$\rho_t, 
		\eta_t$
		\STATE {\bfseries Output:} $x_{T+1}$ 
		or $x_o$, where $x_o$ is chosen from $\{x_1, x_2, \cdots, x_T\}$ 
		uniformly at random
		\FOR{$t=1$ {\bfseries to} $T$}
		\STATE Each worker $m$ gets an independent 
		stochastic 
		gradient $g_t^m(x_t)$
		\STATE Each worker $m$ encodes its local gradient as 
		$\Phi(g_t^m(x_t))$, and pushes  
		$\Phi(g_t^m(x_t))$ to the master
		\STATE Master decodes $\Phi(g_t^m(x_t))$ as 
		$\Phi^{\prime}(g_t^m(x_t))$
		\STATE Master computes the average gradient $\tilde{g}_t \gets 
		\frac{1}{M}\sum_{m=1}^M \Phi^{\prime}(g_t^m(x_t))$
		\STATE Master encodes $\tilde{g}_t$ as $\Phi(\tilde{g}_t)$ and 
		broadcasts 
		it to all the workers
		\STATE Workers decode $\Phi(\tilde{g}_t)$ as 
		$\Phi^{\prime}(\tilde{g}_t)$
		\STATE Workers compute the momentum-based gradient
		$\bar{g}_t \gets (1-\rho_t)\bar{g}_{t-1} + 
		\rho_t\Phi^{\prime}(\tilde{g}_t)$
		\STATE Workers update $x$ based on $x_{t+1} \gets x_t + 
		\eta_t(v_t-x_t)$ where $v_{t} \gets 
		\argmin_{v \in \constraint} \langle v, \bar{g}_t \rangle $ 
		\ENDFOR
	\end{algorithmic}
	\caption{\Alg (S-QFW)\label{alg:dist_fw}}
\end{algorithm}

In our proposed \Alg (S-QFW) method, at iteration $t$, each 
worker $m$ first computes its local stochastic gradient $g_t^m(x_t)$. Then, it 
encodes $g_t^m(x_t)$ as $\Phi(g_t^m(x_t))$ -- which is 
quantized and can be transmitted at a low communication cost -- to the master. 
Once the master receives all the coded stochastic gradients 
$\{\Phi(g_t^m(x_t))\}_{m=1}^M$, it uses a proper decoding scheme to 
evaluate $\{\Phi'(g_t^m(x_t))\}_{m=1}^M$, which are the decoded versions of 
the received signals $\{\Phi(g_t^m(x_t))\}_{m=1}^M$. Indeed, by design, each of 
the decoded signals $\Phi'(g_t^m(x_t))$ is an unbiased estimator of the 
objective function gradient $\nabla f(x_t)$. Then, the master evaluates the 
average of the decoded signals denoted by $\tilde{g}_t$, \emph{i.e.}, 
$\tilde{g}_t =(1/M) \sum_{m=1}^M \Phi'(g_t^m(x_t))$. After using a proper 
quantization scheme, the master broadcasts the coded signal $\Phi(\tilde{g}_t)$ 
to all the workers. The workers decode the received signals and use the 
resulted $\Phi'(\tilde{g}_t)$ vector to improve their gradient 
approximation. 

Note that even in the unquantized setting, if we use the stochastic gradient 
$g_t^m(x_t)$, instead of $\nabla 
f(x_t)$, Frank-Wolfe may still diverge \cite{mokhtari2018stochastic}. As a 
result, we 
need to further reduce the variance. To do so, each worker $m$ uses a momentum 
local vector $\bar{g_t}$ to update the iterates, which is defined by 
\begin{equation}\label{averaging_update}
\bar{g}_t \gets 
(1-\rho_t)\bar{g}_{t-1} + \rho_t\Phi^{\prime}(\tilde{g}_t).
\end{equation}
As the update of $\bar{g}_t $ in \eqref{averaging_update} computes a weighted 
average of the previous stochastic gradient approximation $\bar{g}_{t-1}$ and 
the updated network average stochastic gradient $\Phi^{\prime}(\tilde{g}_t)$, 
it has a lower variance comparing to the vector $\Phi^{\prime}(\tilde{g}_t)$. The key fact 
that allows us to prove convergence  is that the estimation error of 
$\bar{g}_{t}$ approaches zero as time passes (check \cref{lem:new_variance_reduction} in \cref{app:var_reduction_stochastic}). After computing the gradient estimation $\bar{g}_t$ based on \eqref{averaging_update}, workers update their variables by following the 
FW scheme, \emph{i.e.}, $x_{t+1} = x_t + \eta_t(v_t-x_t)$, where 
$v_{t} = \argmin_{v \in \constraint} \langle v, \bar{g}_t \rangle.$
S-QFW is outlined in \cref{alg:dist_fw}. 
Finally, note that we can use different quantization schemes $\Phi$ in  
S-QFW, which leads to different convergence rates 
and communication costs. 

Now we proceed to analyze \sqfw and first focus on convex settings. 
\begin{assump}
	\label{assump_on_K}
	The constraint set $\mathcal{K}$ is convex and 
	compact, with diameter $D = \sup_{x,y \in \mathcal{K}} \|x-y 
	\|$. 
\end{assump}

\begin{assump}
	\label{assump_on_f}
	The 
	function $f$ is convex,  bounded, i.e., $\sup_{x \in \constraint}|f(x)| 
	\leq M_0$, and $L$-smooth  over 
	$\constraint$.
\end{assump}

\begin{assump}
	\label{assump_on_estimate}
	For each worker $m$ and iteration $t$, the stochastic gradient $g_t^m$ is 
	unbiased and has a 
	uniformly bounded variance, \emph{i.e.}, for all $m \in [M]$ and $ t \in 
	[T]$,	
	\begin{equation*}
	\expect[g_t^m(x_t)|x_t]=\nabla 
	f(x_t), \quad \var[g_t^m(x_t)|x_t] \leq \sigma_1^2.
	\end{equation*} 
\end{assump}

\begin{assump}\label{assump_on_quantization}
	For any $x_t \in \constraint$, and vectors $g_t^m(x_t)$ and $\tilde{g}_t$ generated by 
	\Alg, the quantization scheme $\Phi$ satisfies	
	\begin{align*}
	&\expect[\Phi^{\prime}(g_t^m(x_t))|g_t^m(x_t)]=g_t^m(x_t),\qquad\ \ \expect[\Phi^{\prime}(\tilde{g}_t)|\tilde{g}_t]=\tilde{g}_t, \nonumber\\
	&\expect[\|\Phi^{\prime}(g_t^m(x_t)) - g_t^m(x_t)\|^2]\leq \sigma_2^2, 
	\qquad \expect[\|\Phi^{\prime}(\tilde{g}_t) - \tilde{g}_t\|^2]\leq 
	\sigma_3^2.
	\end{align*}
\end{assump}


\begin{theorem}[Convex]
	\label{thm:distributed}
	Under Assumptions
	\ref{assump_on_K} to \ref{assump_on_quantization}, 
	if we set $\eta_t 
	= 2/(t+3), \rho_t = 2/(t+3)^{2/3}$ in \cref{alg:dist_fw}, 
	then after $T$ iterations, the output 
	$x_{T+1} \in \constraint$ satisfies
	\begin{equation*}
	\expect[f(x_{T+1})]-f(x^*) \leq \frac{Q_0}{(T+4)^{1/3}},  
	\end{equation*}
	where $Q_0=\max\{4M_0, 2D(Q^{1/2}+LD)\}, Q = \max 
	\{3\|\nabla f(x_1) \|^2, 
	4(\sigma_1^2+\sigma_2^2)/M+4\sigma_3^2+8L^2D^2 \}$, and $x^*$ is 
	a global minimizer of $f$ on $\constraint$. 
\end{theorem}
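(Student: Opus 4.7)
The plan is to combine a standard Frank--Wolfe descent inequality with a variance-reduction bound on the momentum-averaged gradient estimator $\bar g_t$, and then conclude by induction. The two key ingredients are (i) a bound of the form $\expect[\|\bar g_t - \nabla f(x_t)\|^2] \le Q/(t+3)^{2/3}$ coming from the momentum recursion (this is exactly the role played by the auxiliary lemma \texttt{new\_variance\_reduction} alluded to in the excerpt), and (ii) the usual $L$-smoothness/linear-minimization argument for Frank--Wolfe updates.

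First I would derive the descent inequality. Using $L$-smoothness on the update $x_{t+1} = x_t + \eta_t(v_t - x_t)$ with $\|v_t - x_t\| \le D$ gives
\begin{equation*}
f(x_{t+1}) \le f(x_t) + \eta_t \langle \nabla f(x_t), v_t - x_t\rangle + \tfrac{L D^2}{2}\eta_t^2 .
\end{equation*}
Writing $\langle \nabla f(x_t), v_t - x_t\rangle = \langle \bar g_t, v_t - x_t\rangle + \langle \nabla f(x_t) - \bar g_t, v_t - x_t\rangle$, using the definition $v_t = \argmin_{v\in\constraint}\langle v,\bar g_t\rangle$ to get $\langle \bar g_t, v_t\rangle \le \langle \bar g_t, x^*\rangle$, and applying convexity $\langle \nabla f(x_t), x^* - x_t\rangle \le f(x^*) - f(x_t)$, I arrive at
\begin{equation*}
f(x_{t+1}) - f(x^*) \le (1-\eta_t)\bigl(f(x_t) - f(x^*)\bigr) + \eta_t D\, \|\bar g_t - \nabla f(x_t)\| + \tfrac{LD^2}{2}\eta_t^2 .
\end{equation*}
Taking expectations and applying Jensen's inequality ($\expect\|\cdot\| \le (\expect\|\cdot\|^2)^{1/2}$) then reduces the problem to controlling the mean-squared error of $\bar g_t$.

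Next I would establish the variance-reduction bound. Writing $\bar g_t - \nabla f(x_t) = (1-\rho_t)\bigl(\bar g_{t-1} - \nabla f(x_{t-1})\bigr) + (1-\rho_t)\bigl(\nabla f(x_{t-1}) - \nabla f(x_t)\bigr) + \rho_t\bigl(\Phi'(\tilde g_t) - \nabla f(x_t)\bigr)$ and taking conditional expectations, the cross term involving the noise vanishes by unbiasedness (Assumptions~\ref{assump_on_estimate} and \ref{assump_on_quantization}). The fresh-noise piece has second moment at most $(\sigma_1^2+\sigma_2^2)/M + \sigma_3^2$ because the workers' stochastic gradients and the quantization noises are independent and $\Phi'$ is an unbiased decoder; the drift piece is bounded by $L\|x_t-x_{t-1}\| \le L D \eta_{t-1}$ using smoothness and the FW update. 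Plugging in $\rho_t = 2/(t+3)^{2/3}$ and $\eta_{t-1}=O(1/t)$, a straightforward induction on $t$ yields $\expect[\|\bar g_t - \nabla f(x_t)\|^2] \le Q/(t+3)^{2/3}$ with the constant $Q$ exactly as defined in the theorem.

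Finally I would close the loop by induction on $t$. Combining the two ingredients produces
\begin{equation*}
\expect[f(x_{t+1}) - f(x^*)] \le \Bigl(1-\tfrac{2}{t+3}\Bigr)\expect[f(x_t) - f(x^*)] + \frac{2D\,Q^{1/2}}{(t+3)^{4/3}} + \frac{2 L D^2}{(t+3)^2} .
\end{equation*}
Assuming inductively that $\expect[f(x_t) - f(x^*)] \le Q_0/(t+3)^{1/3}$, the right-hand side can be bounded by $Q_0/(t+4)^{1/3}$ provided $Q_0 \ge 2D(Q^{1/2}+LD)$, after using the elementary inequality $(1-2/(t+3))/(t+3)^{1/3} \le 1/(t+4)^{1/3} - c/(t+3)^{4/3}$ for a suitable $c$. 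The base case follows from $|f|\le M_0$, which is why $Q_0$ also has the $4M_0$ term in its definition. The main obstacle is not the Frank--Wolfe recursion itself but the momentum-error bound in the middle step: one must simultaneously track the contraction from $(1-\rho_t)^2$, the fresh quantization-plus-stochastic noise that enters at rate $\rho_t^2$, and the drift due to $x_t$ moving at rate $\eta_t$, and then choose $\rho_t$ so that all three terms balance at the $(t+3)^{-2/3}$ scale---which is precisely why $\rho_t=2/(t+3)^{2/3}$ and not the more standard $O(1/t)$ choice used in exact-gradient FW.
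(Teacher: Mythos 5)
Your proposal matches the paper's proof essentially step for step: the same Frank--Wolfe descent inequality (smoothness, optimality of $v_t$, convexity, Cauchy--Schwarz), the same momentum-recursion variance bound (which the paper isolates as a general lemma and instantiates with $\alpha=1$, $G=2LD$, $s=2$, $\sigma^2=(\sigma_1^2+\sigma_2^2+M\sigma_3^2)/M$), and the same closing induction via $(t+2)^3(t+4)\le(t+3)^4$. The only step your sketch glosses over is the cross term $2(1-\rho_t)^2\langle \nabla f(x_{t-1})-\nabla f(x_t),\ \bar g_{t-1}-\nabla f(x_{t-1})\rangle$ in the variance recursion, which does \emph{not} vanish in expectation (both factors are $\mathcal{F}_{t-1}$-measurable, so unbiasedness only kills the cross terms involving the fresh noise $\Phi'(\tilde g_t)-\nabla f(x_t)$) and must be absorbed via Young's inequality with parameter $\beta_t=\rho_t/2$ so that the effective contraction factor stays below $1-\rho_t$ --- exactly as the paper's lemma does.
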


\cref{thm:distributed} shows that the 
suboptimality gap of \sqfw converges to zero at a sublinear rate of 
$\mathcal{O}(1/T^{1/3})$. Hence, after running at most 
$\mathcal{O}(\epsilon^{-3})$ iterations, we can find a solution that is 
$\epsilon$ close to the optimum.
%
We also characterize the exact complexity bound for \sqfw when  the \sscheme is 
used for quantization and show that it obtains an $\eps$-accurate solution 
after $\mathcal{O}(\epsilon^{-3})$ rounds for communication. This result is 
presented in \cref{app:thm_sign} due to space limit. Note that as each 
communication round in \sscheme requires  $(M+1)(32+2d)$ bits, the overall 
communication cost to find an $\epsilon$-suboptimal solution is of $\mathcal{O} 
(Md\epsilon^{-3})$. 

With slightly different parameters, \sqfw can be applied to 
 non-convex settings as well. In \textit{unconstrained} non-convex optimization problems, the gradient norm $\|\nabla f\|$ is usually a 
good measure of convergence as $\|\nabla f\| \to 
0$ implies convergence to a stationary point. However, in 
the constrained setting we study the 
Frank-Wolfe Gap \citep{jaggi2013revisiting,lacoste2016convergence} defined as
\begin{equation}
\label{eq:fw_gap}
\mathcal{G}(x) = \max_{v \in \constraint} \langle v-x, -\nabla f(x)\rangle.
\end{equation}
For constrained optimization problem \eqref{stochastic_problem}, if a point 
$x$ satisfies $\mathcal{G}(x)=0$, then it is a first-order stationary point. Also, by  definition, we have $\mathcal{G}(x) 
\geq 0, 
\forall\ x \in \constraint$. 
We analyze the convergence rate of \cref{alg:dist_fw} under the following 
assumption on the objective function $f$.

\begin{assump}
	\label{assump_on_nonconvex_f}
	The function $f$ is bounded, i.e., $\sup_{x \in \constraint}|f(x)| \leq 
	M_0$, and $L$-smooth  over 
	$\constraint$.
\end{assump}

\begin{theorem}[Non-convex]
	\label{thm:distributed_nonconvex}
	Under 
	\cref{assump_on_K,assump_on_nonconvex_f,assump_on_quantization,assump_on_estimate}, and
	given the 
	iteration horizon $T$, if we set $\eta_t =1/(T+3)^{3/4}, \rho_t = 
	2/(t+3)^{1/2}$ in \cref{alg:dist_fw}, then
		\begin{equation*}
	\expect[\mathcal{G}(x_o)] \leq 
	\frac{8M_0+20DQ^{1/2}/3}{(T+3)^{1/4}}+\frac{LD^2}{2(T+3)^{3/4}},
	\end{equation*}
	where $Q = \max \{2\|\nabla f(x_1)\|^2, 
	4(\sigma_1^2+\sigma_2^2)/M+4\sigma_3^2+2L^2D^2 \}$. 
\end{theorem}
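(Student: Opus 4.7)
The plan is to combine the standard Frank--Wolfe descent inequality with the variance reduction lemma alluded to in Section 3 (\cref{lem:new_variance_reduction} in \cref{app:var_reduction_stochastic}), since the structure is identical to the convex case of \cref{thm:distributed} except that we measure progress through the Frank--Wolfe gap $\mathcal{G}(x_t)$ instead of the suboptimality.

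First I would use the $L$-smoothness of $f$ and the update $x_{t+1} = x_t + \eta_t(v_t - x_t)$ to write
\begin{equation*}
f(x_{t+1}) \leq f(x_t) + \eta_t \langle \nabla f(x_t), v_t - x_t \rangle + \tfrac{L \eta_t^2}{2}\|v_t-x_t\|^2 \leq f(x_t) + \eta_t \langle \nabla f(x_t), v_t - x_t \rangle + \tfrac{L \eta_t^2 D^2}{2}.
\end{equation*}
The inner product is the part that needs care: letting $v_t^\star \in \argmin_{v\in\constraint}\langle v,\nabla f(x_t)\rangle$, so that $\mathcal{G}(x_t) = \langle x_t - v_t^\star,\nabla f(x_t)\rangle$, I would add and subtract $\bar g_t$ and use the optimality of $v_t$ for $\bar g_t$ to obtain
\begin{equation*}
\langle \nabla f(x_t), v_t - x_t\rangle \;\leq\; -\mathcal{G}(x_t) + \langle \nabla f(x_t) - \bar g_t,\, v_t - v_t^\star \rangle \;\leq\; -\mathcal{G}(x_t) + D\,\|\bar g_t - \nabla f(x_t)\|,
\end{equation*}
where the last step is Cauchy--Schwarz together with $\|v_t - v_t^\star\|\le D$. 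Substituting back and rearranging gives the per-step inequality $\eta_t\,\mathcal{G}(x_t) \leq f(x_t) - f(x_{t+1}) + \eta_t D\|\bar g_t - \nabla f(x_t)\| + \tfrac{L\eta_t^2 D^2}{2}$.

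Next I would invoke the variance reduction lemma for the momentum recursion \eqref{averaging_update} with $\rho_t = 2/(t+3)^{1/2}$ (the same tool used in the convex case). Under Assumptions~\ref{assump_on_f}--\ref{assump_on_quantization}, it yields $\expect\|\bar g_t - \nabla f(x_t)\|^2 \le Q/(t+3)^{1/2}$ for the stated $Q$, and hence by Jensen's inequality $\expect\|\bar g_t - \nabla f(x_t)\| \le \sqrt{Q}/(t+3)^{1/4}$. Summing the per-step inequality from $t=1$ to $T$ with the constant choice $\eta_t \equiv (T+3)^{-3/4}$, taking expectations, using the telescoping bound $f(x_1)-f(x_{T+1}) \le 2M_0$, and estimating $\sum_{t=1}^T (t+3)^{-1/4} \le \tfrac{4}{3}(T+3)^{3/4}$, I would get
\begin{equation*}
\frac{1}{T}\sum_{t=1}^T \expect[\mathcal{G}(x_t)] \;\leq\; \frac{2M_0(T+3)^{3/4}}{T} + \frac{4 D\sqrt{Q}(T+3)^{3/4}}{3T} + \frac{LD^2}{2(T+3)^{3/4}}.
\end{equation*}
Finally, using $(T+3)/T \leq 4$ for $T \geq 1$ to bound $(T+3)^{3/4}/T \leq 4/(T+3)^{1/4}$, and identifying the left-hand side with $\expect[\mathcal{G}(x_o)]$ (since $x_o$ is chosen uniformly), delivers a bound of the claimed form; the explicit constants $8M_0$ and $20DQ^{1/2}/3$ come out by absorbing the first-iterate contribution into the summation estimate.

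The main obstacle is not the FW descent manipulation, which is standard, but rather obtaining the recursive variance bound $\expect\|\bar g_t - \nabla f(x_t)\|^2 \leq Q/(t+3)^{1/2}$ for the momentum iterate under the \emph{double} quantization (workers$\to$master and master$\to$workers). One has to unroll $\bar g_t - \nabla f(x_t) = (1-\rho_t)(\bar g_{t-1} - \nabla f(x_{t-1})) + (1-\rho_t)(\nabla f(x_{t-1}) - \nabla f(x_t)) + \rho_t(\Phi'(\tilde g_t) - \nabla f(x_t))$, handle the cross term via conditional unbiasedness of both quantizations, use $L$-smoothness together with $\|x_t-x_{t-1}\| \le \eta_{t-1} D$ to control the drift, and split the stochastic variance into the worker-side piece (scaled by $1/M$ due to averaging, contributing $\sigma_1^2+\sigma_2^2$) and the master-side piece $\sigma_3^2$; an induction on $t$ then yields the claimed rate with the stated $Q$. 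This is the only place where the quantization parameters and assumptions enter, so carrying it out carefully is the crux.
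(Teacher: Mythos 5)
Your proposal is correct and follows essentially the same route as the paper's proof: the identical Frank--Wolfe-gap descent inequality, the same momentum variance-reduction lemma (applied with $\alpha=3/4$, $s=3$, $G=LD$, $\sigma^2=(\sigma_1^2+\sigma_2^2+M\sigma_3^2)/M$, where the worker/master variance split comes from the conditional-unbiasedness argument you sketch), and the same summation, telescoping, and averaging over the uniformly chosen $x_o$. The only differences are cosmetic constant bookkeeping (the paper gets $Q/(t+4)^{1/2}$ rather than $Q/(t+3)^{1/2}$ and uses $T\geq(T+3)/4$ and $T\geq(T+4)/5$ to arrive at the stated constants $8M_0$ and $20DQ^{1/2}/3$).
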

\cref{thm:distributed_nonconvex} indicates that in the 
non-convex setting, 
\sqfw finds an $\eps$-first order stationary point after at most 
$\mathcal{O}(\eps^{-4})$ iterations. 
%
%
By using \sscheme, each round  of communication requires $(M+1)(32+2d)$ bits. Therefore, to find an $\epsilon$-first order stationary point, we need $\mathcal{O}(\eps^{-4})$ rounds with the overall communication cost of $\mathcal{O} 
(Md\epsilon^{-4})$. 
%

\section{Finite-Sum Optimization}

In this section, we analyze the finite-sum problem defined in 
\eqref{finite_sum_problem}. Recall that we assume that there are $N$ functions 
and $M$ workers in total, and each worker $m$ has access to $n=N/M$ functions 
$f_{m,i}$ for $i \in [n]$. The major 
difference with the stochastic setting is that we can use a more aggressive 
variance reduction for communicating quantized gradients.  \citet{nguyen2017sarah,nguyen2017stochastic,nguyen2019optimal} 
developed the 
StochAstic Recursive grAdient algoritHm (SARAH), a stochastic recursive 
gradient update framework. 
Recently, \citet{fang2018spider} proposed  Stochastic 
Path-Integrated Differential Estimator (SPIDER) technique, a variant of SARAH, 
for unconstrained 
optimization in centralized settings. In this paper, we generalize 
SPIDER to the constrained and distributed settings. 

We first consider the  case where no quantization is performed. Let $p \in \mathbb{N}^+$ be a 
period parameter. At the 
beginning of each period, namely, \text{mod}$(t,p)=1$, each worker $m$, 
computes the average of all its local gradients and sends it to the master. 
Then, master calculates the average of the $M$ received signals and broadcasts 
it to all workers. Then, workers update their gradient estimation $\bar{g}_t$ as 
\begin{equation*}
\bar{g}_{t} = \frac{1}{Mn}\sum_{m=1}^M \sum_{i=1}^n \nabla 
f_{m,i}(x_t).
\end{equation*}
Note $\bar{g}_t$ is identical for all the workers. In the rest
of 
that 
period, \emph{i.e.}, \text{mod}$(t,p)\neq 1$, each 
worker $m$ samples a set of local component functions, denoted as 
$\mathcal{S}_{t}^m$, of 
size $S$ uniformly at random, 
computes the average of these gradients and sends it to master. Then, 
master calculates the average of the $M$ signals and broadcasts it to all 
the workers. The workers update their gradient estimation $g_t$ as
\begin{equation}\label{g_avg_update}
\bar{g}_{t} = \bar{g}_{t-1}+\frac{1}{MS}\sum_{m=1}^M\sum_{i 
	\in \mathcal{S}_{t}^m}[\nabla f_{m,i}(x_t)-\nabla 
f_{m,i}(x_{t-1})] .
\end{equation}
So $\bar{g}_t$ is still identical for all the workers.
In order to incorporate quantization, each worker simply pushes the quantized 
version of the average gradients. Then the 
master decodes the quantizations, encodes the average of decoded 
signals in a quantized fashion, and broadcasts the quantization. Finally, 
each worker 
decodes the quantized signal and updates $x_t$ 
locally. 
%
%
%
The full description of our proposed \fAlg (F-QFW) algorithm is outlined 
in \cref{alg:finite_dist_fw}.

\begin{algorithm}[t]
	\begin{algorithmic}[1]
		\STATE {\bfseries Input:} $\constraint$, $T$, No. of workers $M$, initial 
		point $x_1 \in \constraint$, period parameter $p$, sample size $S$
		\STATE {\bfseries Output:} $x_{T+1}$ 
		or $x_o$, 
		where $x_o$ is chosen from $\{x_1, x_2, 
		\cdots, x_T\}$ uniformly at random
		\FOR{$t=1$ {\bfseries to} $T$}
		\IF {\text{mod}$(t,p)=1$} 
		\STATE Each worker $m$ computes its local gradient 
		$g_t^m(x_t)=\frac{1}{n}\sum_{i=1}^n \nabla f_{m,i}(x_t)$
		\STATE Each worker $m$ encodes $g_t^m(x_t)$ as $\Phi_{1,t}(g_t^m(x_t))$ 
		and pushes it to the master
		\STATE Master decodes $\Phi_{1,t}(g_t^m(x_t))$ as 
		$\Phi_{1,t}^{\prime}(g_t^m(x_t))$
		\STATE Master computes the average gradient $\tilde{g}_t \gets 
		\frac{1}{M}\sum_{m=1}^M \Phi_{1,t}^{\prime}(g_t^m(x_t))$
		\STATE Master encodes $\tilde{g}_t$ as $\Phi_{2,t}(\tilde{g}_t)$, 
		and broadcasts it to all workers
		\STATE Workers decode
		$\Phi_{2,t}(\tilde{g}_t)$ as $\Phi_{2,t}^{\prime}(\tilde{g}_t)$ and 
		update $\bar{g}_t \gets \Phi_{2,t}^{\prime}(\tilde{g}_t)$
		\ELSE 
		\STATE Each worker $m$ at time $t$ samples $S$ component functions 
		uniformly at random called $\mathcal{S}_{t}^m$
		\STATE Each worker $m$ computes exact gradients $\nabla f_{m,i}(x_t), 
		\nabla f_{m,i}(x_{t-1})$ for all $i \in \mathcal{S}_{t}^m$ 
		\STATE Each worker $m$ encodes
		$\Phi_{1,t}(\frac{1}{S}{\sum_{i 
				\in \mathcal{S}_{t}^m}[\nabla f_{m,i}(x_t)-\nabla 
			f_{m,i}(x_{t-1})]})$ 
		and pushes to master
		\STATE Master decodes the signals 
		$\Phi_{1,t}(\frac{1}{S}{\sum_{i 
				\in \mathcal{S}_{t}^m}[\nabla f_{m,i}(x_t)-\nabla 
			f_{m,i}(x_{t-1})]})$
		\STATE Master computes $\tilde{g}_t \gets 
		\frac{1}{M}\sum_{m=1}^M \Phi_{1,t}^\prime 
		(\frac{1}{S}{\sum_{i 
				\in \mathcal{S}_{t}^m}[\nabla f_{m,i}(x_t)-\nabla 
			f_{m,i}(x_{t-1})]})$
		\STATE Master encodes $\tilde{g}_t$ as $\Phi_{2,t}(\tilde{g}_t)$, 
		and broadcasts all workers
		\STATE Workers decode
		$\Phi_{2,t}(\tilde{g}_t)$ as $\Phi_{2,t}^{\prime}(\tilde{g}_t)$ and 
		update $\bar{g}_t \gets \Phi_{2,t}^{\prime}(\tilde{g}_t)+\bar{g}_{t-1}$
		\ENDIF
		\STATE Each worker updates $x_{t+1}$ locally by $x_{t+1} \gets 
		x_t 
		+ \eta_t(v_t-x_t)$ where $v_{t} \gets \argmin_{v 
			\in \constraint} \langle v, \bar{g}_t \rangle $
		\ENDFOR
	\end{algorithmic}
	\caption{\fAlg (F-QFW)\label{alg:finite_dist_fw}}
\end{algorithm}

To analyze the convex case, we first make an assumption on the component 
functions.

\begin{assump}
	\label{assump_on_f_i_convex}
	The functions $f_{m,i}$ are convex, $L$-smooth on 
	$\constraint$, and uniformly bounded, i.e.,  $\sup_{x \in 
		\constraint}|f_{m,i}(x)| \leq M_0.$ 
	We also assume that $\sup_{x \in 
		\constraint}\|\nabla f_{m,i}(x)\|_\infty \leq G_\infty, \forall\ m \in 
	[M], i \in [n]$.
\end{assump}
	\begin{theorem}[Convex]
		\label{thm:finite_convex}
		Consider F-QFW outlined in Algorithm~\ref{alg:finite_dist_fw}. Recall 
		that $n$ indicates the number of local functions at each node, and $S$ 
		indicates the size of mini-batch used in~\eqref{g_avg_update}. 
		Under \cref{assump_on_K,assump_on_f_i_convex},
		if we set $ p=\sqrt{n}$, $S=\sqrt{n}$, and 
		$\eta_t={2}/({p\lceil \frac{t}{p} \rceil )}$, and use the 
		$s_{1,t}=({pd^{1/2}S^{1/2}M^{-1/2}}\lceil 
		\frac{t}{p} \rceil)$-\mscheme, and $s_{2,t}=(pd^{1/2}S^{1/2}\lceil 
		\frac{t}{p} \rceil)$-\mscheme as 
		$\Phi_{1,t}$ and $\Phi_{2,t}$ in \cref{alg:finite_dist_fw}, 
		then the output $x_{T+1} \in \constraint$  satisfies
		\begin{equation*}
		\expect[f(x_{T+1})]-f(x^*) \leq {Q_0}/{T},  
		\end{equation*}
		where  $Q_0=\max\{6pM_0, 3Q\}$, 
		$Q=4D(\sqrt{L^2D^2+2G_\infty^2}+LD)$, and $x^*$ 
		is a minimizer of $f$ on $\constraint$. 
		
	\end{theorem}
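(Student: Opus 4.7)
}

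The plan is to reduce the analysis to the standard Frank--Wolfe smoothness recursion
$$h_{t+1} \leq (1-\eta_t)\, h_t + \tfrac{L\eta_t^2 D^2}{2} + \eta_t D\, e_t,$$
where $h_t = \expect[f(x_t)] - f(x^{*})$ and $e_t = \sqrt{\expect\|\bar{g}_t - \nabla f(x_t)\|^2}$, and then to show, epoch by epoch, that $e_t$ is small enough that the recursion delivers the $\mathcal{O}(1/T)$ rate. Since $\eta_t = 2/(p\tau)$ is constant within epoch $\tau := \lceil t/p\rceil$, the induction variable will naturally be $\tau$, and the claim $h_{\tau p + 1} \leq Q_0/(\tau p + 1)$ will imply the theorem. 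The initial epochs, where $h_t$ can be as large as $2M_0$, are absorbed by the $6pM_0$ term in $Q_0$.

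The main technical step is to bound $e_t^2$. Writing $\delta_t = \bar{g}_t - \nabla f(x_t)$ and using that the quantization and sampling performed in step $t$ are conditionally unbiased given the history (by construction of $\Phi_{1,t},\Phi_{2,t}$ and of the mini-batches), I would establish the telescoping identity
$$\expect\|\delta_t\|^2 = \expect\|\delta_{t-1}\|^2 + \expect\|\text{update}_t - (\nabla f(x_t) - \nabla f(x_{t-1}))\|^2,$$
which resets at the beginning of each epoch to a pure quantization error on the full gradient. I would then decompose the step variance into (a) the mini-batch sampling variance, bounded by $L^2 \|x_t-x_{t-1}\|^2 /(MS)$ via $L$-smoothness, independence across workers, and $|S_t^m|=S$; (b) the worker-side quantization variance, bounded via Lemma~\ref{lem:var_multi} by $(d/s_{1,t}^2)\,L^2\|x_t-x_{t-1}\|^2/M$ after averaging $M$ independent encodings, using that $\|D_{m,t}\|_\infty \leq L\|x_t-x_{t-1}\|$; and (c) the master's quantization variance $(d/s_{2,t}^2)\|\tilde g_t\|_\infty^2$ with $\|\tilde g_t\|_\infty \leq L\|x_t-x_{t-1}\|$. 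Plugging in $s_{1,t}$ and $s_{2,t}$, each of (b) and (c) equals $L^2\|x_t-x_{t-1}\|^2/(p^2 S \tau^2)$, while the epoch-initial quantization error contributes $\mathcal{O}(G_\infty^2/(p^2 S \tau^2))$. Using $\|x_t - x_{t-1}\| \leq \eta_{t-1} D \leq 2D/(p\tau)$ and summing the $p$ increments inside epoch $\tau$, I get $e_t^2 \leq C (L^2 D^2 + G_\infty^2)/(n\tau^2)$ with $p=S=\sqrt{n}$, i.e.\ $e_t \leq \tfrac{1}{2}\sqrt{L^2 D^2 + 2G_\infty^2}\,/(\sqrt{n}\,\tau)$, which is precisely the scaling hidden in $Q = 4D(\sqrt{L^2 D^2 + 2G_\infty^2}+LD)$.

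Given this variance bound, the Frank--Wolfe recursion reads, within epoch $\tau$, $h_{t+1} \leq (1 - 2/(p\tau)) h_t + A/\tau^2$ with $A = \Theta((LD^2 + D\sqrt{L^2D^2+G_\infty^2})/p^2)$. Iterating $p$ times and using $(1-2/(p\tau))^p \leq e^{-2/\tau}$ together with $\sum_{k=0}^{p-1}(1-2/(p\tau))^k \leq p\tau/2$ gives
$$H_\tau \leq e^{-2/\tau} H_{\tau-1} + \tfrac{1}{2}Q/(p\tau),$$
where $H_\tau := h_{\tau p + 1}$. I would then verify the induction hypothesis $H_\tau \leq Q_0/(\tau p + 1)$ by the standard $1/\tau$-induction used in Frank--Wolfe analyses: the base cases $\tau \leq$ (some constant) follow from $|f| \leq M_0$ and the $6pM_0$ term, while the inductive step reduces to the elementary inequality $\tau \tfrac{\tau-1}{\tau} + \tfrac{1}{2}\tau \leq \tau+1$ once $Q_0$ dominates $3Q$.

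The hardest part will be the variance accounting in step (2): it requires a clean conditioning argument that separates the four sources of noise (mini-batch, worker quantization, master quantization, and accumulated error from earlier steps in the epoch), while tracking that each quantization layer is applied to a vector whose $\ell_\infty$-norm scales with $\|x_t-x_{t-1}\|$, and exploiting the $\tau$-dependence of $s_{1,t},s_{2,t}$ to match the per-epoch shrinkage rate of the Frank--Wolfe recursion. The rest is a standard SPIDER-style telescoping plus the usual $1/T$-induction for conditional gradient methods.
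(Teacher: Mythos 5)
Your overall strategy is the same as the paper's: a SPIDER-style telescoping bound on $\expect\|\bar g_t-\nabla f(x_t)\|^2$ that separates the mini-batch sampling noise from the two layers of quantization noise (this is exactly the content of Lemma~\ref{lem:finite_variance_reduction}), followed by the smoothness recursion iterated epoch by epoch and a $1/\tau$ induction whose base cases are absorbed by the $6pM_0$ term in $Q_0$. Your variance accounting is sound and in places slightly sharper than the paper's: the paper bounds the $\ell_\infty$-norms of the vectors being quantized by $2G_\infty$ via Assumption~\ref{assump_on_f_i_convex} rather than by $L\|x_t-x_{t-1}\|$, and it does not exploit the extra $1/M$ in the mini-batch variance; either route yields $e_t=\mathcal{O}\bigl(\sqrt{L^2D^2+G_\infty^2}\big/(p\tau)\bigr)$, which is all that is needed.

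The step that fails as written is the aggregation within an epoch. After iterating $h_{t+1}\le(1-\tfrac{2}{p\tau})h_t+A/\tau^2$ over the $p$ steps of epoch $\tau$, you bound the accumulated additive error by $\sum_{k=0}^{p-1}(1-\tfrac{2}{p\tau})^k\cdot\tfrac{A}{\tau^2}\le\tfrac{p\tau}{2}\cdot\tfrac{A}{\tau^2}=\tfrac{pA}{2\tau}$, i.e., you bound the finite geometric sum by its infinite-series limit $p\tau/2$ rather than by the number of terms $p$. This inflates the per-epoch additive error from $\Theta(Q/(p\tau^2))$ to $\Theta(Q/(p\tau))$, and with an additive term of order $1/(p\tau)$ the recursion $H_\tau\le e^{-2/\tau}H_{\tau-1}+\tfrac{Q}{2p\tau}$ cannot deliver $H_\tau\le Q_0/(\tau p+1)$ for any constant $Q_0$: closing the inductive step would require $Q_0\gtrsim Q\tau$, which is why your final ``elementary inequality'' only holds for small $\tau$. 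The fix is immediate --- use $\sum_{k=0}^{p-1}(1-\tfrac{2}{p\tau})^k\le p$, giving a per-epoch additive term $pA/\tau^2=\mathcal{O}(Q/(p\tau^2))$; this is precisely what the paper does, and the induction then closes via $(1-\tfrac{2}{p\tau})^p\le1-\tfrac{4}{3\tau}$ for $\tau\ge3$ together with $Q_0\ge 3Q$, exactly as you intend.
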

%


\cref{thm:finite_convex} indicates that in convex setting, if we 
use the recommended quantization schemes, then the output of 
\fAlg is 
$\epsilon$-suboptimal with at most ${Q_0}/{\epsilon}$ rounds.  
As $p=\sqrt{n}$, the Linear-optimization Oracle  (LO) complexity is 
$\mathcal{O}(\sqrt{n}/\epsilon)$. Also, the total Incremental First-order 
Oracle  (IFO) complexity is 
$[Mn+2(p-1)MS] \times ({T}/{p})=\mathcal{O}(n/\epsilon)$. By considering the 
quantization schemes with $s_{1,t}$ and $s_{2,t}$ quantization levels, the 
average communication bits per round are at most 
$d(M\lceil\log_2[({\sqrt{n}dT^2}/{M})^{1/2}+1]\rceil+
\lceil\log_2[(\sqrt{n}dT^2)^{1/2}+1]\rceil)+(M+1)(d+32)$.



\cref{alg:finite_dist_fw} can also be applied to the non-convex setting with a 
slight change in
parameters. 
We first make a standard assumption on 
the component functions. 

\begin{assump}
	\label{assump_on_f_i}
	The component functions $f_{m,i}$ are $L$-smooth on $\constraint$ and 
	uniformly bounded, \emph{i.e.},  
	$\sup_{x \in 
		\constraint}|f_{m,i}(x)| \leq M_0$. We 
	also assume that $\sup_{x \in 
	\constraint}\|\nabla f_{m,i}(x)\|_\infty \leq G_\infty, \forall\ m \in 
[M], i \in [n]$.
\end{assump}


	\begin{theorem}[Non-convex]\label{thm:quantized_finite_distributed_nonconvex}
		Under \cref{assump_on_K,assump_on_f_i},		
		if we set $ p=\sqrt{n}$, $S=\sqrt{n}$, and $\eta_t=T^{-1/2}$, 
		and use the  
		$s_{1,t}=({4\sqrt{n}dT}/{M})^{1/2}$-\mscheme, and  
		$s_{2,t}=((4\sqrt{n}dT)^{1/2})$-\mscheme as 
		$\Phi_{1,t}$ and $\Phi_{2,t}$ in \cref{alg:finite_dist_fw}, then the output $x_o \in \constraint$  satisfies
		\begin{equation*}
		\expect[\mathcal{G}(x_o)] \leq 
		\frac{2M_0+D\sqrt{L^2D^2+2G_\infty^2}+{LD^2}}{\sqrt{T}}.
		\end{equation*}
	\end{theorem}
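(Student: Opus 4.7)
The plan is to combine a standard Frank--Wolfe descent bound with a SPIDER-style variance recursion that also absorbs the noise introduced by the two quantization stages $\Phi_{1,t}$ and $\Phi_{2,t}$. First, I would derive a one-step descent inequality. From $L$-smoothness and the update $x_{t+1}=x_t+\eta_t(v_t-x_t)$, adding and subtracting $\bar g_t$ inside $\langle\nabla f(x_t),v_t-x_t\rangle$ and using that $v_t$ minimises $\langle\bar g_t,\cdot\rangle$ over $\constraint$ (so $\langle\bar g_t,v_t-x_t\rangle\le\langle\bar g_t,v_t^*-x_t\rangle$, where $v_t^*$ attains the FW gap $\mathcal G(x_t)$), together with Cauchy--Schwarz and $\|v-x\|\le D$, gives
\begin{equation*}
\eta_t\mathcal G(x_t)\le f(x_t)-f(x_{t+1})+2D\eta_t\|\bar g_t-\nabla f(x_t)\|+\tfrac{1}{2}LD^2\eta_t^2.
\end{equation*}
Summing over $t$, taking expectations, using $|f|\le M_0$, dividing by $T\eta=\sqrt T$, and exploiting that $x_o$ is uniform on $\{x_1,\dots,x_T\}$ yields
\begin{equation*}
\expect[\mathcal G(x_o)]\le\frac{2M_0}{\sqrt T}+\frac{2D}{T}\sum_{t=1}^{T}\expect\|e_t\|+\frac{LD^2}{2\sqrt T},
\end{equation*}
where $e_t:=\bar g_t-\nabla f(x_t)$. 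By Jensen it then suffices to bound $\expect\|e_t\|^2$ uniformly in $t$.

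The key step is a SPIDER-with-quantization variance recursion. Let $t_k=(k-1)p+1$ index epoch starts. At $t=t_k$ the local gradients are exact, so $\bar g_{t_k}$ is a two-stage unbiased quantization of $\nabla f(x_{t_k})$; by \cref{lem:var_multi} and the tower property, together with $\|\nabla f_{m,i}\|_\infty\le G_\infty$, I would get $\expect\|e_{t_k}\|^2\le dG_\infty^2/(Ms_{1,t_k}^2)+dG_\infty^2/s_{2,t_k}^2$. For interior steps, the conditional unbiasedness of both the mini-batch increment and of $\Phi_{1,t},\Phi_{2,t}$ yields the orthogonal decomposition
\begin{equation*}
\expect\bigl[\|e_t\|^2\mid\mathcal F_{t-1}\bigr]=\|e_{t-1}\|^2+\expect\bigl[\|\bar g_t-\bar g_{t-1}-(\nabla f(x_t)-\nabla f(x_{t-1}))\|^2\bigm|\mathcal F_{t-1}\bigr].
\end{equation*}
Three per-step variance contributions then accumulate: mini-batch sampling variance at most $L^2\|x_t-x_{t-1}\|^2/(MS)\le L^2D^2\eta_t^2/(MS)$ (from $L$-smoothness of the $f_{m,i}$ and $\|x_t-x_{t-1}\|\le\eta_t D$), worker quantization variance at most $dL^2D^2\eta_t^2/(Ms_{1,t}^2)$, and master quantization variance at most $dL^2D^2\eta_t^2/s_{2,t}^2$, all using $\|\nabla f_{m,i}(x_t)-\nabla f_{m,i}(x_{t-1})\|_\infty\le L\|x_t-x_{t-1}\|_\infty\le LD\eta_t$.

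Unrolling the recursion across an epoch of length $p=\sqrt n$ with $S=\sqrt n$, $\eta_t=T^{-1/2}$, $s_{1,t}^2=4\sqrt n\,dT/M$, and $s_{2,t}^2=4\sqrt n\,dT$, each of the three per-step contributions is $O(1/T^2)$, so the whole epoch contributes $O(1/T)$, yielding the uniform bound $\expect\|e_t\|^2\le(L^2D^2+2G_\infty^2)/T$. Hence $\expect\|e_t\|\le\sqrt{L^2D^2+2G_\infty^2}/\sqrt T$ by Jensen, and plugging this back into the descent inequality recovers the stated estimate. The hard part is the variance engineering: the quantization budgets $s_{1,t},s_{2,t}$ must be chosen aggressively enough to keep per-round communication at $\tilde O(dM)$ bits, yet loose enough so that the accumulated epoch quantization variance still matches the sampling variance of order $1/T$; the prescribed choices are exactly tight for this balance, and this is precisely what forces the pairing $p=S=\sqrt n$ with $\eta_t=T^{-1/2}$ to achieve the final $O(T^{-1/2})$ rate.
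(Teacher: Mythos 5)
Your proposal follows essentially the same route as the paper: a Frank--Wolfe descent inequality summed over $t$ and divided by $T\eta_t=\sqrt T$, combined with a SPIDER-type martingale recursion for $\expect\|\bar g_t-\nabla f(x_t)\|^2$ that separately accounts for mini-batch sampling variance and the two quantization stages, with the same parameter pairing $p=S=\sqrt n$, $\eta_t=T^{-1/2}$. The paper organizes the variance recursion through an auxiliary unquantized iterate $g_t$ (\cref{lem:finite_variance_reduction}) and then invokes a generic reduction (\cref{lem:finite_distributed_nonconvex}), but your single orthogonal decomposition of $e_t-e_{t-1}$ is equivalent in substance.

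Two points in your accounting do not quite deliver the stated constant. First, your descent inequality carries $2D\eta_t\|e_t\|$ where the paper gets $D\eta_t\|e_t\|$: instead of applying Cauchy--Schwarz to the two error inner products separately, combine them into the single term $\eta_t\langle\nabla f(x_t)-\bar g_t,\,v_t-v_t^{*}\rangle\le\eta_t D\|e_t\|$, using that both $v_t$ and $v_t^{*}$ lie in $\constraint$. With your factor of $2$ the final bound becomes $(2M_0+2cD+LD^2/2)/\sqrt T$, which does not imply the claimed $(2M_0+cD+LD^2)/\sqrt T$ since $c=\sqrt{L^2D^2+2G_\infty^2}\ge LD>LD/2$. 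Second, for the interior-step quantization variance you bound the $\ell_\infty$ norm of the quantized increments by $LD\eta_t$ via smoothness; this is valid (and tighter than the paper, which uses $\|\cdot\|_\infty\le 2G_\infty$ and thereby produces the $2G_\infty^2$ in the constant), but then the stated constant $(L^2D^2+2G_\infty^2)/T$ does not emerge from your bookkeeping as written, and your claim that "each of the three per-step contributions is $O(1/T^2)$" is off for the sampling term, which is $O(1/(MST))=O(1/(M\sqrt n\,T))$ per step and only $O(1/T)$ after summing over the $p=\sqrt n$ steps of an epoch. The epoch-start term, correctly handled with $G_\infty$, still needs to be carried through the recursion to all interior $t$ of the epoch. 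None of this changes the $O(T^{-1/2})$ rate, but as proposed the argument proves a bound with a different (and in the $cD$ term strictly worse) constant than the theorem asserts.
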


\cref{thm:quantized_finite_distributed_nonconvex} shows that for 
non-convex minimization, if we adopt the recommended quantization schemes, then 
\cref{alg:finite_dist_fw} finds an $\epsilon$-first order stationary point 
with at most 
$\mathcal{O}(1/\epsilon^2)$ rounds. Also, the total IFO complexity is 
$[Mn+2(p-1)MS]\cdot \frac{T}{p}= 
\mathcal{O}(\sqrt{n}/\epsilon^2)$, and the average communication bits per round 
are $d(M\lceil \log_2[({4\sqrt{n}dT}/{M})^{1/2}+1] 
\rceil+\lceil \log_2[(4\sqrt{n}dT)^{1/2}+1] \rceil)+(M+1)(d+32)$.




\section{Experiments}
\label{sec:experiments}

We evaluate the performance of algorithms by visualizing their optimality gap 
$ 
f(x_t)-f(x^*) $ (for convex settings), their loss $f(x_t)$ (for non-convex settings) as well as their testing accuracy vs.\ the number of transmitted bits. The experiments were performed on 20 Intel Xeon E5-2660 cores 
and thus the number of workers is 20. For each curve in the figures below, we 
ran at least 50 repeated experiments, and the height of shaded regions 
represents two 
standard deviations.

In our first setup, we consider a multinomial logistic regression problem. Consider the dataset $ \{ (x_i,y_i)  \}_{i=1}^N \subseteq \mathbb{R}^d\times 
\{1,\dots, C\}$ with $N$ samples that have $C$ different labels. We aim to find a model $w$ to classify these sample points under the condition that the solution has a small $\ell_1$-norm. Therefore, we aim to solve the following convex problem
\begin{equation}\label{mul_log}
\min_{w} f(w) := -\sum_{i=1}^N \sum_{c=1}^C 1\{y_i=c\} \log \frac{\exp(w_c^\top 
x_i)}{\sum_{j=1}^C \exp(w_j^\top x_i)}, \quad \text{s.t. } {\|w\|_1 \le 1}.
\end{equation}
In our experiments, we use the MNIST and CIFAR-10 datasets. For the 
MNIST dataset, we assume that each worker stores $ 
3000$ images, and, therefore, the overall number of samples in the training set 
is $N=60000$. The result on 
CIFAR-10 is similar and deferred to \cref{app:additonal_experiments}.

In our second setup, our goal is to minimize the loss of a three-layer neural network under some conditions on the norm of the solution. 
Before stating the problem precisely, let us define the log-loss function as $
h(y,p)\triangleq -\sum_{c=1}^C 1\{ y=c \}\log p_c $ for $ y\in 
\{1,\dots,C\} $ and a $ C $-dimensional probability vector $ p:= (p_1, \cdots, p_C) $. We aim to solve the following non-convex problem 
\begin{equation}\label{nn}
\min_{W_1, W_2} f(W_1,W_2,b_1,b_2) := \sum_{i=1}^N h(y_i, 
\phi(W_2\sigma(W_1x_i\!+\!b)\!+\!b_2) ), \ \ \ \text{s.t.} \  \|W_i\|_1\le a_1, 
\|b_i\|_1\le 
a_2,  
\end{equation}
where $ \sigma(x)\triangleq (1+e^{-x})^{-1} $ is the sigmoid function and $\phi$ is the softmax function. The imposed $\ell_1$ constraint on the weights leads to a sparse network. We further remark that Frank-Wolfe methods are suitable for training a neural network subject to an $\ell_1$ constraint as they are equivalent to a dropout 
regularization \citep{ravi2018constrained}. 
We use the MNIST 
and CIFAR-10 datasets. 
For 
the MNIST dataset, we assume that each  worker stores $3000 $ images. The size 
of matrices $ W_1 $ and $ W_2 $ are $ 784\times 10 $ and $ 10\times 10 $, 
respectively, and the constraints parameters are $a_1=a_2=10$. 
We obtain a similar result on CIFAR-10 and discuss it in 
\cref{app:additonal_experiments}.

\begin{figure*}[t]
	\centering
			\begin{subfigure}[b]{0.45\linewidth}
				\includegraphics[width=\linewidth]{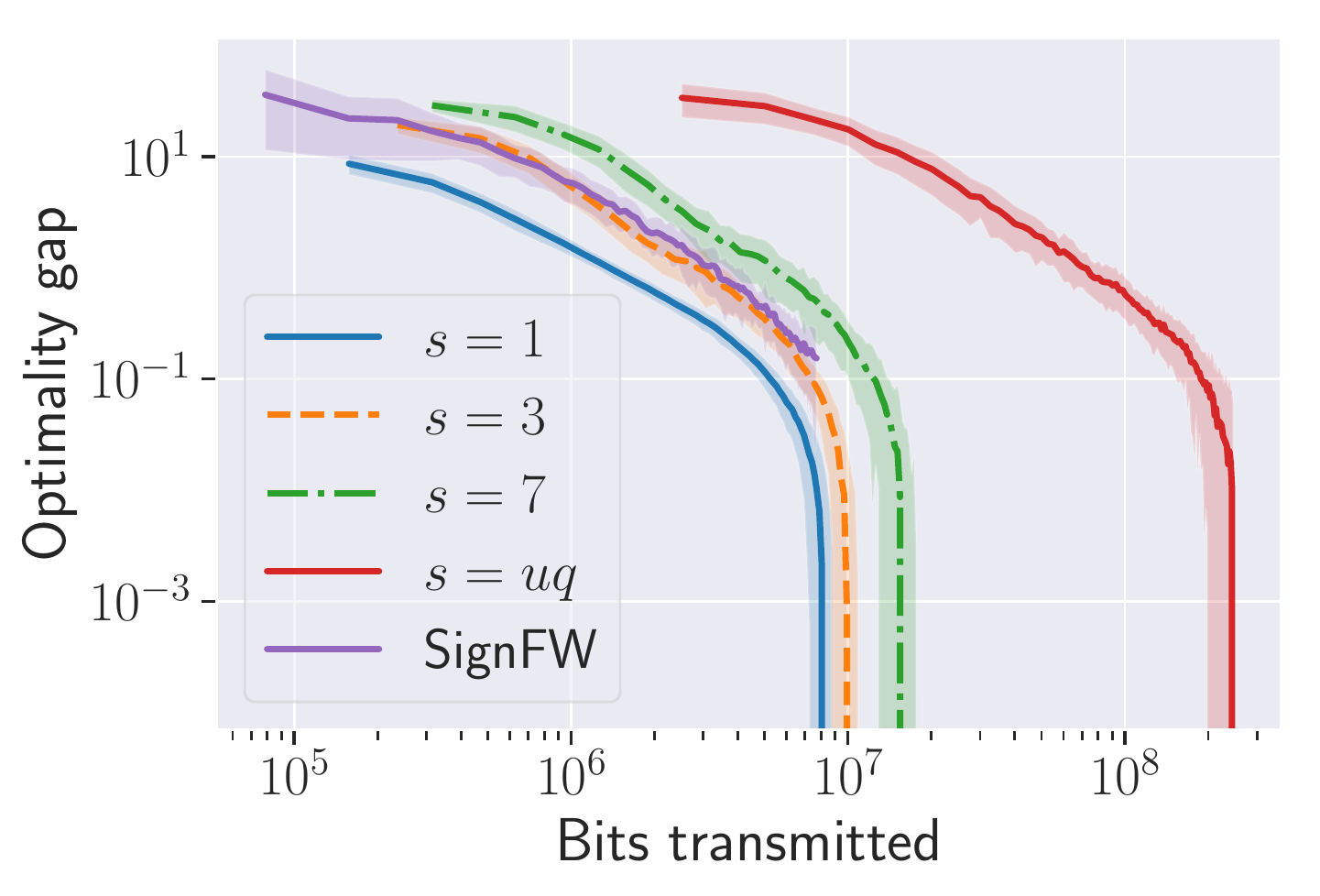} 
				\caption{Optimality gap vs.\ bits transmitted}
				\label{fig:sc-opt}
			\end{subfigure}\hfill
		\begin{subfigure}[b]{0.45\linewidth}
			\includegraphics[width=\linewidth]{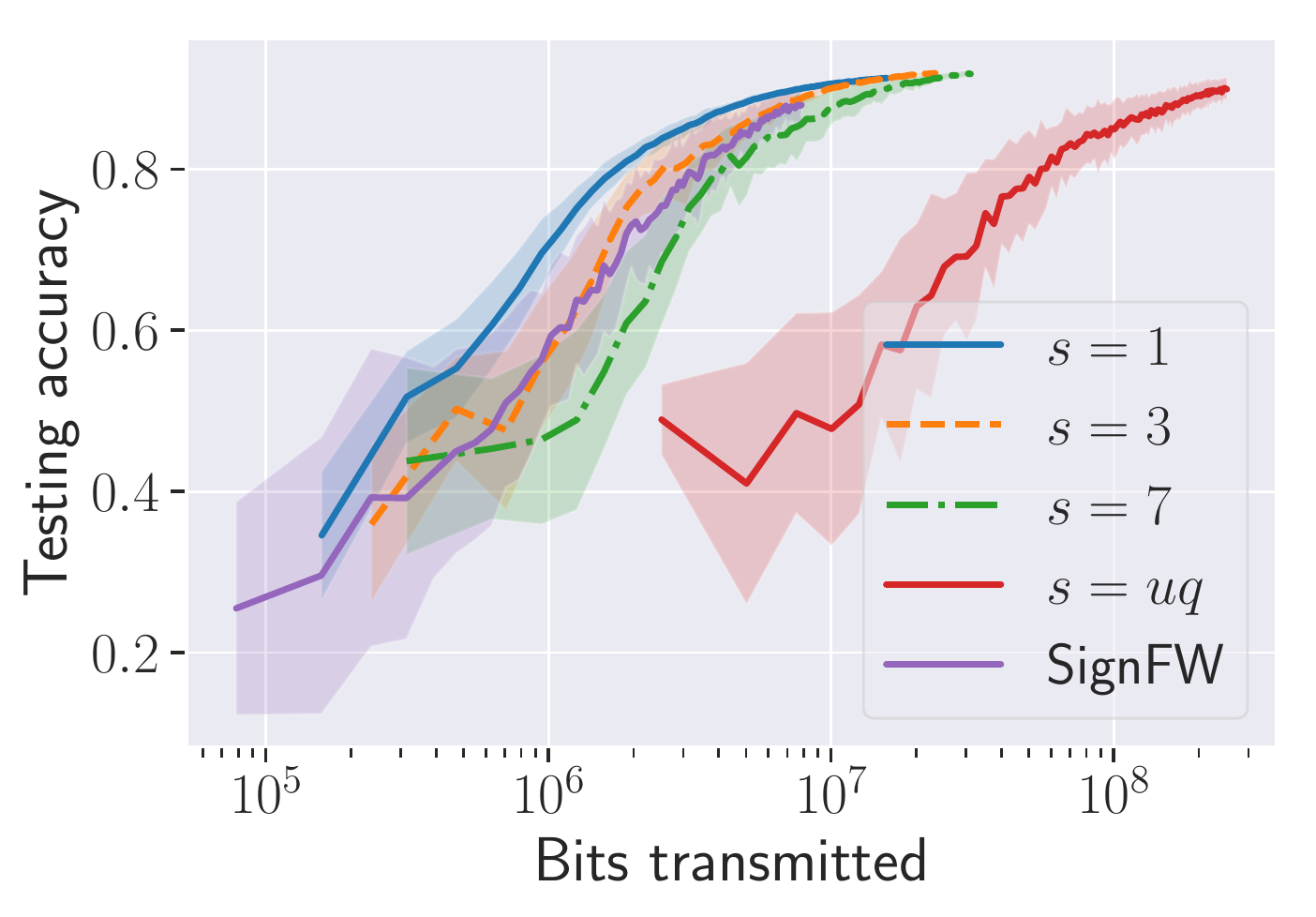} 
			\caption{Testing accuracy vs.\ bits transmitted}
			\label{fig:sc-accu}
		\end{subfigure} 
	\caption{Comparison in terms of optimality gap (left) and test accuracy 
	(right) versus number of transmitted bits for a multinomial logistic 
	regression problem. The best performance belongs to \qfwshort with \sscheme 
	($ s=1 $), and FW without quantization has the worst performance.
		\label{fig:stoch_1}
	}
\end{figure*}

\begin{figure*}[t]
	\centering
			\begin{subfigure}[b]{0.45\linewidth}
				\includegraphics[width=\linewidth]{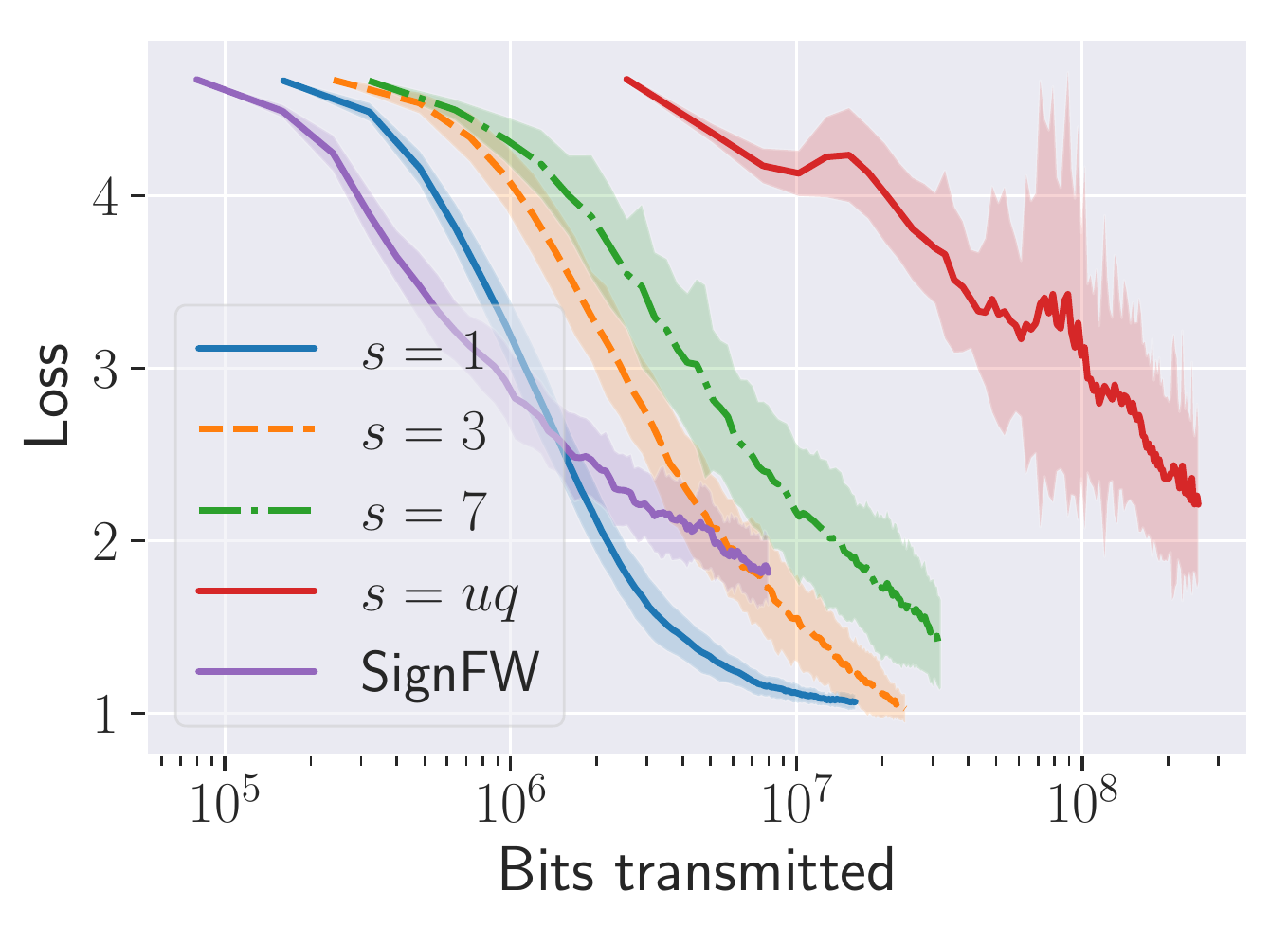} 
				\caption{Loss vs.\ bits transmitted}
				\label{fig:sn-loss}
			\end{subfigure}\hfill
			\begin{subfigure}[b]{0.45\linewidth}
				\includegraphics[width=\linewidth]{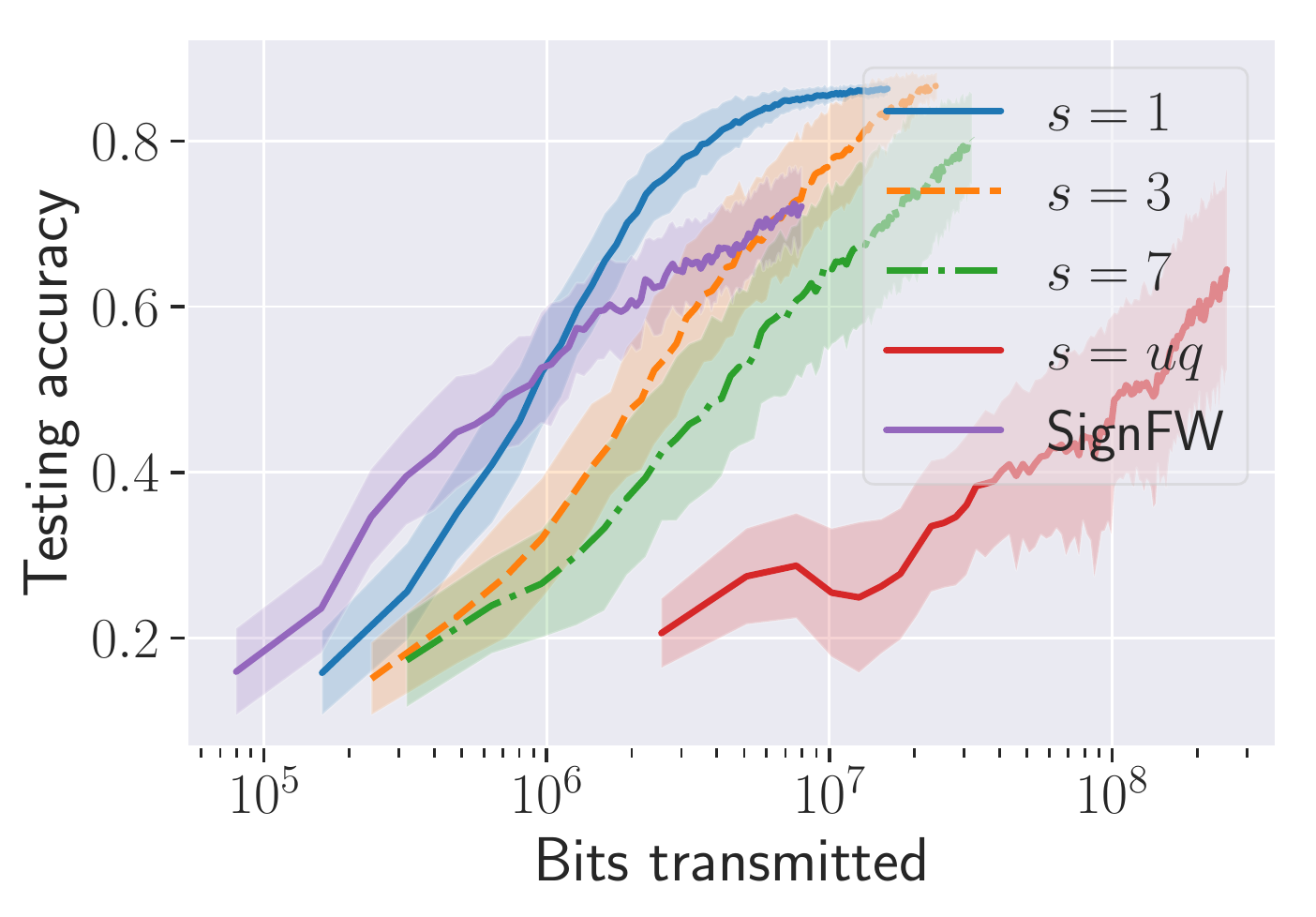} 
				\caption{Testing accuracy vs.\ bits transmitted}
				\label{fig:sn-accu}
			\end{subfigure}
	\caption{Comparison of algorithms in terms of loss function (left) and test 
	accuracy (right) versus number of transmitted bits for a three-layer neural 
	network. FW without quantization ($s=uq$) significantly underperforms the 
	quantized FW methods.  
		\label{fig:stoch_2}
	}
\end{figure*}

In our third setup, we study a multi-task least square regression 
problem~\citep{zheng2018distributed}. Its 
setting and result are discussed in \cref{app:additonal_experiments}.

For all of the considered settings,  we vary the quantization level and use the 
$s$-partition encoding scheme ($s=uq$ indicates FW without quantization). We 
also propose \signfw, an effective heuristic based on QFW, where the norm of 
the gradient is discarded and only the sign of each coordinate is transmitted. 
Even though this method may not enjoy the strong theoretical guarantees of QFW 
(and may even diverge) we observed in our experiments that it performs on par 
with QFW in practice. Let us emphasize that the proposed  \signfw algorithm is 
similar to \qfwshort with \sscheme except that $ 
\|g\|_\infty $ is not transmitted  and only $ \sign(g_i)b_i $ is transmitted (see Section~\ref{sec:encoding schemes}).


In Figure \ref{fig:stoch_1}, we observe the performance of \signfw, FW without quantization, and different variants of  \qfwshort for solving the multinomial logistic regression problem in \eqref{mul_log}. We observe that 
 \qfwshort with \sscheme ($ s=1 $) has the best performance and all quantized variants of FW outperform the FW method without quantization both in terms of training error and test accuracy. Specifically, \qfwshort with \sscheme ($ s=1 $) requires $ 8\times 10^6 $ transmitted bits to hit the lowest optimality gap in 
\cref{fig:sc-opt}, while \qfwshort with $ s=3 $ and $ s=7 $ require $ 10^7 $ 
and $ 1.5\times 10^7 $ bits, respectively, for achieving the same error. Furthermore,  FW without quantization requires more than $ 2\times 10^8 
$ bits to reach the same error, \emph{i.e.}, quantization reduces communication 
load by 
at least an order of magnitude.

Figure \ref{fig:stoch_2} demonstrates the performance of \signfw, FW without quantization, and different variants of  \qfwshort for solving the three-layer neural network in \eqref{nn}. The relative behavior of the considered methods in Figure \ref{fig:stoch_2} is similar to the one in Figure \ref{fig:stoch_1}. \qfwshort 
with \sscheme obtains a loss less than $ 2 $ after transmitting $ 2\times 10^6  
$ bits, while to attain the same loss level, it takes $ 5\times 10^6 $ bits if 
one uses \signfw or \qfwshort with $ s=3 $. The number of required bits becomes 
approximately $ 1.5\times 10^7$ for $ s=7 $. Also, if no quantization is 
applied, then the number of required bits is at least the  $ 3\times 10^8 $ 
(\emph{i.e.}, quantization reduces communication load by at least two orders of 
magnitude). To achieve a testing accuracy greater than $ 0.8 $, \qfwshort with 
$ s=1 $ requires $ 3\times 10^6 $ bits transmission, while the second most 
communication-efficient method \qfwshort with $ s=3 $ needs $ 10^7 $ bits. 
\section{Conclusion}
In this paper, we developed \AlgQFW (\qfw), the first general-purpose 
projection-free 
and communication-efficient 
framework for constrained optimization. Along with proposing various 
quantization schemes, \qfw can address both convex and non-convex 
optimization settings in  stochastic and finite-sum cases. We provided 
theoretical guarantees on the convergence rate of \qfw and validated its 
efficiency  empirically 
on training multinomial logistic regression and neural networks. Our 
theoretical results highlighted the importance of variance reduction techniques 
to stabalize Frank Wolfe and achieve a sweet trade-off between the 
communication complexity and convergence rate in distributed settings.


\newpage
\bibliographystyle{plainnat}
\bibliography{main}

\begin{thebibliography}{33}
\providecommand{\natexlab}[1]{#1}
\providecommand{\url}[1]{\texttt{#1}}
\expandafter\ifx\csname urlstyle\endcsname\relax
  \providecommand{\doi}[1]{doi: #1}\else
  \providecommand{\doi}{doi: \begingroup \urlstyle{rm}\Url}\fi

\bibitem[Abadi et~al.(2016)Abadi, Barham, Chen, Chen, Davis, Dean, Devin,
  Ghemawat, Irving, Isard, et~al.]{abadi2016tensorflow}
Mart{\'\i}n Abadi, Paul Barham, Jianmin Chen, Zhifeng Chen, Andy Davis, Jeffrey
  Dean, Matthieu Devin, Sanjay Ghemawat, Geoffrey Irving, Michael Isard, et~al.
\newblock Tensorflow: a system for large-scale machine learning.
\newblock In \emph{OSDI}, volume~16, pages 265--283, 2016.

\bibitem[Alistarh et~al.(2017)Alistarh, Grubic, Li, Tomioka, and
  Vojnovic]{alistarh2017qsgd}
Dan Alistarh, Demjan Grubic, Jerry Li, Ryota Tomioka, and Milan Vojnovic.
\newblock Qsgd: Communication-efficient sgd via gradient quantization and
  encoding.
\newblock In \emph{Advances in Neural Information Processing Systems}, pages
  1709--1720, 2017.

\bibitem[Bellet et~al.(2015)Bellet, Liang, Garakani, Balcan, and
  Sha]{bellet2015distributed}
Aur{\'e}lien Bellet, Yingyu Liang, Alireza~Bagheri Garakani, Maria-Florina
  Balcan, and Fei Sha.
\newblock A distributed frank-wolfe algorithm for communication-efficient
  sparse learning.
\newblock In \emph{Proceedings of the 2015 SIAM International Conference on
  Data Mining}, pages 478--486. SIAM, 2015.

\bibitem[Bernstein et~al.(2018)Bernstein, Wang, Azizzadenesheli, and
  Anandkumar]{bernstein2018signsgd}
Jeremy Bernstein, Yu-Xiang Wang, Kamyar Azizzadenesheli, and Anima Anandkumar.
\newblock signsgd: compressed optimisation for non-convex problems.
\newblock \emph{arXiv preprint arXiv:1802.04434}, 2018.

\bibitem[Berrada et~al.(2018)Berrada, Zisserman, and Kumar]{berrada2018deep}
Leonard Berrada, Andrew Zisserman, and M~Pawan Kumar.
\newblock Deep frank-wolfe for neural network optimization.
\newblock \emph{arXiv preprint arXiv:1811.07591}, 2018.

\bibitem[Chilimbi et~al.(2014)Chilimbi, Suzue, Apacible, and
  Kalyanaraman]{chilimbi2014project}
Trishul~M Chilimbi, Yutaka Suzue, Johnson Apacible, and Karthik Kalyanaraman.
\newblock Project adam: Building an efficient and scalable deep learning
  training system.
\newblock In \emph{OSDI}, volume~14, pages 571--582, 2014.

\bibitem[De~Sa et~al.(2015)De~Sa, Zhang, Olukotun, and R{\'e}]{de2015taming}
Christopher~M De~Sa, Ce~Zhang, Kunle Olukotun, and Christopher R{\'e}.
\newblock Taming the wild: A unified analysis of hogwild-style algorithms.
\newblock In \emph{Advances in neural information processing systems}, pages
  2674--2682, 2015.

\bibitem[Fang et~al.(2018)Fang, Li, Lin, and Zhang]{fang2018spider}
Cong Fang, Chris~Junchi Li, Zhouchen Lin, and Tong Zhang.
\newblock Spider: Near-optimal non-convex optimization via stochastic
  path-integrated differential estimator.
\newblock In \emph{Advances in Neural Information Processing Systems}, pages
  687--697, 2018.

\bibitem[Frank and Wolfe(1956)]{frank1956algorithm}
Marguerite Frank and Philip Wolfe.
\newblock An algorithm for quadratic programming.
\newblock \emph{Naval Research Logistics (NRL)}, 3\penalty0 (1-2):\penalty0
  95--110, 1956.

\bibitem[Garber and Hazan(2014)]{garber2014faster}
Dan Garber and Elad Hazan.
\newblock Faster rates for the frank-wolfe method over strongly-convex sets.
\newblock \emph{arXiv preprint arXiv:1406.1305}, 2014.

\bibitem[Garber and Hazan(2015)]{garber2015faster}
Dan Garber and Elad Hazan.
\newblock Faster rates for the frank-wolfe method over strongly-convex sets.
\newblock In \emph{ICML}, volume~15, pages 541--549, 2015.

\bibitem[Hazan and Luo(2016)]{hazan2016variance}
Elad Hazan and Haipeng Luo.
\newblock Variance-reduced and projection-free stochastic optimization.
\newblock In \emph{ICML}, pages 1263--1271, 2016.

\bibitem[Jaggi(2013)]{jaggi2013revisiting}
Martin Jaggi.
\newblock Revisiting frank-wolfe: Projection-free sparse convex optimization.
\newblock In \emph{ICML}, pages 427--435, 2013.

\bibitem[Lacoste-Julien(2016)]{lacoste2016convergence}
Simon Lacoste-Julien.
\newblock Convergence rate of frank-wolfe for non-convex objectives.
\newblock \emph{arXiv preprint arXiv:1607.00345}, 2016.

\bibitem[Lacoste-Julien and Jaggi(2015)]{lacoste2015global}
Simon Lacoste-Julien and Martin Jaggi.
\newblock On the global linear convergence of frank-wolfe optimization
  variants.
\newblock In \emph{Advances in Neural Information Processing Systems}, pages
  496--504, 2015.

\bibitem[Lafond et~al.(2016)Lafond, Wai, and Moulines]{lafond2016d}
Jean Lafond, Hoi-To Wai, and Eric Moulines.
\newblock D-fw: Communication efficient distributed algorithms for
  high-dimensional sparse optimization.
\newblock In \emph{Acoustics, Speech and Signal Processing (ICASSP), 2016 IEEE
  International Conference on}, pages 4144--4148. IEEE, 2016.

\bibitem[Mokhtari et~al.(2018{\natexlab{a}})Mokhtari, Hassani, and
  Karbasi]{mokhtari2017conditional}
Aryan Mokhtari, Hamed Hassani, and Amin Karbasi.
\newblock Conditional gradient method for stochastic submodular maximization:
  Closing the gap.
\newblock In \emph{AISTATS}, pages 1886--1895, 2018{\natexlab{a}}.

\bibitem[Mokhtari et~al.(2018{\natexlab{b}})Mokhtari, Hassani, and
  Karbasi]{mokhtari2018stochastic}
Aryan Mokhtari, Hamed Hassani, and Amin Karbasi.
\newblock Stochastic conditional gradient methods: From convex minimization to
  submodular maximization.
\newblock \emph{arXiv preprint arXiv:1804.09554}, 2018{\natexlab{b}}.

\bibitem[Nguyen et~al.(2017{\natexlab{a}})Nguyen, Liu, Scheinberg, and
  Tak{\'a}{\v{c}}]{nguyen2017sarah}
Lam~M Nguyen, Jie Liu, Katya Scheinberg, and Martin Tak{\'a}{\v{c}}.
\newblock Sarah: A novel method for machine learning problems using stochastic
  recursive gradient.
\newblock In \emph{Proceedings of the 34th International Conference on Machine
  Learning-Volume 70}, pages 2613--2621. JMLR. org, 2017{\natexlab{a}}.

\bibitem[Nguyen et~al.(2017{\natexlab{b}})Nguyen, Liu, Scheinberg, and
  Tak{\'a}{\v{c}}]{nguyen2017stochastic}
Lam~M Nguyen, Jie Liu, Katya Scheinberg, and Martin Tak{\'a}{\v{c}}.
\newblock Stochastic recursive gradient algorithm for nonconvex optimization.
\newblock \emph{arXiv preprint arXiv:1705.07261}, 2017{\natexlab{b}}.

\bibitem[Nguyen et~al.(2019)Nguyen, van Dijk, Phan, Nguyen, Weng, and
  Kalagnanam]{nguyen2019optimal}
Lam~M Nguyen, Marten van Dijk, Dzung~T Phan, Phuong~Ha Nguyen, Tsui-Wei Weng,
  and Jayant~R Kalagnanam.
\newblock Optimal finite-sum smooth non-convex optimization with sarah.
\newblock \emph{arXiv preprint arXiv:1901.07648}, 2019.

\bibitem[Ping et~al.(2016)Ping, Liu, and Ihler]{ping2016learning}
Wei Ping, Qiang Liu, and Alexander~T Ihler.
\newblock Learning infinite rbms with frank-wolfe.
\newblock In \emph{Advances in Neural Information Processing Systems}, pages
  3063--3071, 2016.

\bibitem[Ravi et~al.(2018)Ravi, Dinh, Lokhande, and Singh]{ravi2018constrained}
Sathya~N Ravi, Tuan Dinh, Vishnu Sai~Rao Lokhande, and Vikas Singh.
\newblock Constrained deep learning using conditional gradient and applications
  in computer vision.
\newblock \emph{arXiv preprint arXiv:1803.06453}, 2018.

\bibitem[Reddi et~al.(2016)Reddi, Sra, P{\'o}czos, and
  Smola]{reddi2016stochastic}
Sashank~J Reddi, Suvrit Sra, Barnab{\'a}s P{\'o}czos, and Alex Smola.
\newblock Stochastic frank-wolfe methods for nonconvex optimization.
\newblock \emph{arXiv preprint arXiv:1607.08254}, 2016.

\bibitem[Schramowski et~al.(2018)Schramowski, Bauckhage, and
  Kersting]{schramowski2018neural}
Patrick Schramowski, Christian Bauckhage, and Kristian Kersting.
\newblock Neural conditional gradients.
\newblock \emph{arXiv preprint arXiv:1803.04300}, 2018.

\bibitem[Seide et~al.(2014)Seide, Fu, Droppo, Li, and Yu]{seide20141}
Frank Seide, Hao Fu, Jasha Droppo, Gang Li, and Dong Yu.
\newblock 1-bit stochastic gradient descent and its application to
  data-parallel distributed training of speech dnns.
\newblock In \emph{Fifteenth Annual Conference of the International Speech
  Communication Association}, 2014.

\bibitem[Shen et~al.(2019)Shen, Fang, Zhao, Huang, and
  Qian]{shen2019complexities}
Zebang Shen, Cong Fang, Peilin Zhao, Junzhou Huang, and Hui Qian.
\newblock Complexities in projection-free stochastic non-convex minimization.
\newblock In \emph{The 22nd International Conference on Artificial Intelligence
  and Statistics}, pages 2868--2876, 2019.

\bibitem[Stich et~al.(2018)Stich, Cordonnier, and Jaggi]{stich2018sparsified}
Sebastian~U Stich, Jean-Baptiste Cordonnier, and Martin Jaggi.
\newblock Sparsified sgd with memory.
\newblock In \emph{Advances in Neural Information Processing Systems}, pages
  4452--4463, 2018.

\bibitem[Strom(2015)]{strom2015scalable}
Nikko Strom.
\newblock Scalable distributed dnn training using commodity gpu cloud
  computing.
\newblock In \emph{Sixteenth Annual Conference of the International Speech
  Communication Association}, 2015.

\bibitem[Wang et~al.(2016)Wang, Sadhanala, Dai, Neiswanger, Sra, and
  Xing]{wang2016parallel}
Yu-Xiang Wang, Veeranjaneyulu Sadhanala, Wei Dai, Willie Neiswanger, Suvrit
  Sra, and Eric Xing.
\newblock Parallel and distributed block-coordinate frank-wolfe algorithms.
\newblock In \emph{International Conference on Machine Learning}, pages
  1548--1557, 2016.

\bibitem[Wen et~al.(2017)Wen, Xu, Yan, Wu, Wang, Chen, and Li]{wen2017terngrad}
Wei Wen, Cong Xu, Feng Yan, Chunpeng Wu, Yandan Wang, Yiran Chen, and Hai Li.
\newblock Terngrad: Ternary gradients to reduce communication in distributed
  deep learning.
\newblock In \emph{Advances in neural information processing systems}, pages
  1509--1519, 2017.

\bibitem[Zhang et~al.(2017)Zhang, Zhao, Zhu, Hoi, and
  Zhang]{zhang2017projection}
Wenpeng Zhang, Peilin Zhao, Wenwu Zhu, Steven~CH Hoi, and Tong Zhang.
\newblock Projection-free distributed online learning in networks.
\newblock In \emph{Proceedings of the 34th International Conference on Machine
  Learning-Volume 70}, pages 4054--4062. JMLR. org, 2017.

\bibitem[Zheng et~al.(2018)Zheng, Bellet, and Gallinari]{zheng2018distributed}
Wenjie Zheng, Aur{\'e}lien Bellet, and Patrick Gallinari.
\newblock A distributed frank--wolfe framework for learning low-rank matrices
  with the trace norm.
\newblock \emph{Machine Learning}, 107\penalty0 (8-10):\penalty0 1457--1475,
  2018.

\end{thebibliography}

\newpage
\onecolumn
\appendix
%
%
%


\section{Proof of \cref{lem:var_multi}}\label{app:lem_var_multi}
\begin{proof}
	For any given vector $g\in \reals^d$, the ratio ${|g_i|}/{\|g\|_\infty}$ 
	lies in an 
	interval of the form $[l_i/s,{(l_i+1)}/{s}]$ where $l_i\in \{0,1,\dots, 
	s-1\}$. Hence, for that specific $l_i$, the following inequalities 
	\begin{align}\label{helpppp_1}
	\frac{l_i}{s} \leq \frac{|g_i|}{\|g\|_\infty} \leq \frac{l_i+1}{s}
	\end{align}
	are satisfied. Moreover, based on the probability distribution of $b_i$ we 
	know that  
	\begin{align}\label{helpppp_2}
	\frac{l_i}{s} \leq b_i \leq \frac{l_i+1}{s}.
	\end{align}
	Therefore, based on the inequalities in \eqref{helpppp_1} and 
	\eqref{helpppp_2} we can write
	\begin{align}\label{helpppp_3}
	-\frac{1}{s} \leq \frac{|g_i|}{\|g\|_\infty}- b_i \leq \frac{1}{s}
	\end{align}
	Hence, we can show that the variance of $s$-\mscheme is upper bounded by
	\begin{align}
	\var[\phi^{\prime}(g)|g]
	&= \expect[\|\phi^{\prime}(g)-g\|^2|g] \nonumber\\
	&= \sum_{i=1}^d \expect[(g_i - \sign(g_i)b_i\|g\|_\infty)^2|g] \nonumber\\
	&= \sum_{i=1}^d \expect[(|g_i| - b_i\|g\|_\infty)^2|g] \nonumber\\
	&= \sum_{i=1}^d \|g\|_\infty^2 
	\expect\left[\left(\frac{|g_i|}{\|g\|_\infty} - b_i\right)^2\mid g\right] 
	\nonumber\\
	&\leq \frac{d}{s^2}\|g\|_\infty^2,
	\end{align}
	where the inequality holds due to \eqref{helpppp_3}.
\end{proof}

\section{Proof of \cref{lem:var_single}}\label{app:lem_var_single}
\begin{proof}
	For any $g$, as we know that $\expect[\phi^{\prime}(g)|g]=g$, the variance 
	of 
	$\phi^{\prime}(g)$ can be written as 
	\begin{align}\label{kashk}
	\var[\phi^{\prime}(g)|g]
	&= \expect[\|\phi^{\prime}(g)-g\|^2|g] \nonumber \\
	&= \sum_{i=1}^d\ \expect[(g_i - \sign(g_i)b_i\|g\|_\infty)^2|g] \nonumber \\
	&= \sum_{i=1}^d\ g_i^2 +\|g\|_\infty^2 \expect[b_i^2|g] -2 g_i 
	\sign(g_i)|g_i|\nonumber \\
	&= \sum_{i=1}^d\ \|g\|_\infty^2 \expect[b_i^2|g] -g_i^2,
	\end{align}	
	where the third equality follows from $\expect[b_i|g]=|g_i|/\|g\|_\infty$ 
	and $\sign(g_i)^2=1$. Note that based on the probability distribution of 
	$b_i$, we can simplify the expression $\expect[b_i^2|g]$ as 
	${|g_i|}/{\|g\|_\infty}$ and write 
	\begin{equation*}			
	\var[\phi^{\prime}(g)|g] 
	= \sum_{i=1}^d [\|g\|_\infty|g_i|-g_i^2] 
	= \|g\|_1\|g\|_\infty - \|g\|_2^2,
	\end{equation*}
	which shows that the claim in \eqref{lem:var_single_claim} holds.
\end{proof}

\section{Bounding $\|\nabla f(x_t)-\bar{g}_t\|$ in Stochastic 
	Case}\label{app:var_reduction_stochastic}
In order to upper bound $\|\nabla f(x_t)-\bar{g}_t\|$, we need a lemma for 
variance reduction, which is a generalization of Lemma 2 in 
\citep{mokhtari2017conditional}.

\begin{lemma}
	\label{lem:new_variance_reduction}
	Let $ \{ 
	a_t\}_{t=0}^{T}$ be a sequence of points in $\mathbb{R}^n$ 
	such that $ \| a_t - 
	a_{t-1} \| 
	\leq G/(t+s)^\alpha $ for all $1\leq t\leq T $, where constants $ G 
	\geq 0$, $\alpha \in (0,1]$, $s \geq 
	8^{\frac{1}{2\alpha}}-1$. 
	Let $ \{ \tilde{a}_t\}_{t=1}^T$ be a sequence of random variables such 
	that $ \expect[ 
	\tilde{a}_t|\mathcal{F}_{t-1} ] = a_t $ and $ \expect[ \| \tilde{a}_t - 
	a_t 
	\|^2|\mathcal{F}_{t-1} ] \leq \sigma^2$ for 
	every $ t\geq 1 $, 
	where 
	$ 
	\mathcal{F}_{t-1} $ is the $ \sigma $-field generated by 
	$ \{ \tilde{a}_i\}_{i=1}^{t-1} $ 
	and $ \mathcal{F}_{0} = \varnothing $. Let $\{d_t\}_{t=0}^T$ be a 
	sequence of random 
	variables where $d_0$ is fixed and subsequent $d_{t}$ are obtained by 
	the recurrence 
	\begin{equation*}
	d_t = (1-\rho_t) d_{t-1} +\rho_t \tilde{a}_t
	\end{equation*}
	with $ \rho_t = \frac{2}{(t+s)^{2\alpha/3}} $. 
	Then we have
	\begin{equation*}
	\expect[\| a_t-d_t\|^2 ] \leq \frac{Q}{(t+s+1)^{2\alpha/3}}
	\end{equation*}
	where $ Q \triangleq \max \{ \|a_0 - d_0 \|^2 (s+1)^{2\alpha/3}, 
	4\sigma^2 + 2G^2 \} $.
\end{lemma}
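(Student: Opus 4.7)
\textbf{Proof Proposal for \cref{lem:new_variance_reduction}.}

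The plan is to prove the claim by induction on $t$. The base case $t=0$ is immediate from the definition of $Q$, since $\|a_0-d_0\|^2 \le Q/(s+1)^{2\alpha/3}$ follows from $Q \ge \|a_0-d_0\|^2 (s+1)^{2\alpha/3}$. The role of the side condition $s \ge 8^{1/(2\alpha)} - 1$ will be to guarantee $\rho_t = 2/(t+s)^{2\alpha/3} \le 1$ for every $t \ge 1$, which is required to make the convex-combination arguments below valid.

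For the inductive step, I would first algebraically decompose the error at step $t$ as
\[
a_t - d_t \;=\; (1-\rho_t)(a_{t-1} - d_{t-1}) \;+\; (1-\rho_t)(a_t - a_{t-1}) \;+\; \rho_t(a_t - \tilde a_t).
\]
Conditioned on $\mathcal{F}_{t-1}$, the first two summands are deterministic while the last has conditional mean zero (by unbiasedness of $\tilde a_t$) and conditional second moment at most $\rho_t^2 \sigma^2$. Consequently the cross term vanishes in expectation and we obtain
\[
\mathbb{E}\bigl[\|a_t - d_t\|^2 \mid \mathcal{F}_{t-1}\bigr] \;\le\; \bigl\|(1-\rho_t)(a_{t-1}-d_{t-1}) + (1-\rho_t)(a_t - a_{t-1})\bigr\|^2 + \rho_t^2 \sigma^2.
\]
I would then apply Young's inequality with parameter $\eta = \rho_t/(1-\rho_t)$, which gives the clean bound $(1-\rho_t)\|a_{t-1}-d_{t-1}\|^2 + \rho_t^{-1}\|a_t-a_{t-1}\|^2$ for the first term. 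Taking full expectations, invoking the inductive hypothesis, and using $\|a_t - a_{t-1}\|^2 \le G^2/(t+s)^{2\alpha}$ together with the definition of $\rho_t$ leads to
\[
\mathbb{E}\bigl[\|a_t - d_t\|^2\bigr] \;\le\; \frac{Q}{(t+s)^{2\alpha/3}} + \frac{-2Q + \tfrac{G^2}{2} + 4\sigma^2}{(t+s)^{4\alpha/3}}.
\]
Because $Q \ge 4\sigma^2 + 2G^2 \ge 4\sigma^2 + \tfrac{G^2}{2} + Q - Q$, the coefficient of $(t+s)^{-4\alpha/3}$ is at most $-Q$, so the right-hand side simplifies to $\frac{Q}{(t+s)^{2\alpha/3}} - \frac{Q}{(t+s)^{4\alpha/3}}$.

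The main obstacle will be the final analytic step: showing that this quantity is bounded by $Q/(t+s+1)^{2\alpha/3}$. Writing $\gamma = 2\alpha/3 \in (0, 2/3]$, this reduces to the inequality $(t+s)^{-\gamma} - (t+s+1)^{-\gamma} \le (t+s)^{-2\gamma}$. I would prove it by exploiting the concavity of $x \mapsto x^\gamma$, which yields $(t+s+1)^\gamma \le (t+s)^\gamma + \gamma (t+s)^{\gamma-1}$, hence $(t+s)^\gamma / (t+s+1)^\gamma \ge 1 - \gamma/(t+s)$ after using $1/(1+x)\ge 1-x$. It then suffices to verify $\gamma/(t+s) \le 1/(t+s)^{\gamma}$, equivalently $(t+s)^{1-\gamma} \ge \gamma$, which holds whenever $t+s \ge 1$; this is already guaranteed by the assumption on $s$. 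Rearranging closes the induction and yields the claimed bound.
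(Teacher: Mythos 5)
Your proposal is correct and follows essentially the same route as the paper's proof: the identical three-term decomposition of $a_t - d_t$, the same use of Young's inequality to absorb the drift term (you package it as the weighted bound $\|u+v\|^2 \le (1+\eta)\|u\|^2 + (1+\eta^{-1})\|v\|^2$ with $\eta = \rho_t/(1-\rho_t)$, where the paper bounds the cross term with $\beta_t = \rho_t/2$; both yield the same recursion once the constants are absorbed into $Q$), and the same induction on $z_t \le Q/(t+s+1)^{2\alpha/3}$. The only genuine difference is how the closing elementary inequality $(t+s+1)^{2\alpha/3} \le (t+s)^{2\alpha/3}+1$ is verified — you use concavity of $x\mapsto x^{2\alpha/3}$ together with $1/(1+x)\ge 1-x$ and $(t+s)^{1-2\alpha/3}\ge 2\alpha/3$, while the paper checks the sign of the derivative of an auxiliary function — and both arguments are valid.
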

\begin{proof}
	First, for all $t\geq 1$, we have $\rho_t \geq 0$ and
	\begin{equation*}
	\rho_t \leq \frac{2}{(1+s)^{2\alpha/3}} \leq 
	\frac{2}{(8^{\frac{1}{2\alpha}})^{2\alpha/3}}=1.
	\end{equation*}		
	
	Then we define $\Delta_t=\|a_t-d_t\|^2$, thus
	\begin{equation*}
	\begin{split}
	\expect[\Delta_t|\mathcal{F}_{t-1}]
	={}& \expect[\|\rho_t(a_t-\tilde{a}_t)+(1-\rho_t)(a_t-a_{t-1}) 
	+(1-\rho_t)(a_{t-1}-d_{t-1}) \|^2|\mathcal{F}_{t-1}]   \\
	\leq{}& \rho_t^2\sigma^2 
	+(1-\rho_t)^2\frac{G^2}{(t+s)^{2\alpha}}+(1-\rho_t)^2\Delta_{t-1} \\
	&\quad +2(1-\rho_t)^2\expect[\langle 
	a_t-a_{t-1},a_{t-1}-d_{t-1}\rangle|\mathcal{F}_{t-1}].
	\end{split}    
	\end{equation*}	
	
	By Law of Total Expectation,
	\begin{equation*}
	\begin{split}
	\expect[\Delta_t]
	={}&\expect[\expect[\Delta_t|\mathcal{F}_{t-1}]]\\
	\leq{} & \rho_t^2\sigma^2 
	+(1-\rho_t)^2\frac{G^2}{(t+s)^{2\alpha}}+(1-\rho_t)^2\expect[\Delta_{t-1}]
	+2(1-\rho_t)^2\expect[\langle 
	a_t-a_{t-1},a_{t-1}-d_{t-1}\rangle].
	\end{split}    
	\end{equation*}	
	
	Apply Young's inequality, we have
	\begin{equation*}
	2\langle a_t-a_{t-1},a_{t-1}-d_{t-1}\rangle 
	\leq{} 
	\beta_t\|a_{t-1}-d_{t-1} \|^2+\frac{\|a_t-a_{t-1} 
		\|^2}{\beta_t}	
	\leq{}
	\beta_t\|a_{t-1}-d_{t-1} \|^2+\frac{G^2}{\beta_t(t+s)^{2\alpha}}\,.
	\end{equation*}	
	
	So if we let $z_t = \expect[\Delta_t]$ and set $\beta_t = \rho_t/2$, we 
	have
	\begin{equation*}
	\begin{split}
	z_t &\leq \rho_t^2\sigma^2 
	+(1-\rho_t)^2\frac{G^2}{(t+s)^{2\alpha}}+(1-\rho_t)^2z_{t-1}
	+(1-\rho_t)^2(\beta_tz_{t-1}+\frac{G^2}{\beta_t(t+s)^{2\alpha}}) \\
	&\leq \rho_t^2\sigma^2 
	+(1-\rho_t)^2(1+1/\beta_t)\frac{G^2}{(t+s)^{2\alpha}} + 
	(1-\rho_t)^2(1+\beta_t)z_{t-1} \\
	&=\rho_t^2\sigma^2 +(1-\rho_t)^2(1+2/\rho_t)\frac{G^2}{(t+s)^{2\alpha}} 
	+ (1-\rho_t)^2(1+\rho_t/2)z_{t-1} \\
	&\leq \rho_t^2\sigma^2 +(1+2/\rho_t)\frac{G^2}{(t+s)^{2\alpha}} + 
	(1-\rho_t)z_{t-1}.
	\end{split}    
	\end{equation*}
	
	The last inequality holds since $\rho_t \in [0,1]$ 
	implies $(1-\rho_t)^2 \leq 1$ and $(1-\rho_t)(1+\rho_t/2) \leq 1$. Now 
	we can further simplify $z_t$
	\begin{equation*}
	\begin{split}
	z_t \leq{} & 
	(1-\frac{2}{(t+s)^{2\alpha/3}})z_{t-1}+\frac{4\sigma^2}{(t+s)^{4\alpha/3}} 
	+\frac{G^2}{(t+s)^{2\alpha}}+ \frac{G^2}{(t+s)^{4\alpha/3}}\\
	\leq{} & (1-\frac{2}{(t+s)^{2\alpha/3}})z_{t-1} + 
	\frac{4\sigma^2+2G^2}{(t+s)^{4\alpha/3}} \\
	\leq{} & (1-\frac{2}{(t+s)^{2\alpha/3}}) z_{t-1} + 
	\frac{Q}{(t+s)^{4\alpha/3}}.
	\end{split}    
	\end{equation*}
	
	Now we claim that $z_t \leq \frac{Q}{(t+s+1)^{2\alpha/3}}$ for 
	$\forall\ 
	t \in \{0, 1, \cdots, T\}$. We show it by induction. The statement 
	holds for $t=0$ because of the definition of $Q$. If the statement 
	holds for some $t=k-1$, where $k \in [T]$, then
	\begin{equation*}
	\begin{split}
	z_k \leq{} & (1-\frac{2}{(k+s)^{2\alpha/3}}) 
	z_{k-1}+\frac{Q}{(k+s)^{4\alpha/3}}\\
	\leq{} & (1-\frac{2}{(k+s)^{2\alpha/3}}) 
	\frac{Q}{(k+s)^{2\alpha/3}}+\frac{Q}{(k+s)^{4\alpha/3}}\\
	={}&\frac{(k+s)^{2\alpha/3}-1}{(k+s)^{4\alpha/3}}Q.
	\end{split}    
	\end{equation*}
	
	So we only need to prove that 
	\begin{equation*}
	\frac{(k+s)^{2\alpha/3}-1}{(k+s)^{4\alpha/3}} \leq 
	\frac{1}{(k+s+1)^{2\alpha/3}}    
	\end{equation*}
	or equivalently,
	\begin{equation*}
	[(k+s)^{2\alpha/3}-1]\cdot (k+s+1)^{2\alpha/3} \leq 
	(k+s)^{4\alpha/3}.    
	\end{equation*}
	
	It suffices to show that
	\begin{equation*}
	(k+s+1)^{2\alpha/3} \leq (k+s)^{2\alpha/3}+1.    
	\end{equation*}
	
	Consider $f(k) = (k+s+1)^{2\alpha/3} - (k+s)^{2\alpha/3}-1$, we observe 
	that $f(-s)=0$, and 
	$\frac{\mathrm{d}f(k)}{\mathrm{d}k}=\frac{2\alpha}{3}(k+s+1)^{2\alpha/3-1}
	-\frac{2\alpha}{3}(k+s)^{2\alpha/3-1}
	< 0$ for $\forall\ k > -s$ since $\alpha \in (0,1]$. So $f(k)$ a 
	decreasing function on $[-s, \infty)$, thus $f(k) \leq 0$ for $\forall\ 
	k \in \mathbb{N}^+$, which implies $(k+s+1)^{2\alpha/3} \leq 
	(k+s)^{2\alpha/3}+1$.
	
	So the statement holds for $t=k$, and by induction, $z_t \leq 
	\frac{Q}{(t+s+1)^{2\alpha/3}}$ for $\forall\ t \in \{0, 1, \cdots, T\}$.
\end{proof}

\section{Proof of \cref{thm:distributed}}\label{app:thm_distributed}
\begin{proof}
	First, since $x_{t+1} = (1-\eta_t)x_t + \eta_t v_t$ is a convex combination 
	of $x_t, v_t$, and $x_1 \in \constraint, v_t \in \constraint, \forall\ t$, 
	we can prove $x_t \in \constraint, \forall\ t $ by induction. So $x_{T+1} 
	\in \constraint$.
	
	Then we observe that for any iteration $t$, we have
	\begin{equation}
	\label{eq:iteration}
	\begin{split}
	f(x_{t+1})-f(x^*)
	={}&f(x_t + \eta_t(v_t-x_t))-f(x^*) \\
	\stackrel{(a)}{\leq}& f(x_t)+\eta_t \langle v_t-x_t, \nabla f(x_t) \rangle 
	+\frac{L}{2}\eta_t^2\|v_t-x_t\|^2 -f(x^*) \\
	={}& f(x_t)-f(x^*) +\eta_t \langle v_t-x_t, \bar{g}_t \rangle + \eta_t 
	\langle 
	v_t-x_t, \nabla f(x_t)-\bar{g}_t \rangle 
	+\frac{L}{2}\eta_t^2\|v_t-x_t\|^2 \\
	\stackrel{(b)}{\leq} & f(x_t)-f(x^*) +\eta_t \langle x^*-x_t, \bar{g}_t 
	\rangle  + \eta_t \langle v_t-x_t, \nabla f(x_t)-\bar{g}_t \rangle 
	+\frac{L}{2}\eta_t^2\|v_t-x_t\|^2 \\
	={}& f(x_t)-f(x^*) +\eta_t \langle x^*-x_t, \nabla f(x_t) \rangle 
	+\frac{L}{2}\eta_t^2\|v_t-x_t\|^2 \\
	&\quad + \eta_t \langle v_t-x_t, \nabla 
	f(x_t)-\bar{g}_t \rangle +\eta_t 
	\langle x^*-x_t, \bar{g}_t-\nabla f(x_t)\rangle\\
	\stackrel{(c)}{\leq} & (1-\eta_t)(f(x_t)-f(x^*)) + \eta_t \langle v_t-x^*, 
	\nabla f(x_t)-\bar{g}_t 
	\rangle+\frac{L}{2}\eta_t^2\|v_t-x_t\|^2 \\
	\stackrel{(d)}{\leq} & (1-\eta_t)(f(x_t)-f(x^*)) + \eta_t \|v_t-x^*\|\cdot 
	\|\nabla 
	f(x_t)-\bar{g}_t\|+\frac{L}{2}\eta_t^2\|v_t-x_t\|^2 \\
	\leq{} & (1-\eta_t)(f(x_t)-f(x^*)) + \eta_t D\|\nabla 
	f(x_t)-\bar{g}_t\|+\frac{L}{2}\eta_t^2D^2
	\end{split}
	\end{equation}
	where Inequality $(a)$ holds because of the $L$-smoothness. In $(b)$ we 
	used the optimality of $v_t$. Inequality $(c)$ is due to the convexity of 
	$f$, and we applied the Cauchy-Schwarz inequality in $(d)$. 
	

	Now we want to apply \cref{lem:new_variance_reduction} to bound $\|\nabla 
	f(x_t)-\bar{g}_t\|$. By the smoothness of $f$ and $\eta_t = \frac{2}{t+3}$, 
	we have 
	\begin{equation*}
	\| \nabla f(x_{t}) - \nabla f(x_{t-1}) \| \leq \eta_{t-1}\|v_{t-1}-x_{t-1} 
	\| L 
	\leq \frac{2LD}{t+2}. 
	\end{equation*}
	
	Let $ \mathcal{F}_{t-1} $ be the $ \sigma $-field generated by 
	$ \{ \Phi^{\prime}(\tilde{g}_i)\}_{i=1}^{t-1} $ 
	and $ \mathcal{F}_{0} = \varnothing $, by the unbiasedness of the random 
	encoding scheme $\Phi$ and the 
	stochastic gradient $g_t^m$, we have for all $t \geq 1$
	\begin{equation*}
	\expect[\Phi^{\prime}(\tilde{g}_t)|\mathcal{F}_{t-1}] = \expect 
	[\tilde{g}_t|\mathcal{F}_{t-1}] 
	= \expect[\frac{\sum_{m=1}^M 
		\Phi^{\prime}(g_t^m(x_t))}{M}|\mathcal{F}_{t-1}] 
	= \expect[\frac{\sum_{m=1}^M g_t^m(x_t)}{M}|\mathcal{F}_{t-1}] 
	= \nabla f(x_t)
	\end{equation*}
	and
	\begin{equation*}
	\begin{split}
	&\expect[\|\Phi^{\prime}(\tilde{g}_t)-\nabla f(x_t)\|^2|\mathcal{F}_{t-1}] 
	\\
	={}& \expect[\|\Phi^{\prime}(\tilde{g}_t)-\tilde{g}_t + \tilde{g}_t - 
	\frac{\sum_{m=1}^M g_t^m(x_t)}{M} + \frac{\sum_{m=1}^M g_t^m(x_t)}{M} 
	-\nabla f(x_t)\|^2|\mathcal{F}_{t-1}] \\
	={}& \expect[\|\Phi^{\prime}(\tilde{g}_t)-\tilde{g}_t + \frac{\sum_{m=1}^M 
		\Phi^{\prime}(g_t^m(x_t))}{M} - \frac{\sum_{m=1}^M g_t^m(x_t)}{M} + 
	\frac{\sum_{m=1}^M g_t^m(x_t)}{M} -\nabla 
	f(x_t)\|^2|\mathcal{F}_{t-1}] \\
	={}& \expect[\|\frac{\sum_{m=1}^M \Phi^{\prime}(g_t^m(x_t))}{M} - 
	\frac{\sum_{m=1}^M g_t^m(x_t)}{M}\|^2|\mathcal{F}_{t-1}] + 
	\expect[\|\frac{\sum_{m=1}^M g_t^m(x_t)}{M} -\nabla 
	f(x_t)\|^2|\mathcal{F}_{t-1}]  \\
	&\quad 
	+\expect[\|\Phi^{\prime}(\tilde{g}_t)-\tilde{g}_t\|^2|\mathcal{F}_{t-1}] + 
	2\expect[\langle 
	\Phi^{\prime}(\tilde{g}_t)-\tilde{g}_t,\frac{\sum_{m=1}^M 
		\Phi^{\prime}(g_t^m(x_t))}{M} - \frac{\sum_{m=1}^M g_t^m(x_t)}{M} 
	\rangle|\mathcal{F}_{t-1}] \\
	&\quad + 2\expect[\langle 
	\Phi^{\prime}(\tilde{g}_t)-\tilde{g}_t,\frac{\sum_{m=1}^M g_t^m(x_t)}{M} 
	-\nabla f(x_t) \rangle|\mathcal{F}_{t-1}] \\
	& \quad + 2\expect[\langle \frac{\sum_{m=1}^M \Phi^{\prime}(g_t^m(x_t))}{M} 
	- 
	\frac{\sum_{m=1}^M g_t^m(x_t)}{M}, \frac{\sum_{m=1}^M g_t^m(x_t)}{M} 
	-\nabla 
	f(x_t)\rangle|\mathcal{F}_{t-1}].
	\end{split}
	\end{equation*}
	
	By \cref{assump_on_estimate,assump_on_quantization}, we have
	\begin{align*}
	\expect[\|\frac{\sum_{m=1}^M \Phi^{\prime}(g_t^m(x_t))}{M} - 
	\frac{\sum_{m=1}^M g_t^m(x_t)}{M}\|^2|\mathcal{F}_{t-1}]
	={} &
	\frac{\sum_{m=1}^M\expect[\|\Phi^{\prime}(g_t^m(x_t)) - 
		g_t^m(x_t)\|^2|\mathcal{F}_{t-1}]}{M^2} 
	\leq \frac{\sigma_2^2}{M}\,, \\
%
%
	\expect[\|\frac{\sum_{m=1}^M g_t^m(x_t)}{M} -\nabla 
	f(x_t)\|^2|\mathcal{F}_{t-1}]
	={} & \frac{\sum_{m=1}^M 
		\var[g_t^m(x_t)|\mathcal{F}_{t-1}]}{M^2} 
	\leq 
	\frac{\sigma_1^2}{M}\,,\\  
%
	\expect[\|\Phi^{\prime}(\tilde{g}_t)-\tilde{g}_t\|^2|\mathcal{F}_{t-1}]
	\leq{} & \sigma_3^2  \,, 
	\end{align*}
	and
	\begin{align*}
	\expect[\langle \Phi^{\prime}(\tilde{g}_t)-\tilde{g}_t,\frac{\sum_{m=1}^M 
		\Phi^{\prime}(g_t^m(x_t))}{M} - \frac{\sum_{m=1}^M g_t^m(x_t)}{M} 
		\rangle 
	|\mathcal{F}_{t-1}] ={} & 0\,,\\  
%
	\expect[\langle \Phi^{\prime}(\tilde{g}_t)-\tilde{g}_t,\frac{\sum_{m=1}^M 
		g_t^m(x_t)}{M} -\nabla f(x_t) \rangle|\mathcal{F}_{t-1}] ={} & 0\,,  
		\\  
%
	\expect[\langle \frac{\sum_{m=1}^M \Phi^{\prime}(g_t^m(x_t))}{M} - 
	\frac{\sum_{m=1}^M g_t^m(x_t)}{M}, \frac{\sum_{m=1}^M g_t^m(x_t)}{M} 
	-\nabla 
	f(x_t)\rangle|\mathcal{F}_{t-1}]={} & 0\,.
	\end{align*}
	
	Therefore,
	\begin{equation}
	\label{eq:variance_bound}
	\expect[\|\Phi^{\prime}(\tilde{g}_t)-\nabla f(x_t)\|^2|\mathcal{F}_{t-1}] 
	\leq 
	\frac{\sigma_2^2}{M} + \frac{\sigma_1^2}{M} + \sigma_3^2 
	= \frac{\sigma_1^2+\sigma_2^2+M\sigma_3^2}{M}\,.
	\end{equation}
	
	Now apply \cref{lem:new_variance_reduction} with $\alpha=1, G=2LD, s=2> 
	2\sqrt{2}-1
	=8^{\frac{1}{2\alpha}}-1, \sigma^2 
	= \frac{\sigma_1^2+\sigma_2^2+M\sigma_3^2}{M}, d_t = \bar{g}_t, 
	\forall\ t \geq 0, a_t = \nabla f(x_t), \tilde{a}_t = 
	\Phi^\prime(\tilde{g}_t), \forall\ t \geq 1, a_0 = a_1 = \nabla f(x_1)$, we 
	have
	\begin{equation*}
	\expect[\|\nabla f(x_t) - \bar{g}_t\|^2] \leq \frac{Q}{(t+3)^{2/3}}\,,
	\end{equation*}
	where $Q = \max \{3^{2/3}\|\nabla f(x_1) \|^2, 
	\frac{4(\sigma_1^2+\sigma_2^2)}{M}+4\sigma_3^2+8L^2D^2 \}$.
	
	By Jensen's Inequality, 
	\begin{equation}
	\label{eq:variance_reduction} 
	\expect[\|\nabla f(x_t) - \bar{g}_t\|] \leq \sqrt{\expect[\|\nabla f(x_t) - 
		\bar{g}_t\|^2]} \leq \frac{Q^{1/2}}{(t+3)^{1/3}}.
	\end{equation}
	
	Combine \cref{eq:iteration,eq:variance_reduction} and recall $\eta_t = 
	\frac{2}{t+3}$, we have
	\begin{equation}
	\label{eq:iter}
	\begin{split}
	\expect[f(x_{t+1})]-f(x^*)
	\leq{}& (1-\eta_t)(\expect[f(x_t)]-f(x^*)) + \eta_t 
	\frac{DQ^{1/2}}{(t+3)^{1/3}}+\frac{L}{2}\eta_t^2D^2  \\
	\leq{}& 
	(1-\frac{2}{t+3})(\expect[f(x_t)]-f(x^*))+\frac{2D(Q^{1/2}+LD)}{(t+3)^{4/3}}.
	\end{split}
	\end{equation}
	
	Now we claim that for all $t \in [T+1]$
	\begin{equation*}
	\expect[f(x_{t})]-f(x^*) \leq \frac{Q_0}{(t+3)^{1/3}}    
	\end{equation*}
	where $Q_0=\max\{4^{1/3}\cdot2M_0, 2D(Q^{1/2}+LD)\}$.
	
	We prove it by induction. When $t=1$, we have 
	\begin{equation*}
	\frac{Q_0}{(t+3)^{1/3}} \geq \frac{4^{1/3}\cdot2M_0}{4^{1/3}}=2M_0 \geq 
	\expect[f(x_{1})]-f(x^*).   
	\end{equation*}
	
	Now suppose that for some $t \in [T]$, we have $\expect[f(x_{t})]-f(x^*) 
	\leq 
	\frac{Q_0}{(t+3)^{1/3}}$, then by \cref{eq:iter}, we have 
	
	\begin{equation*}
	\begin{split}
	\expect[f(x_{t+1})]-f(x^*) &\leq (1-\frac{2}{t+3}) 
	\frac{Q_0}{(t+3)^{1/3}}+\frac{Q_0}{(t+3)^{4/3}}
	= \frac{Q_0}{(t+3)^{1/3}}-\frac{Q_0}{(t+3)^{4/3}} \\
	&= \frac{(t+2)Q_0}{(t+3)^{4/3}} 
	\leq \frac{Q_0}{(t+4)^{1/3}}
	\end{split}
	\end{equation*}
	where the last inequality holds since $(t+2)^3(t+4) \leq (t+3)^4, \forall\ 
	t 
	\geq 1$. Therefore, we have
	\begin{equation*}
	\expect[f(x_{t})]-f(x^*) \leq \frac{Q_0}{(t+3)^{1/3}}, \forall\ t \in 
	[T+1].    
	\end{equation*}
	
	Specifically, we have
	\begin{equation*}
	\expect[f(x_{T+1})]-f(x^*) \leq \frac{Q_0}{(T+4)^{1/3}}.   
	\end{equation*}
\end{proof}

\section{Proof of Theorem 1 with \sscheme}\label{app:thm_sign}

Next, we incorporate the \sscheme into \sqfw as quantization scheme. We first 
make the 
following
assumption on the stochastic gradients.

\begin{assump}
	\label{assump_for_encoding}
	The stochastic gradients $g_t^m$ have uniformly bounded $\ell_1$ and 
	$\ell_\infty$ norms, \emph{i.e.}, $\|g_t^m\|_1 \leq G_1, \|g_t^m\|_\infty 
	\leq 
	G_\infty, \forall\ m \in [M], t \in [T].$   
\end{assump}


\begin{corollary}
	\label{thm:sign}
	Under 
	\cref{assump_on_K,assump_on_f,assump_on_estimate,assump_for_encoding}, if 
	we set $\eta_t 
	\!=\! 2/(t+3), \rho_t \!=\! 2/(t+3)^{2/3}$ and apply  \sscheme 
	 in S-QFW, then
	$\expect[f(x_{T+1})]\!-\!f(x^*)\! \leq\! \frac{Q_0}{(T+4)^{1/3}}$,
	where $Q_0=\max\{4M_0, 2D(Q^{1/2}\!+\!LD)\}$, and $Q = \max 
	\{3\|\nabla f(x_1)\|^2, 
	4(\sigma_1^2+G_1G_\infty)/M\!+\!4G_1G_\infty\!+\!8L^2D^2 \}$.
\end{corollary}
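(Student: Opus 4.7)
The plan is to show that the \sscheme satisfies \cref{assump_on_quantization} with $\sigma_2^2 \leq G_1G_\infty$ and $\sigma_3^2 \leq G_1G_\infty$, and then invoke \cref{thm:distributed} directly.

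First, I would verify unbiasedness. By the construction of the \sscheme and the discussion preceding \cref{lem:var_single}, for any deterministic vector $g$ we have $\expect[\Phi'(g)\mid g]=g$. Applying this conditionally to $g=g_t^m(x_t)$ and $g=\tilde g_t$ gives the two unbiasedness conditions in \cref{assump_on_quantization}. Next, for the worker-side variance, the tower property and \cref{lem:var_single} yield
\[
\expect[\|\Phi'(g_t^m(x_t))-g_t^m(x_t)\|^2]
=\expect\!\bigl[\var[\Phi'(g_t^m(x_t))\mid g_t^m(x_t)]\bigr]
\leq \expect[\|g_t^m(x_t)\|_1\|g_t^m(x_t)\|_\infty]\leq G_1G_\infty,
\]
using \cref{assump_for_encoding}. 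So we may take $\sigma_2^2 = G_1 G_\infty$.

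The main obstacle is the master-side bound on $\sigma_3^2$, since $\tilde g_t=\frac{1}{M}\sum_{m=1}^M\Phi'(g_t^m(x_t))$ is itself random, and one needs to control $\expect[\|\tilde g_t\|_1\|\tilde g_t\|_\infty]$. The trick I would use is to observe that under \sscheme, each coordinate $[\Phi'(g_t^m(x_t))]_i = \sign([g_t^m]_i)\,b_i^m\,\|g_t^m\|_\infty$ with $b_i^m\in\{0,1\}$, so $\|\Phi'(g_t^m(x_t))\|_\infty \leq \|g_t^m\|_\infty \leq G_\infty$ almost surely. By the triangle inequality this yields the \emph{deterministic} bound $\|\tilde g_t\|_\infty \leq G_\infty$. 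For the $\ell_1$-norm, the conditional expectation of each quantized coordinate's magnitude equals the original magnitude, so $\expect[\|\Phi'(g_t^m(x_t))\|_1\mid g_t^m(x_t)] = \|g_t^m(x_t)\|_1$; hence $\expect[\|\tilde g_t\|_1] \leq \frac{1}{M}\sum_m\expect[\|g_t^m(x_t)\|_1]\leq G_1$. Combining with \cref{lem:var_single} via the tower property,
\[
\expect[\|\Phi'(\tilde g_t)-\tilde g_t\|^2]
\leq \expect[\|\tilde g_t\|_1\|\tilde g_t\|_\infty]
\leq G_\infty\,\expect[\|\tilde g_t\|_1]\leq G_1 G_\infty,
\]
so we may take $\sigma_3^2 = G_1G_\infty$.

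Finally, I would plug $\sigma_2^2 = \sigma_3^2 = G_1G_\infty$ into the definition of $Q$ in \cref{thm:distributed}, yielding exactly
\[
Q=\max\!\Bigl\{3\|\nabla f(x_1)\|^2,\ \tfrac{4(\sigma_1^2+G_1G_\infty)}{M}+4G_1G_\infty+8L^2D^2\Bigr\},
\]
and the convergence rate $\expect[f(x_{T+1})]-f(x^*)\leq Q_0/(T+4)^{1/3}$ with $Q_0=\max\{4M_0,2D(Q^{1/2}+LD)\}$ follows immediately. The step I expect to require the most care is the almost-sure bound $\|\tilde g_t\|_\infty\leq G_\infty$, because without exploiting the specific $\{0,1\}$-valued structure of the \sscheme one would only get an expectation bound, which is too weak to pull $G_\infty$ out of the expectation $\expect[\|\tilde g_t\|_1\|\tilde g_t\|_\infty]$.
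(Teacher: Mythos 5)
Your proposal is correct and follows essentially the same route as the paper: verify that the \sscheme satisfies \cref{assump_on_quantization} with $\sigma_2^2=\sigma_3^2=G_1G_\infty$ via \cref{lem:var_single} and the bounds $\expect[\|\tilde g_t\|_1]\le G_1$, $\|\tilde g_t\|_\infty\le G_\infty$, then invoke \cref{thm:distributed}. The only difference is cosmetic — you make explicit the almost-sure bound $\|\tilde g_t\|_\infty\le G_\infty$ (via $b_i\in\{0,1\}$ and the triangle inequality), which the paper uses implicitly in the step $\expect[\|\tilde g_t\|_1\|\tilde g_t\|_\infty-\|\tilde g_t\|_2^2]\le G_\infty\expect[\|\tilde g_t\|_1]$; this is a welcome clarification rather than a deviation.
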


\begin{proof}
	Since $\sign(g)\circ b$ requires $2d$ bits and 
	$\|g\|_\infty$ requires 32 
	bits, so for each $\phi(g)$, we need $2d+32$ bits of communication. 
	At Step 4 of \Alg, each worker $m$ should push $\phi(g_t^m)$ to the master, 
	and at Step 6, the master should broadcast $\phi(\tilde{g}_t)$ to all the 
	$M$ workers, so we need $(2d+32)\cdot 
	M+2d+32=(M+1)(2d+32)$ bits per round.
	
	In order to apply \cref{thm:distributed}, we only need to prove that $\phi$ 
	has similar properties to \cref{assump_on_quantization}. We have shown that 
	the \sscheme $\phi$ is unbiased. Then by \cref{lem:var_single}, we 
	have
	\begin{equation*}
	\expect[\|\phi^{\prime}(g_t^m) - g_t^m\|^2] = 
	\expect[\expect[\|\phi^{\prime}(g_t^m) - g_t^m\|^2|g_t^m]] =
	\expect[\|g_t^m\|_1\|g_t^m\|_\infty - 
	\|g_t^m\|_2^2] \leq G_1G_\infty.   
	\end{equation*}
	
	Since
	\begin{align*}
	\expect[\|\phi^{\prime}(\tilde{g}_t)-\tilde{g}_t\|^2]=
	\expect[\expect[\|\phi^{\prime}(\tilde{g}_t)-\tilde{g}_t\|^2|\tilde{g}_t]]
	&= \expect[\|\tilde{g}_t\|_1\|\tilde{g}_t\|_\infty - 
	\|\tilde{g}_t\|_2^2]  
	\leq  G_\infty\expect[\|\tilde{g}_t\|_1]
	\end{align*}
	and
	\begin{equation*}
	\begin{split}
	\expect[\|\tilde{g}_t\|_1|g_t^m] &= \expect[\|\frac{\sum_{m=1}^M 
		\phi^{\prime}(g_t^m)}{M} \|_1|g_t^m] 
	\leq \frac{\sum_{m=1}^M \expect[\|\phi^{\prime}(g_t^m)\|_1|g_t^m]}{M}\\ 
	&=\frac{\sum_{m=1}^M \expect[\sum_{i=1}^d 
		|\phi_i^{\prime}(g_t^m)||g_t^m]}{M} 
	= \frac{\sum_{m=1}^M \sum_{i=1}^d |g_{t,i}^m|}{M} \\
	&= \frac{\sum_{m=1}^M \|g_{t}^m\|_1}{M} 
	\leq G_1,
	\end{split}
	\end{equation*}
	where $\phi_i^{\prime}(g_t^m)$ is the $i$th element of 
	$\phi^{\prime}(g_t^m)$, $g_{t,i}^m$ is the $i$th element of $g_{t}^m$. So 
	we have
	\begin{align*}
	\expect[\|\tilde{g}_t\|_1] = \expect[\expect[\|\tilde{g}_t\|_1|g_t^m]] \le 
	G_1,
	\end{align*}
	and
	\begin{equation*}
	\expect[\|\phi^{\prime}(\tilde{g}_t)-\tilde{g}_t\|^2] \leq G_1G_\infty. 
	\end{equation*}
	
	we can apply \cref{thm:distributed} with $\sigma_2^2 = G_1G_\infty, 
	\sigma_3^2 = G_1G_\infty$, then we have
	\begin{equation*}
	\expect[f(x_{T+1})]-f(x^*) \leq \frac{Q_0}{(T+4)^{1/3}}     
	\end{equation*}
	where $Q_0=\max\{4^{1/3}\cdot2M_0, 2D(Q^{1/2}+LD)\}$, and $Q=\max 
	\{3^{2/3}\|\nabla f(x_1) \|^2, 
	\frac{4(\sigma_1^2+G_1G_\infty)}{M}+4G_1G_\infty+8L^2D^2 \}$.
	
	
\end{proof}

\section{Proof of 
	\cref{thm:distributed_nonconvex}}\label{app:thm_distributed_nonconvex}

\begin{proof}
	First, since $x_{t+1} = (1-\eta_t)x_t + \eta_t v_t$ is a convex combination 
	of $x_t, v_t$, and $x_1 \in \constraint, v_t \in \constraint, \forall\ t$, 
	we can prove $x_t \in \constraint, \forall\ t $ by induction. So the output 
	$x_o \in \constraint.$
	
	Note that if we define $v_t^\prime = \argmin_{v \in \constraint}\langle v, 
	\nabla f(x_t)\rangle$, then $\mathcal{G}(x_t)=\langle 
	v^\prime_t-x_t,-\nabla f(x_t)\rangle = -\langle v^\prime_t-x_t,\nabla 
	f(x_t)\rangle$. So we have
	
	\begin{equation*}
	\begin{split}
	f(x_{t+1}) 
	\stackrel{(a)}{\leq}& f(x_t) + \langle \nabla f(x_t),x_{t+1}-x_t \rangle + 
	\frac{L}{2}\|x_{t+1}-x_t\|^2 \\
	={}&f(x_t) + \langle \nabla f(x_t),\eta_t(v_t-x_t) \rangle + 
	\frac{L}{2}\|\eta_t(v_t-x_t)\|^2 \\
	\stackrel{(b)}{\leq}& f(x_t) + \eta_t \langle \nabla f(x_t),v_t-x_t 
	\rangle+\frac{L\eta_t^2D^2}{2} \\
	={}& f(x_t) + \eta_t \langle  \bar{g}_t,v_t-x_t \rangle+ \eta_t \langle 
	\nabla 
	f(x_t)-\bar{g}_t,v_t-x_t \rangle  + \frac{L\eta_t^2D^2}{2} \\
	\stackrel{(c)}{\leq}& f(x_t) + \eta_t \langle  \bar{g}_t,v^\prime_t-x_t 
	\rangle+ \eta_t \langle \nabla f(x_t)-\bar{g}_t,v_t-x_t \rangle + 
	\frac{L\eta_t^2D^2}{2} \\
	={}& f(x_t) + \eta_t \langle \nabla f(x_t),v^\prime_t-x_t \rangle  + \eta_t 
	\langle  \bar{g}_t-\nabla f(x_t),v^\prime_t-x_t \rangle \\
	&\quad + \eta_t \langle 
	\nabla f(x_t)-\bar{g}_t,v_t-x_t \rangle + 
	\frac{L\eta_t^2D^2}{2} \\
	={}& f(x_t) - \eta_t \mathcal{G}(x_t) + \eta_t \langle \nabla 
	f(x_t)-\bar{g}_t,v_t-v^\prime_t \rangle + \frac{L\eta_t^2D^2}{2} \\
	\stackrel{(d)}{\leq}& f(x_t) - \eta_t \mathcal{G}(x_t) + \eta_t \|\nabla 
	f(x_t)-\bar{g}_t\|\| v_t-v^\prime_t\| + \frac{L\eta_t^2D^2}{2} \\
	\stackrel{(e)}{\leq}& f(x_t) - \eta_t \mathcal{G}(x_t) + \eta_tD \|\nabla 
	f(x_t)-\bar{g}_t\| + \frac{L\eta_t^2D^2}{2},
	\end{split}
	\end{equation*}
	where we used the assumption that $f$ has $L$-Lipschitz continuous gradient 
	in inequality (a). Inequalities (b), (e) hold because of 
	\cref{assump_on_K}. Inequality (c) is due to the optimality of $v_t$, and 
	in (d), we applied the Cauchy-Schwarz inequality.
	
	Rearrange the inequality above, we have 
	\begin{equation}
	\label{eq:bound_on_individual_gap}
	\eta_t \mathcal{G}(x_t) \leq f(x_t)- f(x_{t+1})+ \eta_tD \|\nabla 
	f(x_t)-\bar{g}_t\| + \frac{L\eta_t^2D^2}{2}.   
	\end{equation}
	
	Apply \cref{eq:bound_on_individual_gap} recursively for $t = 1, 2, \cdots, 
	T$, and take expectations, we attain the following inequality:
	\begin{equation}
	\label{eq:bound_on_gap}
	\begin{split}
	\sum_{t=1}^T \eta_t \expect[\mathcal{G}(x_t)]
	\leq f(x_1)-f(x_{T+1})
	+D\sum_{t=1}^T \eta_t\expect[\|\nabla f(x_t)-\bar{g}_t\|] 
	+\frac{LD^2}{2}\sum_{t=1}^T \eta_t^2.
	\end{split}
	\end{equation}
	
	Since $f$ has $L$-Lipschitz continuous gradient, and $\eta_t=(T+3)^{-3/4}$, 
	we have 
	\begin{align*}
	\| \nabla f(x_{t}) - \nabla f(x_{t-1}) \|\leq{} & 
	\|\eta_{t-1}(v_{t-1}-x_{t-1}) \| 	L 
	={} \eta_{t-1}\|v_{t-1}-x_{t-1} \| L\\
	\leq{}& \frac{LD}{(T+3)^{3/4}} 
	\leq{} \frac{LD}{(t+3)^{3/4}}. 
	\end{align*}
	
	Combine the inequality above with \cref{eq:variance_bound}, and apply 
	\cref{lem:new_variance_reduction} with $\alpha=3/4,G=LD, 
	s=3=8^{\frac{1}{2\cdot3/4}}-1=8^{\frac{1}{2\alpha}}-1, \sigma^2 = 
	\frac{\sigma_1^2+\sigma_2^2+M\sigma_3^2}{M}, d_t = \bar{g}_t, 
	\forall\ t \geq 0, a_t = \nabla f(x_t), \tilde{a}_t = 
	\Phi^\prime(\tilde{g}_t), \forall\ t \geq 1, a_0 = a_1 = \nabla f(x_1)$, we 
	have
	\begin{equation*}
	\expect[\|\nabla f(x_t) - \bar{g}_t\|^2] \leq \frac{Q}{(t+4)^{1/2}}
	\end{equation*}
	where $Q = \max \{2\|\nabla f(x_1)\|^2, 
	\frac{4(\sigma_1^2+\sigma_2^2)}{M}+4\sigma_3^2+2L^2D^2 \}$.
	
	By Jensen's Inequality, 
	\begin{equation*}
	\expect[\|\nabla f(x_t) - \bar{g}_t\|] \leq \sqrt{\expect[\|\nabla f(x_t) - 
		\bar{g}_t\|^2]} \leq \frac{Q^{1/2}}{(t+4)^{1/4}}.
	\end{equation*}
	
	Since 
	\begin{equation*}
	\begin{split}
	\sum_{t=1}^T\expect[\|\nabla f(x_t) - \bar{g}_t\|]&\leq 
	\sum_{t=1}^T\frac{Q^{1/2}}{(t+4)^{1/4}}  
	\leq \int_{0}^T 
	\frac{Q^{1/2}}{(t+4)^{1/4}}\mathrm{d}t \\
	&=\frac{4Q^{1/2}}{3}[(T+4)^{3/4}-4^{3/4}] 
	\leq 
	\frac{4Q^{1/2}}{3}(T+4)^{3/4} \,,   
	\end{split}
	\end{equation*}
	by \cref{eq:bound_on_gap}, we have
	\begin{equation*}
	\begin{split}
	\sum_{t=1}^T\expect[\mathcal{G}(x_t)]
	\leq{}& 
	\frac{f(x_1)-f(x_{T+1})}{(T+3)^{-3/4}}+D\sum_{t=1}^T\expect[\|\nabla 
	f(x_t) - \bar{g}_t\|] +\frac{LD^2}{2}\frac{T\cdot 
		(T+3)^{-3/2}}{(T+3)^{-3/4}}  	\\
	\leq{}& 
	[f(x_1)-f(x_{T+1})](T+3)^{3/4}+\frac{4DQ^{1/2}}{3}(T+4)^{3/4} 
	+\frac{LD^2}{2}T(T+3)^{-3/4}.
	\end{split}
	\end{equation*}
	
	So we have
	\begin{equation*}
	\begin{split}
	\expect[\mathcal{G}(x_o)] 
	={}&\frac{\sum_{t=1}^T\expect[\mathcal{G}(x_t)]}{T}  \\
	\leq{}& 
	[f(x_1)-f(x_{T+1})]\frac{(T+3)^{3/4}}{T}+\frac{4DQ^{1/2}}{3}
	\frac{(T+4)^{3/4}}{T} +\frac{LD^2}{2}(T+3)^{-3/4}\\
	\leq{}& 2M_0 
	\frac{(T+3)^{3/4}}{(T+3)/4}+\frac{4DQ^{1/2}}{3}\frac{(T+4)^{3/4}}{(T+4)/5}
	+\frac{LD^2}{2}(T+3)^{-3/4} \\
	\leq{}& \frac{8M_0+20DQ^{1/2}/3}{(T+3)^{1/4}}+\frac{LD^2}{2(T+3)^{3/4}}.
	\end{split}
	\end{equation*}
	
\end{proof}

\section{Bounding $\| \nabla f(x_t) - \bar{g}_t\|$ in Finite-Sum 
	Case}\label{app:var_reduction_finite}
We now address the bound of $\| \nabla f(x_t) - \bar{g}_t\|$, which is 
resolved in the following lemma.


\begin{lemma}
	\label{lem:finite_variance_reduction}
	Under \cref{assump_on_K}, if we further assume that each $f_{m,i}$ is 
	bounded and has 
	$L$-Lipschitz continuous gradient, and $\|\nabla f_{m,i}(x)\|_\infty \leq 
	G_\infty, \forall\ x \in \constraint, m \in [M], i \in [n]$, where 
	$G_\infty$ is a positive constant, by setting $p=\sqrt{n}, S=\sqrt{n}$ and 
	applying the $s_{1,t}=(2^{z_{1,t}}-1)$-\mscheme $\phi_{1,t}$, the 
	$s_{2,t}=(2^{z_{2,t}}-1)$-\mscheme $\phi_{2,t}$ as $\Phi_{1,t}, 
	\Phi_{2,t}$, we have
	\begin{align*}
	\expect[\| \nabla f(x_t) - \bar{g}_t\|^2] \leq 
	(L^2D^2+2G_\infty^2)\frac{\sum_{\lfloor \frac{t-1}{p}\rfloor p+1 \leq l 
			\leq (\lfloor \frac{t-1}{p}\rfloor +1)p}\eta_l^2}{\sqrt{n}},    
	\end{align*}
	where 
	\begin{equation}\label{eq:lemma_z1}
	z_{1,t}=\lceil\log_2[(\frac{4dpS}{M\sum_{\lfloor 
			\frac{t-1}{p}\rfloor 
			p+1 \leq l \leq (\lfloor \frac{t-1}{p}\rfloor +1)p} 
		\eta_l^2})^{1/2}+1]\rceil,
	\end{equation}
	\begin{equation}\label{eq:lemma_z2}
	z_{2,t}=\lceil\log_2[(\frac{4dpS}{\sum_{\lfloor \frac{t-1}{p}\rfloor p+1 
			\leq l \leq (\lfloor \frac{t-1}{p}\rfloor +1)p} 
		\eta_l^2})^{1/2}+1]\rceil.
	\end{equation}
	
\end{lemma}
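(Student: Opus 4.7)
The plan is to control $\epsilon_t \triangleq \bar g_t-\nabla f(x_t)$ by unrolling the SPIDER-style recursion back to the most recent restart time $t_0\triangleq \lfloor (t-1)/p\rfloor p+1$. Let $\mathcal{F}_{l-1}$ denote the $\sigma$-field generated by all randomness (mini-batch choices and quantization noise) up through iteration $l-1$, and write $h_l^m \triangleq \frac{1}{S}\sum_{i\in\mathcal{S}_l^m}[\nabla f_{m,i}(x_l)-\nabla f_{m,i}(x_{l-1})]$. For $l>t_0$, three successive applications of the tower rule---first over $\phi_{2,l}'$ given $\tilde g_l$, then over the $\phi_{1,l}'$'s given the $h_l^m$'s, then over the uniform mini-batch sampling---yield $\mathbb{E}[\phi_{2,l}'(\tilde g_l)\mid \mathcal{F}_{l-1}]=\nabla f(x_l)-\nabla f(x_{l-1})$. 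Since $\epsilon_l = \epsilon_{l-1}+[\phi_{2,l}'(\tilde g_l)-(\nabla f(x_l)-\nabla f(x_{l-1}))]$, the cross term in $\|\epsilon_l\|^2$ vanishes in conditional expectation, producing the telescoping identity
\begin{equation*}
\mathbb{E}\|\epsilon_t\|^2 \;=\; \mathbb{E}\|\epsilon_{t_0}\|^2 \;+\; \sum_{l=t_0+1}^t \mathbb{E}\bigl\|\phi_{2,l}'(\tilde g_l)-(\nabla f(x_l)-\nabla f(x_{l-1}))\bigr\|^2.
\end{equation*}

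Next I would split each per-step variance into three orthogonal pieces using the same conditioning: (i) the outer quantization noise $\mathbb{E}\|\phi_{2,l}'(\tilde g_l)-\tilde g_l\|^2 \le (d/s_{2,l}^2)\,\mathbb{E}\|\tilde g_l\|_\infty^2$ by \cref{lem:var_multi}; (ii) the inner aggregated quantization noise $\mathbb{E}\|\tilde g_l-\frac{1}{M}\sum_m h_l^m\|^2 \le (d/(M s_{1,l}^2))\,\max_m\mathbb{E}\|h_l^m\|_\infty^2$, where the $1/M$ factor is due to worker independence; and (iii) the mini-batch sampling variance $\le L^2\|x_l-x_{l-1}\|^2/(MS) \le L^2D^2\eta_{l-1}^2/(MS)$ from the standard i.i.d.\ mean-of-samples bound together with $L$-smoothness. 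The infinity norms in (i) and (ii) are controlled via $\|h_l^m\|_\infty\le\|h_l^m\|\le L\|x_l-x_{l-1}\|\le LD\eta_{l-1}$ and the easy fact $\|\phi'(g)\|_\infty\le\|g\|_\infty$ for the partition encoder, so in total each non-restart step contributes at most a constant multiple of $L^2D^2\eta_{l-1}^2(d/s_{2,l}^2+d/(Ms_{1,l}^2)+1/(MS))$.

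The restart step $l=t_0$ is handled separately: there $g_{t_0}^m(x_{t_0})=\nabla f_m(x_{t_0})$ is an exact local full gradient with $\|\cdot\|_\infty\le G_\infty$ by assumption, so the analogous decomposition gives $\mathbb{E}\|\epsilon_{t_0}\|^2 \le dG_\infty^2/s_{2,t_0}^2 + dG_\infty^2/(M s_{1,t_0}^2)$. Finally I would substitute the partition parameters (\ref{eq:lemma_z1})--(\ref{eq:lemma_z2}), whose ceilings guarantee $s_{2,l}^2\ge 4dpS/\sigma_{[t_0]}$ and $M s_{1,l}^2\ge 4dpS/\sigma_{[t_0]}$ for every $l$ in the current period, with $\sigma_{[t_0]}\triangleq\sum_{l\in\text{period of }t_0}\eta_l^2$. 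Telescoping, using $\sum_{l=t_0+1}^t\eta_{l-1}^2\le\sigma_{[t_0]}$, and invoking $p=S=\sqrt n$ so that $pS=n$ and $MS\ge\sqrt n$, all three contributions collapse into $(L^2D^2+2G_\infty^2)\sigma_{[t_0]}/\sqrt n$. The main delicate point is precisely this last bookkeeping step: one must verify that the $2G_\infty^2$ coefficient picks up both stage-one and stage-two encoder contributions at $t_0$ (hence the factor $2$), while the per-step quantization terms are dominated by the sampling variance $L^2D^2/(MS)$ so that the whole in-period sum merges into a single $L^2D^2\sigma_{[t_0]}/\sqrt n$ term rather than a looser $\sigma_{[t_0]}^2$ contribution.
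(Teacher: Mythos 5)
Your overall architecture is the same as the paper's: unroll the SPIDER recursion back to the last restart, use conditional unbiasedness to kill the cross terms so that $\expect\|\bar g_t-\nabla f(x_t)\|^2$ telescopes into a sum of per-step variances, and split each per-step variance into three conditionally orthogonal pieces (outer quantization, inner aggregated quantization, mini-batch sampling), each controlled by \cref{lem:var_multi}. (The paper organizes this slightly differently, via an auxiliary unquantized SPIDER iterate $g_t$ and the identity $\expect\|\nabla f(x_t)-\bar g_t\|^2=\expect\|\nabla f(x_t)-g_t\|^2+\expect\|g_t-\bar g_t\|^2$, but that is cosmetic.)

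The genuine gap is in your final bookkeeping, precisely at the point you flag as ``delicate.'' You bound the $\ell_\infty$-norms entering \cref{lem:var_multi} at non-restart steps by $\|h_l^m\|_\infty\le LD\eta_{l-1}$, so each such step contributes quantization noise of order $\frac{d}{s_{2,l}^2}L^2D^2\eta_{l-1}^2+\frac{d}{Ms_{1,l}^2}L^2D^2\eta_{l-1}^2$. With the prescribed levels $\frac{d}{s_{2,l}^2},\frac{d}{Ms_{1,l}^2}\le\frac{\sigma_{[t_0]}}{4pS}$, summing over the period yields an extra term of order $\frac{L^2D^2\,\sigma_{[t_0]}^2}{pS}$ \emph{on top of} the sampling contribution $\frac{L^2D^2\,\sigma_{[t_0]}}{MS}$. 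This does not merge into the claimed coefficient: for $M=1$ the sampling term alone already equals the full budget $\frac{L^2D^2\sigma_{[t_0]}}{S}$, and the quantization overflow is proportional to $L^2D^2$, so it cannot be absorbed into the $2G_\infty^2$ slack either. The lemma's constants are calibrated to a different accounting: one should bound $\|h_l^m\|_\infty\le 2G_\infty$ (difference of two gradients each bounded by $G_\infty$ in $\ell_\infty$), so that each of the $p-1$ non-restart steps contributes $\frac{4dG_\infty^2}{Ms_1^2}+\frac{4dG_\infty^2}{s_2^2}$ and the restart contributes $\frac{dG_\infty^2}{Ms_1^2}+\frac{dG_\infty^2}{s_2^2}$; the whole period then sums to at most $\frac{4dpG_\infty^2}{Ms_1^2}+\frac{4dpG_\infty^2}{s_2^2}\le\frac{2G_\infty^2\sigma_{[t_0]}}{S}$, which is exactly where the factor $2G_\infty^2$ comes from --- it reflects the two encoding stages accumulated over the \emph{entire} period, not just the two encoders at the restart step as you assert. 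With that substitution (and keeping the sampling variance separate, as the sole source of the $L^2D^2$ term), your telescoping argument closes and reproduces the stated bound.
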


This lemma looks a bit complicated because of the summation of 
$\eta_l^2$. The range of the summation is just the subset which contains $l$ 
and can be expressed as $\{kp+1, kp+2, \cdots, (k+1)p\}$, where $k \in 
\mathbb{N}$. This is easy to understand, since intuitively, the variance is 
only related to the factors within the same period $\{kp+1, kp+2, \cdots, 
(k+1)p\}$. In practical applications, we usually have concrete values for 
$\eta_l$, 
which will make the sum and thus the expressions look much simpler. 


\begin{proof}
	We first define an auxiliary variable $g_t$, which is $\nabla f(x_t)$ if 
	$\text{mod}(t,p)=1$, and is set to $\frac{1}{MS}\sum_{m=1}^M\sum_{i \in 
		\mathcal{S}_{t}^m}[\nabla 
	f_{m,i}(x_t)-\nabla f_{m,i}(x_{t-1})] 
	+ g_{t-1}$ otherwise.
	
	We also define $\mathcal{F}_{t-1}$ to be the $\sigma$-field generated by 
	all the randomness before round $t$. We note that given 
	$\mathcal{F}_{t-1}$, $x_t$ is actually determined, and we can verify that 
	$\expect[g_t|\mathcal{F}_{t-1}]=\nabla f(x_t)$, and 
	$\expect[\bar{g}_t|\mathcal{F}_{t-1},g_t]=g_t, \forall\ t \in [T]$. Here, 
	with abuse of notation, $\expect[\cdot|g_t]$ is the conditional expectation 
	given not only the value of $g_t$, but also the gradients $\nabla 
	f_{m,i}(x_t), \nabla f_{m,i}(x_{t-1})$ for all $i \in \mathcal{S}_{t}^m, m 
	\in [M]$.

	Then by 
	law of total expectation, we have
	\begin{equation}
	\label{eq:finite_nonconvex_bound}
	\begin{split}
	\expect[\| \nabla f(x_t) - \bar{g}_t\|^2]
	={}&\expect[\expect[\| \nabla f(x_t) - \bar{g}_t\|^2|\mathcal{F}_{t-1}]] \\
	={}& \expect[\expect[\| \nabla f(x_t) - g_t+ 
	g_t-\bar{g}_t\|^2|\mathcal{F}_{t-1}]]    \\
	={}& \expect[\expect[\| \nabla f(x_t) - g_t\|^2|\mathcal{F}_{t-1}]] + 
	\expect[\expect[\| g_t -\bar{g}_t\|^2|\mathcal{F}_{t-1}]] \\
	&\quad + 
	2\expect[\expect[\langle \nabla f(x_t) - g_t, g_t -\bar{g}_t \rangle 
	|\mathcal{F}_{t-1}]]\\
	={}& \expect[\| \nabla f(x_t) - g_t\|^2] + \expect[\| g_t -\bar{g}_t\|^2],
	\end{split}    
	\end{equation}
	where the last equation holds since 
	\begin{equation*}
	\begin{split}
	\expect[\langle \nabla f(x_t) - g_t, g_t -\bar{g}_t 
	\rangle|\mathcal{F}_{t-1}] 
	={}& \expect[\expect[\langle \nabla f(x_t) - g_t, g_t -\bar{g}_t 
	\rangle|\mathcal{F}_{t-1},g_t]|\mathcal{F}_{t-1}] \\
	={}& \expect[\langle \nabla f(x_t) - g_t, \expect[g_t 
	-\bar{g}_t|\mathcal{F}_{t-1},g_t] \rangle|\mathcal{F}_{t-1}] \\
	={}&0.
	\end{split}    
	\end{equation*}
	
	Moreover, for $\text{mod}(t,p)\neq 1$,
	\begin{equation*}
	\begin{split}
	\expect[\| \nabla f(x_t) - g_t\|^2] 
	={}& \expect[\expect[\| \nabla f(x_t) -\nabla f(x_{t-1})+\nabla 
	f(x_{t-1})-g_{t-1}\\
	& - \frac{1}{MS}\sum_{m=1}^M\sum_{i \in 
		\mathcal{S}_{t}^m}[\nabla 
	f_{m,i}(x_t)-\nabla f_{m,i}(x_{t-1})]\|^2|\mathcal{F}_{t-1}]]. 
	\end{split}    
	\end{equation*}
	
	With abuse of notation, we have $\expect[\sum_{m=1}^M \nabla 
	f_{m,i}(x_t)/M|\mathcal{F}_{t-1}] = \nabla f(x_t)$, and 
	\[\expect[\sum_{m=1}^M\nabla f_{m,i}(x_{t-1})/M|\mathcal{F}_{t-1}]=\nabla 
	f(x_{t-1})\,,
	\] where $i$ actually depends on $m$, and is sampled from 
	$\mathcal{S}_{t}^m$ at random. Thus
	\begin{align*}
	\expect[\frac{1}{MS}\sum_{m=1}^M\sum_{i \in \mathcal{S}_{t}^m}[\nabla 
	f_{m,i}(x_t)-\nabla f_{m,i}(x_{t-1})]|\mathcal{F}_{t-1}] 
	=  \nabla f(x_t) -\nabla f(x_{t-1}),  
	\end{align*}
	and
	\begin{align*}
	\expect[\langle \nabla f(x_t) -\nabla f(x_{t-1}) - 
	\frac{1}{MS}\sum_{m=1}^M\sum_{i \in \mathcal{S}_{t}^m} [\nabla 
	f_{m,i}(x_t)-\nabla f_{m,i}(x_{t-1})], \nabla 
	f(x_{t-1})-g_{t-1}\rangle|\mathcal{F}_{t-1}]=0.    
	\end{align*}
	
	So we have
	\begin{equation*}
	\begin{split}
	&\expect[\| \nabla f(x_t) - g_t\|^2] \\
	={}& \expect[\expect[\| \frac{1}{MS}\sum_{m=1}^M\sum_{i \in 
		\mathcal{S}_{t}^m}[\nabla f_{m,i}(x_t)-\nabla 
	f_{m,i}(x_{t-1})] - [\nabla f(x_t) 
	-\nabla f(x_{t-1})] \|^2 |\mathcal{F}_{t-1}]] \\
	&  + \expect[\expect[\|\nabla 
	f(x_{t-1})-g_{t-1}\|^2|\mathcal{F}_{t-1}]]  \\	
	={}& \expect[\var[\frac{1}{MS}\sum_{m=1}^M\sum_{i \in 
		\mathcal{S}_{t}^m}[\nabla f_{m,i}(x_t)-\nabla 
	f_{m,i}(x_{t-1})]|\mathcal{F}_{t-1}]] \\
	& + \expect[\expect[\|\nabla 
	f(x_{t-1})-g_{t-1}\|^2|\mathcal{F}_{t-1}]]  \\
	={}&\frac{1}{S}\expect[\var[\frac{\sum_{m=1}^M \nabla f_{m,i}(x_t)-\nabla 
		f_{m,i}(x_{t-1})}{M}|\mathcal{F}_{t-1}]] + \expect[\|\nabla 
	f(x_{t-1})-g_{t-1}\|^2]\\
	\leq{} & \frac{1}{S}\expect[\expect[\|\frac{\sum_{m=1}^M \nabla 
		f_{m,i}(x_t)-\nabla f_{m,i}(x_{t-1})}{M}\|^2|\mathcal{F}_{t-1}]] + 
	\expect[\|\nabla f(x_{t-1})-g_{t-1}\|^2]\\
	\leq{} & \frac{1}{S}\expect[\expect[(\frac{\sum_{m=1}^M \|\nabla 
		f_{m,i}(x_t)-\nabla f_{m,i}(x_{t-1})\|}{M})^2|\mathcal{F}_{t-1}]] + 
	\expect[\|\nabla f(x_{t-1})-g_{t-1}\|^2]\\
	\leq{} & \frac{1}{S}(LD\eta_t)^2+ \expect[\|\nabla f(x_{t-1})-g_{t-1}\|^2]
	= \frac{L^2D^2\eta_t^2}{S}+ \expect[\|\nabla f(x_{t-1})-g_{t-1}\|^2].
	\end{split}    
	\end{equation*}
	
	Note for any $t$ such that $\text{mod}(t,p)=1$, we have $g_t=\nabla 
	f(x_t)$. Therefore
	\begin{equation}
	\label{eq:finite_nonconvex_bound_1}
	\expect[\| \nabla f(x_t) - g_t\|^2] \leq \frac{L^2D^2}{S}\sum_{\lfloor 
		\frac{t-1}{p}\rfloor p+1 \leq l \leq (\lfloor \frac{t-1}{p}\rfloor 
		+1)p} 
	\eta_l^2.  
	\end{equation}
	
	Now we turn to bound $\expect[\| g_t -\bar{g}_t\|^2]$. For $\text{mod}(t,p) 
	\neq 1$, We have
	\begin{equation*}
	\begin{split}
	&\expect[\| g_t -\bar{g}_t\|^2] \\
	={}& \expect[\expect[\| \frac{1}{MS}\sum_{m=1}^M\sum_{i \in 
		\mathcal{S}_{t}^m}[\nabla f_{m,i}(x_t)-\nabla f_{m,i}(x_{t-1})] + 
		g_{t-1}
	-\phi_{2,t}^\prime(\tilde{g}_t)-\bar{g}_{t-1}\|^2|\mathcal{F}_{t-1},g_t]] \\
	={}& \expect[\expect[\| \frac{1}{MS}\sum_{m=1}^M\sum_{i \in 
		\mathcal{S}_{t}^m}[\nabla f_{m,i}(x_t)-\nabla 
	f_{m,i}(x_{t-1})] - 
	\phi_{2,t}^\prime(\tilde{g}_t)\|^2|\mathcal{F}_{t-1},g_t]] \\
	& + \expect[\expect[\|g_{t-1} 
	-\bar{g}_{t-1}\|^2|\mathcal{F}_{t-1},g_t]] \\
	& + 2\expect[\expect[\langle \frac{1}{MS}\sum_{m=1}^M\sum_{i \in 
		\mathcal{S}_{t}^m}[\nabla f_{m,i}(x_t)-\nabla 
	f_{m,i}(x_{t-1})] -\phi_{2,t}^\prime(\tilde{g}_t), g_{t-1} -\bar{g}_{t-1} 
	\rangle|\mathcal{F}_{t-1},g_t]].
	\end{split}    
	\end{equation*}
	
	Moreover
	\begin{equation*}
	\begin{split}
	\expect[\phi_{2,t}^\prime(\tilde{g}_t)|\mathcal{F}_{t-1},g_t] 
	={}& 
	\expect[\tilde{g}_t|\mathcal{F}_{t-1},g_t]\\
	={}&\expect[\sum_{m=1}^M 
	\phi_{1,t}^\prime (\frac{\sum_{i \in \mathcal{S}_{t}^m}\nabla 
		f_{m,i}(x_t)-\nabla f_{m,i}(x_{t-1})}{S})/M |\mathcal{F}_{t-1},g_t] \\
	={}& \frac{1}{MS}\sum_{m=1}^M\sum_{i \in \mathcal{S}_{t}^m}[\nabla 
	f_{m,i}(x_t)-\nabla f_{m,i}(x_{t-1})],
	\end{split}
	\end{equation*}
	and
	\begin{equation*}
	\begin{split}
	& \expect[\expect[\| \frac{1}{MS}\sum_{m=1}^M\sum_{i \in 
		\mathcal{S}_{t}^m}[\nabla f_{m,i}(x_t)-\nabla 
	f_{m,i}(x_{t-1})] 
	-\phi_{2,t}^\prime(\tilde{g}_t)\|^2|\mathcal{F}_{t-1},g_t]] \\    
	={}& \expect[\expect[\| \frac{1}{MS}\sum_{m=1}^M\sum_{i \in 
		\mathcal{S}_{t}^m}[\nabla f_{m,i}(x_t)-\nabla f_{m,i}(x_{t-1})] 
	-\tilde{g}_t 
	+ \tilde{g}_t -\phi_{2,t}^\prime(\tilde{g}_t)\|^2|\mathcal{F}_{t-1},g_t]] \\
	={}& \expect[\expect[\| \frac{1}{MS}\sum_{m=1}^M\sum_{i \in 
		\mathcal{S}_{t}^m}[\nabla f_{m,i}(x_t)-\nabla f_{m,i}(x_{t-1})]  \\
	&  -\sum_{m=1}^M \phi_{1,t}^\prime (\frac{\sum_{i \in 
			\mathcal{S}_{t}^m}\nabla f_{m,i}(x_t)- \nabla 
			f_{m,i}(x_{t-1})}{S})/M 
	\|^2 |\mathcal{F}_{t-1},g_t]] \\
	& + 
	\expect[\expect[\|\tilde{g}_t-\phi_{2,t}^\prime(\tilde{g}_t)\|^2|\mathcal{F}_{t-1},g_t,\tilde{g}_t]]
	\\
	\leq{} & \frac{1}{M}\frac{d}{s_{1,t}^2}(2G_\infty)^2 + 
	\frac{d}{s_{2,t}^2}(2G_\infty)^2 
	= \frac{4dG_\infty^2}{Ms_{1,t}^2}+\frac{4dG_\infty^2}{s_{2,t}^2},
	\end{split}    
	\end{equation*}
	where in the inequality, we apply \cref{lem:var_multi} with 
	$\|\frac{\sum_{i \in \mathcal{S}_{t}^m}\nabla f_{m,i}(x_t)- \nabla 
		f_{m,i}(x_{t-1})}{S} \|_\infty \leq 2G_\infty$ and 
		$\|\tilde{g}_t\|_\infty 
	= \|\sum_{m=1}^M \phi_{1,t}^\prime (\frac{\sum_{i \in 
			\mathcal{S}_{t}^m}\nabla f_{m,i}(x_t)- \nabla 
		f_{m,i}(x_{t-1})}{S})/M\|_\infty \leq 2G_\infty$.
	
	Now we have for $\text{mod}(t,p) \neq 1$,
	\begin{equation*}
	\expect[\| g_t -\bar{g}_t\|^2] \leq 
	\frac{4dG_\infty^2}{Ms_{1,t}^2}+\frac{4dG_\infty^2}{s_{2,t}^2} + 
	\expect[\|g_{t-1} -\bar{g}_{t-1}\|^2].
	\end{equation*}
	
	If $\text{mod}(t,p)=1$, we have
	\begin{equation*}
	\begin{split}
	\expect[\| g_t -\bar{g}_t\|^2] 
	={}& \expect[\|\nabla f(x_t) -\tilde{g}_t+\tilde{g}_t- 
	\phi_{2,t}^\prime(\tilde{g}_t)\|^2] \\
	={}& \expect[\expect[\|\nabla f(x_t) - \frac{\sum_{m=1}^M 
		\phi_{1,t}^\prime(\sum_{i=1}^n \nabla 
		f_{m,i}(x_t)/n))}{M}\|^2 |\mathcal{F}_{t-1},g_t]] \\
	&\quad + \expect[\expect[\| 
	\tilde{g}_t- 
	\phi_{2,t}^\prime(\tilde{g}_t)\|^2|\mathcal{F}_{t-1},g_t,\tilde{g}_t]] \\
	\leq{}& \frac{1}{M} \expect[\expect[\|\sum_{i=1}^n \nabla f_{m,i}(x_t)/n -  
	\phi_{1,t}^\prime(\sum_{i=1}^n \nabla 
	f_{m,i}(x_t)/n))\|^2 |\mathcal{F}_{t-1},g_t]] + 
	\frac{d}{s_{2,t}^2}G_\infty^2 \\
	\leq{} & \frac{dG_\infty^2}{Ms_{1,t}^2}+\frac{dG_\infty^2}{s_{2,t}^2},
	\end{split}    
	\end{equation*}
	where in the inequality, we apply \cref{lem:var_multi} with $\|\sum_{i=1}^n 
	\nabla f_{m,i}(x_t)/n \|_\infty \leq G_\infty$ and $\|\tilde{g}_t\|_\infty 
	= \|\frac{\sum_{m=1}^M \phi_{1,t}^\prime(\sum_{i=1}^n \nabla 
		f_{m,i}(x_t)/n))}{M}\|_\infty \leq G_\infty$. Since for any $t_1, t_2$ 
		such 
	that $\lfloor \frac{t_1-1}{p} \rfloor = \lfloor \frac{t_2-1}{p} \rfloor$, 
	we have $s_1 \triangleq s_{1,t_1}=s_{1,t_2}, s_2 \triangleq 
	s_{2,t_1}=s_{2,t_2}$, thus
	\begin{equation}
	\label{eq:finite_nonconvex_bound_2} 
	\begin{split}
	\expect[\| g_t -\bar{g}_t\|^2] \leq 
	[\frac{4dG_\infty^2}{Ms_1^2}+\frac{4dG_\infty^2}{s_2^2}](p-1) + 
	\frac{dG_\infty^2}{Ms_1^2}+\frac{dG_\infty^2}{s_2^2}
	\leq 
	\frac{4dpG_\infty^2}{Ms_1^2}+\frac{4dpG_\infty^2}{s_2^2}.
	\end{split}  	
	\end{equation}
	
	Now combine 
	\cref{eq:finite_nonconvex_bound,eq:finite_nonconvex_bound_1,eq:finite_nonconvex_bound_2},
	we have
	\begin{align*}
	\expect[\| \nabla f(x_t) - \bar{g}_t\|^2] \leq
	\frac{L^2D^2}{S}\sum_{\lfloor \frac{t-1}{p}\rfloor p+1 \leq l \leq (\lfloor 
		\frac{t-1}{p}\rfloor +1)p} \eta_l^2+ 
	\frac{4dpG_\infty^2}{Ms_1^2}+\frac{4dpG_\infty^2}{s_2^2}.   
	\end{align*}
	
	Since we set $p=\sqrt{n}, S=\sqrt{n}, s_1=2^{z_1}-1\geq 
	(\frac{4dpS}{M\sum_{\lfloor \frac{t-1}{p}\rfloor p+1 \leq l \leq (\lfloor 
			\frac{t-1}{p}\rfloor +1)p}\eta_l^2})^{1/2}, s_2=2^{z_2}-1\geq 
	(\frac{4dpS}{\sum_{\lfloor \frac{t-1}{p}\rfloor p+1 \leq l \leq (\lfloor 
			\frac{t-1}{p}\rfloor +1)p}\eta_l^2})^{1/2}$, we have
	\begin{align*}
	\expect[\| \nabla f(x_t) - \bar{g}_t\|^2] \leq
	(L^2D^2+2G_\infty^2)\frac{\sum_{\lfloor \frac{t-1}{p}\rfloor p+1 \leq l 
			\leq (\lfloor \frac{t-1}{p}\rfloor +1)p}\eta_l^2}{\sqrt{n}}.    
	\end{align*}
\end{proof}

\section{Proof of \cref{thm:finite_convex}}\label{app:thm_finite_convex}

\begin{proof}
	First, since $x_{t+1} = (1-\eta_t)x_t + \eta_t v_t$ is a convex combination 
	of $x_t, v_t$, and $x_1 \in \constraint, v_t \in \constraint, \forall\ t$, 
	we can prove $x_t \in \constraint, \forall\ t $ by induction. So $x_{T+1} 
	\in \constraint$.
	
	Since for the $s$-\sscheme, the required number of bits is $z=\log_2(s+1)$. 
	So 
	for $\Phi_{1,t}, \Phi_{2,t}$, the corresponding assigned bits are 
	$z_{1,t}=\log_2(s_{1,t}+1), z_{2,t}=\log_2(s_{2,t}+1)$. To make them 
	integers, we can set $z_{1,t}=\lceil 
	\log_2[(\frac{pd^{1/2}S^{1/2}}{M^{1/2}})\lceil 
	\frac{t}{p}\rceil +1] \rceil, z_{2,t} = \lceil 
	\log_2[pd^{1/2}S^{1/2}\lceil\frac{t}{p}\rceil + 1] \rceil$.
	
%
%
	
	Then with $\eta_t=2/(p\lceil \frac{t}{p}\rceil)$, the conditions 
	\cref{eq:lemma_z1,eq:lemma_z2} in \cref{lem:finite_variance_reduction} are 
	satisfied. So by \cref{lem:finite_variance_reduction}, we have
	\begin{equation*}
	\expect[\| \nabla f(x_t)-\bar{g}_t \|^2] \leq 
	\frac{4(L^2D^2+2G_\infty^2)}{p^2\lceil \frac{t}{p}\rceil^2}.   
	\end{equation*}
	
	So
	\begin{align*}
	\expect[\| \nabla f(x_t)-\bar{g}_t \|] \leq  \sqrt{\expect[\| \nabla 
		f(x_t)-\bar{g}_t \|^2]} 
	\leq \frac{2\sqrt{L^2D^2+2G_\infty^2}}{p\lceil 
		\frac{t}{p}\rceil}.
	\end{align*}
	
	On the other hand, by \cref{assump_on_f_i_convex}, $f=\frac{\sum_{m \in 
			[M], i \in [n]}f_{m,i}}{Mn}$ is a bounded $L$-smooth convex 
			function on 
	$\constraint$, with 
	$\sup_{x \in \constraint}|f(x)|\leq M_0$. So \cref{eq:iteration} still 
	holds. Taking expectation on both sides, we have
	\begin{equation*}
	\begin{split}
	\expect[f(x_{t+1})] - f(x^*) 
	\leq{}& (1-\eta_t)(\expect[f(x_t)]-f(x^*)) + 
	\eta_t D\expect[\|\nabla f(x_t)-\bar{g}_t\|] +\frac{L}{2}\eta_t^2D^2 \\   
	\leq{}& (1-\frac{2}{p\lceil \frac{t}{p}\rceil})(\expect[f(x_t)]-f(x^*)) + 
	\frac{2D(2\sqrt{L^2D^2+2G_\infty^2}+LD)}{p^2\lceil \frac{t}{p}\rceil^2}.
	\end{split}
	\end{equation*}
	
	Let $t = kp, Q=2D(2\sqrt{L^2D^2+2G_\infty^2}+LD)$ where $k \in 
	\mathbb{N}^+$, and apply the inequality recursively for $p$ times, we have
	\begin{align*}
	\expect[f(x_{kp+1})] - f(x^*) \leq (1-\frac{2}{pk})^p 
	(\expect[f(x_{(k-1)p+1})]-f(x^*)) + \frac{Q}{pk^2}.
	\end{align*}
	
	Now we claim that $(1-\frac{2}{pk})^p \leq 1-\frac{2}{pk}p + 
	\frac{4}{p^2k^2}\frac{p(p-1)}{2}=1-\frac{2}{k}+\frac{2(p-1)}{pk^2}.$ 
	
	The inequality holds trivially for $p=1$ and $p=2$. For $p \geq 3$, we have 
	$\frac{2}{pk} < 1$. Define function 
	$h(x)=(1-x)^p-1+px-\frac{p(p-1)}{2}x^2$. Then for $x \in [0,1]$, we have 
	$h^\prime(x)=p[1-(p-1)x-(1-x)^{p-1}]  
	,h^{\prime\prime}(x)=p(p-1)[(1-x)^{p-2}-1] \leq 0$, then $h^\prime(x)\leq 
	h^\prime(0)=0$. Thus $h(x) \leq h(0)=0$, \emph{i.e.}, $(1-x)^p \leq 
	1-px+\frac{p(p-1)}{2}x^2$. Let $x=\frac{2}{pk}$, then we have 
	$(1-\frac{2}{pk})^p \leq 
	1-p\frac{2}{pk}+\frac{p(p-1)}{2}(\frac{2}{pk})^2=1-\frac{2}{k}+\frac{2(p-1)}{pk^2}.$
	
	Consider $k \geq 3$, then 
	\begin{align*}
	(1-\frac{2}{pk})^p \leq 1 - \frac{2}{k} + \frac{2(p-1)}{3pk} 
	\leq 1 - \frac{2}{k} + \frac{2}{3k} 
	= 1 - \frac{4}{3k},
	\end{align*}
	and thus
	\begin{align*}
	\expect[f(x_{kp+1})] - f(x^*) \leq (1-\frac{4}{3k}) 
	(\expect[f(x_{(k-1)p+1})]-f(x^*)) + \frac{Q}{pk^2}.    
	\end{align*}
	
	Define $Q_0=\max\{6pM_0, 3Q\}$. Then we claim $\expect[f(x_{kp+1})] - 
	f(x^*) 
	\leq \frac{Q_0}{(k+1)p}, \forall\ k \in \mathbb{N}$. 
	
	We prove this inequality by induction. For $k=0,1,2$, we have 
	$\expect[f(x_{kp+1})] - f(x^*) \leq 2M_0 \leq \frac{Q_0}{(2+1)p} \leq 
	\frac{Q_0}{(k+1)p}.$ Now suppose that for some $k\geq 3$, we have 
	$\expect[f(x_{(k-1)p+1})]-f(x^*)\leq \frac{Q_0}{kp}$, then
	\begin{equation*}
	\begin{split}
	\expect[f(x_{kp+1})] - f(x^*) \leq (1-\frac{4}{3k}) \frac{Q_0}{kp} + 
	\frac{Q_0}{3pk^2}   
	=\frac{k-1}{pk^2}Q_0 
	\leq \frac{Q_0}{(k+1)p},
	\end{split}    
	\end{equation*}
	where the last inequality holds since $(k-1)(k+1)\leq k^2$. 
	
	So we have $\expect[f(x_{kp+1})] - f(x^*) \leq \frac{Q_0}{(k+1)p}$, for all 
	non-negative integer $k \leq T/p$. Let $T=Kp$, then
	\begin{align*}
	\expect[f(x_{T+1})] - f(x^*) = \expect[f(x_{Kp+1})] - f(x^*)
	 \leq 
	\frac{Q_0}{(K+1)p} 
	\leq \frac{Q_0}{T}.   
	\end{align*}
	
	For any $\epsilon > 0$, set $T=\frac{Q_0}{\epsilon}$, then we have 
	$\expect[f(x_{T+1})] - f(x^*)] \leq \epsilon$. So the LO complexity is 
	$\mathcal{O}(1/\epsilon)$. Also note in each period, the total number of 
	gradient 
	call 
	is $Mn+(p-1)\cdot M\cdot S\cdot 2=Mn+2MS(p-1)$, so the average cost is 
	$[Mn+2MS(p-1)]/p=M[3\sqrt{n}-2]$. Thus the total IFO complexity is 
	$M[3\sqrt{n}-2]\frac{Q_0}{\epsilon} 
	=\mathcal{O}(\sqrt{n}\max\{6M_0\sqrt{n},3Q 
	\}/\epsilon)$.
	
	The communication bits per round are at most 
	$M[d(z_{1,T}+1)+32]+d(z_{2,T}+1)+32=d(Mz_{1,T}+z_{2,T})+(M+1)(d+32)
	\approx d(M\lceil\log_2[(\frac{\sqrt{n}dT^2}{M})^{1/2}+1]\rceil+
	\lceil\log_2[(\sqrt{n}dT^2)^{1/2}+1]\rceil) + (M+1)(d+32).$
	
\end{proof}

\section{Proof of 
	\cref{thm:quantized_finite_distributed_nonconvex}}
\label{app:thm_quantized_finite_distributed_nonconvex}
\begin{lemma}
	\label{lem:finite_distributed_nonconvex}
	Under \cref{assump_on_K,assump_on_f_i}, with $\eta_t = T^{-1/2}$ and fixed 
	$T$ in \cref{alg:finite_dist_fw}, if we further assume that
	\begin{equation*}
	\expect[\| \nabla f(x_t) - \bar{g}_t\|^2] \leq \frac{c^2}{T},    
	\end{equation*}
	where $c$ is a positive constant, then we have $x_o \in \constraint$ and
	\begin{equation*}
	\expect[\mathcal{G}(x_o)] \leq \frac{2M_0+cD+\frac{LD^2}{2}}{\sqrt{T}}.
	\end{equation*}
\end{lemma}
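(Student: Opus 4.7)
The plan is to mirror the Frank--Wolfe descent analysis already used in the proof of \cref{thm:distributed_nonconvex}, but with the constant step size $\eta_t=T^{-1/2}$ and the assumed variance bound plugged in directly, so the argument becomes much shorter. First I would verify $x_o\in\constraint$: since $x_{t+1}=(1-\eta_t)x_t+\eta_t v_t$ is a convex combination of $x_t\in\constraint$ and $v_t\in\constraint$ (the latter because $v_t$ is the linear minimizer over $\constraint$), an induction starting from $x_1\in\constraint$ shows $x_t\in\constraint$ for all $t$, and hence $x_o\in\constraint$ regardless of which $t$ is picked.

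Next I would derive the per-iteration descent inequality. Let $v_t^\prime=\argmin_{v\in\constraint}\langle v,\nabla f(x_t)\rangle$ so that $\mathcal{G}(x_t)=\langle x_t-v_t^\prime,\nabla f(x_t)\rangle$. Using $L$-smoothness, $\|v_t-x_t\|\le D$, the optimality of $v_t$ with respect to $\bar{g}_t$, and Cauchy--Schwarz on the mismatch term $\langle \nabla f(x_t)-\bar{g}_t, v_t-v_t^\prime\rangle$, I obtain exactly the same inequality as in Appendix \ref{app:thm_distributed_nonconvex}:
\begin{equation*}
\eta_t \mathcal{G}(x_t) \le f(x_t)-f(x_{t+1})+\eta_t D\,\|\nabla f(x_t)-\bar{g}_t\|+\tfrac{L\eta_t^2 D^2}{2}.
\end{equation*}

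Then I would sum this inequality for $t=1,\dots,T$, take expectations, and apply Jensen's inequality together with the hypothesis $\expect[\|\nabla f(x_t)-\bar{g}_t\|^2]\le c^2/T$, which gives $\expect[\|\nabla f(x_t)-\bar{g}_t\|]\le c/\sqrt{T}$. With $\eta_t=T^{-1/2}$ constant, this produces
\begin{equation*}
\tfrac{1}{\sqrt{T}}\sum_{t=1}^T \expect[\mathcal{G}(x_t)] \le f(x_1)-\expect[f(x_{T+1})]+D\,T\cdot\tfrac{1}{\sqrt{T}}\cdot\tfrac{c}{\sqrt{T}}+\tfrac{LD^2}{2}\cdot T\cdot\tfrac{1}{T}.
\end{equation*}
Using $|f|\le M_0$ to bound $f(x_1)-\expect[f(x_{T+1})]\le 2M_0$, dividing both sides by $T/\sqrt{T}=\sqrt{T}$, and noting that $\expect[\mathcal{G}(x_o)]=\tfrac{1}{T}\sum_{t=1}^T\expect[\mathcal{G}(x_t)]$ because $x_o$ is uniformly sampled from $\{x_1,\dots,x_T\}$, yields the claimed bound $\expect[\mathcal{G}(x_o)]\le (2M_0+cD+LD^2/2)/\sqrt{T}$.

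I do not anticipate a real obstacle here, as the computation is essentially a bookkeeping exercise once the descent inequality is in place. The only step that requires a little care is the use of the optimality of $v_t$ in the mismatch decomposition --- specifically, adding and subtracting $\langle v_t^\prime,\bar g_t\rangle$ and $\langle v_t^\prime,\nabla f(x_t)\rangle$ in the right order so that the cross term collapses to $\langle \nabla f(x_t)-\bar g_t, v_t-v_t^\prime\rangle$, which is then controlled by $D\,\|\nabla f(x_t)-\bar g_t\|$. Everything else is a direct specialization of the non-convex stochastic analysis to the constant-step-size regime.
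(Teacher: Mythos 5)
Your proposal is correct and follows essentially the same route as the paper's own proof: establish $x_o\in\constraint$ by induction, reuse the per-iteration Frank--Wolfe gap inequality from the non-convex stochastic analysis, sum over $t$, apply Jensen's inequality to the assumed variance bound, and average to convert $\frac{1}{T}\sum_t\expect[\mathcal{G}(x_t)]$ into $\expect[\mathcal{G}(x_o)]$. The arithmetic with $\eta_t=T^{-1/2}$ checks out and reproduces the stated constant $2M_0+cD+LD^2/2$.
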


Since here we already assume that $\expect[\| \nabla f(x_t) - \bar{g}_t\|^2]$ 
has an upper bound. The convergence rate can be proved by solving a recursive 
inequality directly. Moreover, since finite-sum optimization is a special case 
of stochastic gradient optimization, we can use the analysis in the proof of 
\cref{thm:distributed_nonconvex} to get the inequality.

\begin{proof}
	First, since $x_{t+1} = (1-\eta_t)x_t + \eta_t v_t$ is a convex combination 
	of $x_t, v_t$, and $x_1 \in \constraint, v_t \in \constraint, \forall\ t$, 
	we can prove $x_t \in \constraint, \forall\ t $ by induction. So $x_{o} \in 
	\constraint$.
	
	By \cref{assump_on_f_i}, $f$ is also a bounded (potentially) non-convex 
	function on $\constraint$ with $L$-Lipschitz continuous gradient. 
	Specifically, we have $\sup_{x \in \constraint}|f(x)| \leq M_0$. So 
	\cref{eq:bound_on_gap} still holds, \emph{i.e.},
	\begin{align*}
	\sum_{t=1}^T \eta_t \expect[\mathcal{G}(x_t)] \leq
	f(x_1)-f(x_{T+1})+D\sum_{t=1}^T \eta_t\expect[\|\nabla f(x_t)-\bar{g}_t\|] 
	+\frac{LD^2}{2}\sum_{t=1}^T \eta_t^2.    
	\end{align*}
	
	Since we assume that $\expect[\| \nabla f(x_t) - \bar{g}_t\|^2] \leq 
	\frac{c^2}{T}$, we have
	\begin{equation*}
	\expect[\|\nabla f(x_t)-\bar{g}_t\|] \leq \sqrt{\expect[\|\nabla 
		f(x_t)-\bar{g}_t\|^2]} \leq \frac{c}{\sqrt{T}}.    
	\end{equation*}
	
	With $\eta_t = T^{-1/2}$, we then have
	\begin{equation*}
	\begin{split}
	\sum_{t=1}^T \expect[\mathcal{G}(x_t)]
	\leq{}& \sqrt{T}[f(x_1)-f(x_{T+1})] + 
	D \sum_{t=1}^T \expect[\|\nabla f(x_t)-\bar{g}_t\|] + \sqrt{T} 
	\frac{LD^2}{2} T (T^{-1/2})^2 \\
	\leq{}& 2M_0\sqrt{T}+DT\frac{c}{\sqrt{T}}+\frac{LD^2}{2}\sqrt{T}
	= (2M_0+cD+\frac{LD^2}{2})\sqrt{T}.
	\end{split}
	\end{equation*}
	
	So
	\begin{equation*}
	\expect[\mathcal{G}(x_o)]=\frac{\sum_{t=1}^T \expect[\mathcal{G}(x_t)]}{T} 
	\leq \frac{2M_0+cD+\frac{LD^2}{2}}{\sqrt{T}}.    
	\end{equation*}
\end{proof}

Now we can prove \cref{thm:quantized_finite_distributed_nonconvex}.

\begin{proof}
	By \cref{lem:finite_distributed_nonconvex}, we only need to bound 
	$\expect[\| \nabla f(x_t) - \bar{g}_t\|^2]$, which can be achieved by 
	applying \cref{lem:finite_variance_reduction}. 
	
	Since for the $s$-\sscheme, the required number of bits is $z=\log_2(s+1)$. 
	So for $\Phi_{1,t}, \Phi_{2,t}$, the corresponding assigned bits are 
	$z_{1,t}=\log_2(s_{1,t}+1), z_{2,t}=\log_2(s_{2,t}+1)$. To make them 
	integers, we can set  $z_{1,t} = z_1 
	= \lceil \log_2[(\frac{4\sqrt{n}dT}{M})^{1/2} + 1] \rceil, z_{2,t}=z_2 = 
	\lceil \log_2[(4\sqrt{n}dT)^{1/2} +1] \rceil$. 
	
	
	
	Then with $\eta_t=T^{-1/2}$, the conditions \cref{eq:lemma_z1,eq:lemma_z2} 
	in \cref{lem:finite_variance_reduction} are satisfied. So by 
	\cref{lem:finite_variance_reduction}, we have
	\begin{equation*}
	\begin{split}
	\expect[\| \nabla f(x_t) - \bar{g}_t\|^2]
	\leq{}
	(L^2D^2+2G_\infty^2)\frac{\sum_{\lfloor \frac{t-1}{p}\rfloor p+1 \leq l 
			\leq (\lfloor \frac{t-1}{p}\rfloor
			+1)p}\eta_l^2}{\sqrt{n}} 
	={}\frac{L^2D^2+2G_\infty^2}{T}.
	\end{split}    
	\end{equation*}

	By \cref{lem:finite_distributed_nonconvex},
	\begin{equation*}
	\expect[\mathcal{G}(x_o)] \leq 
	\frac{2M_0+D\sqrt{L^2D^2+2G_\infty^2}+\frac{LD^2}{2}}{\sqrt{T}}.  
	\end{equation*}
	
	The average communication bits per round are 
	$M[d(z_1+1)+32]+d(z_2+1)+32=d(Mz_1+z_2)+(M+1)(d+32).$
	
	For any $\epsilon > 0$, set 
	$T=(2M_0+D\sqrt{L^2D^2+2G_\infty^2}+\frac{LD^2}{2})^2/\epsilon^2$, then we 
	have $\expect[\mathcal{G}(x_o)] \leq \epsilon$. So the LO complexity is 
	$\mathcal{O}(1/\epsilon^2)$. Also note in each period, the total number of 
	gradient 
	call is $Mn+(p-1)\cdot M\cdot S\cdot 2=Mn+2MS(p-1)$, so the average cost is 
	$[Mn+2MS(p-1)]/p=M[3\sqrt{n}-2]$. Thus the total IFO complexity is 
	$M[3\sqrt{n}-2](2M_0+D\sqrt{L^2D^2+2G_\infty^2}+\frac{LD^2}{2})^2/\epsilon^2
	=\mathcal{O}(\sqrt{n}/\epsilon^2)$.

\end{proof}

\section{Additional Experiment Results}\label{app:additonal_experiments}
\begin{figure*}[t]
	\centering
	\begin{subfigure}[b]{0.45\linewidth}
	\includegraphics[width=\linewidth]{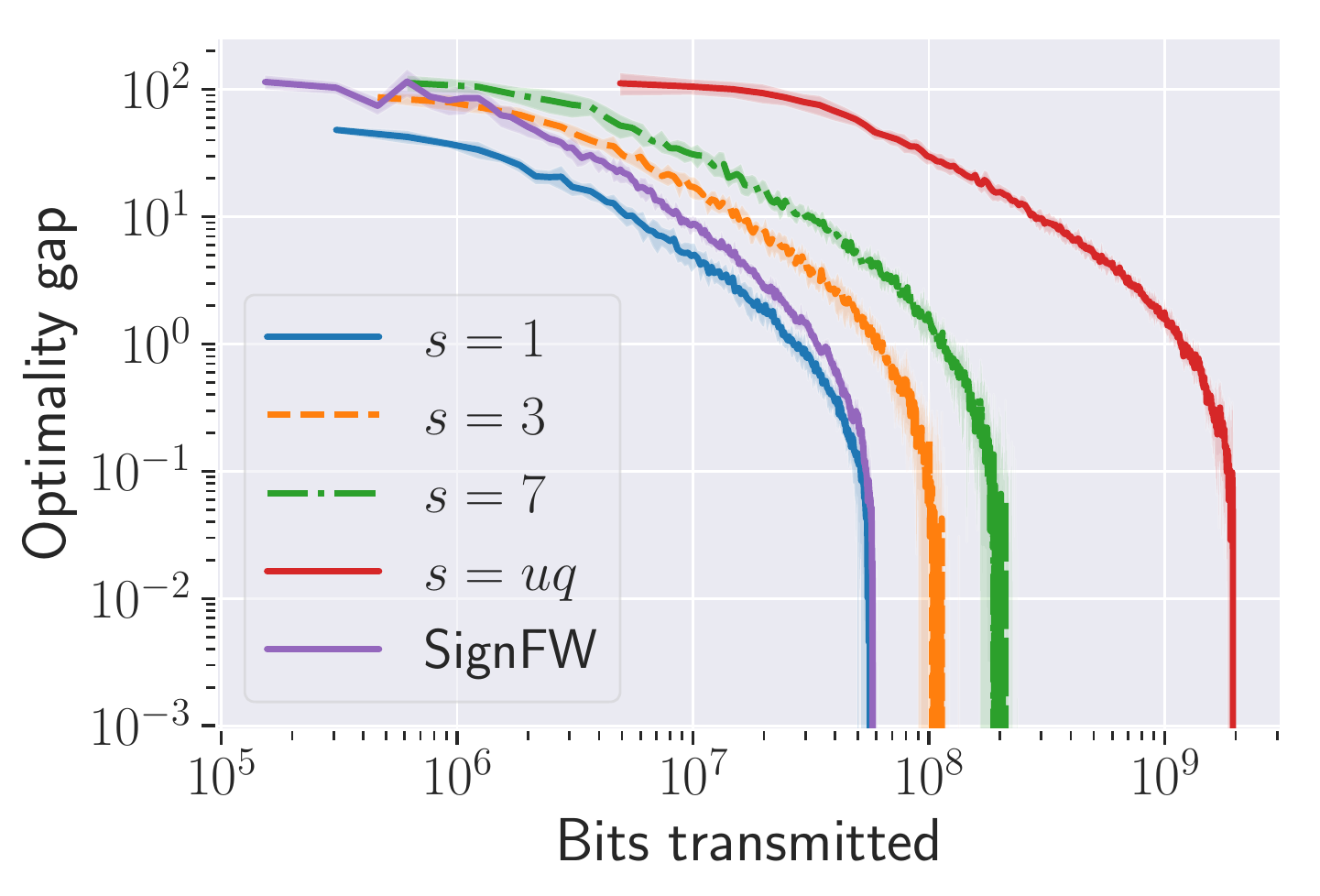} 
	\caption{Multinomial logistic regression}
	\label{fig:cf-opt}
\end{subfigure}\hfill
	\begin{subfigure}[b]{0.45\linewidth}
		\includegraphics[width=\linewidth]{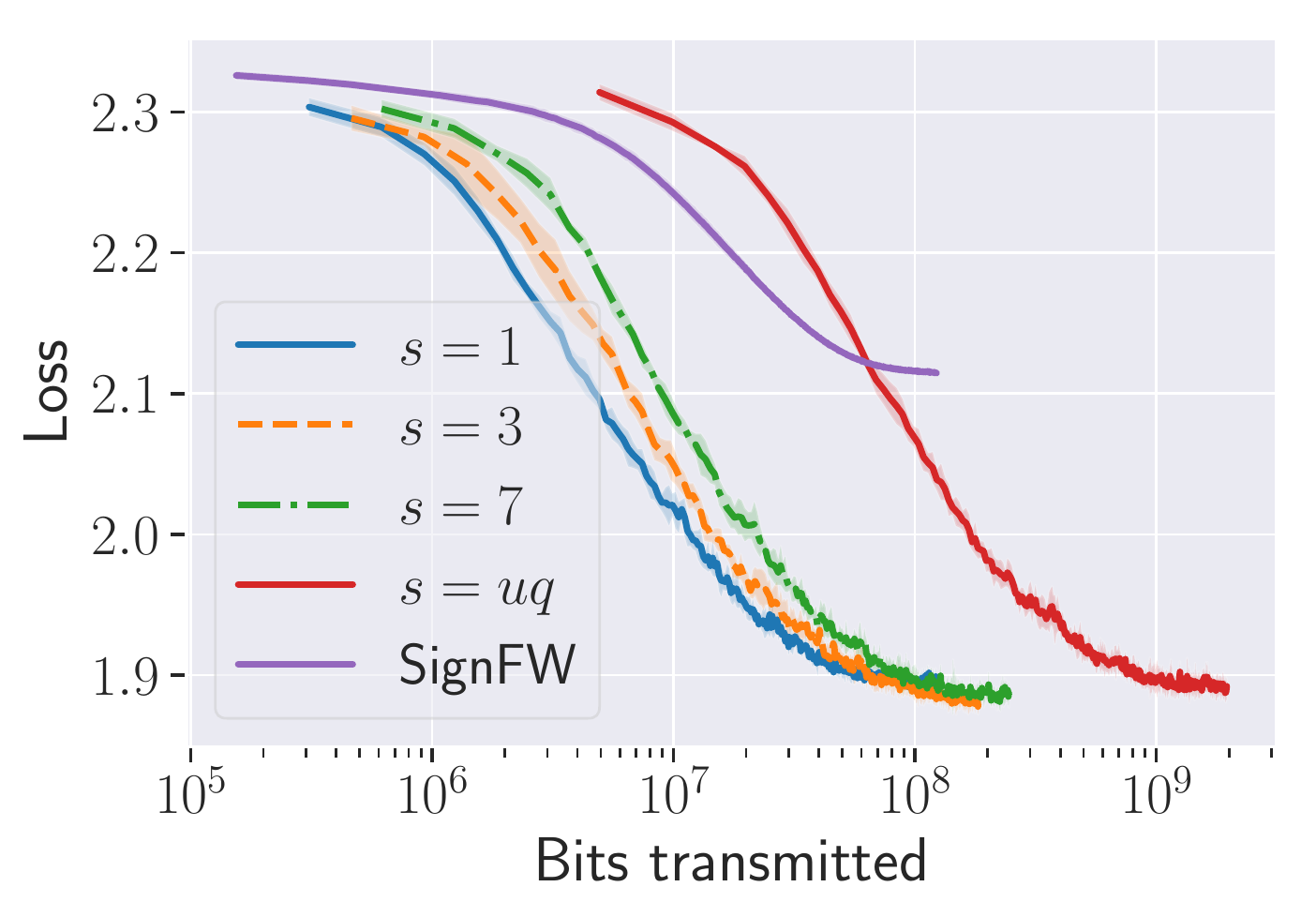} 
		\caption{Three-layer neural network}
		\label{fig:cn-opt}
	\end{subfigure}
	\caption{Comparison in terms of the optimality gap versus the number of 
	transmitted bits for the multinomial logistic 
	regression problem (left) and training a three-layer neural network (right) 
	on the CIFAR-10 dataset. The best performance belongs to \qfwshort with 
	\sscheme 
	($ s=1 $), and FW without quantization ($ s=uq $) has the worst performance.
		\label{fig:cifar-10}
	}
\end{figure*}
\begin{figure}[htb]
	\centering
	\includegraphics[width=0.45\linewidth]{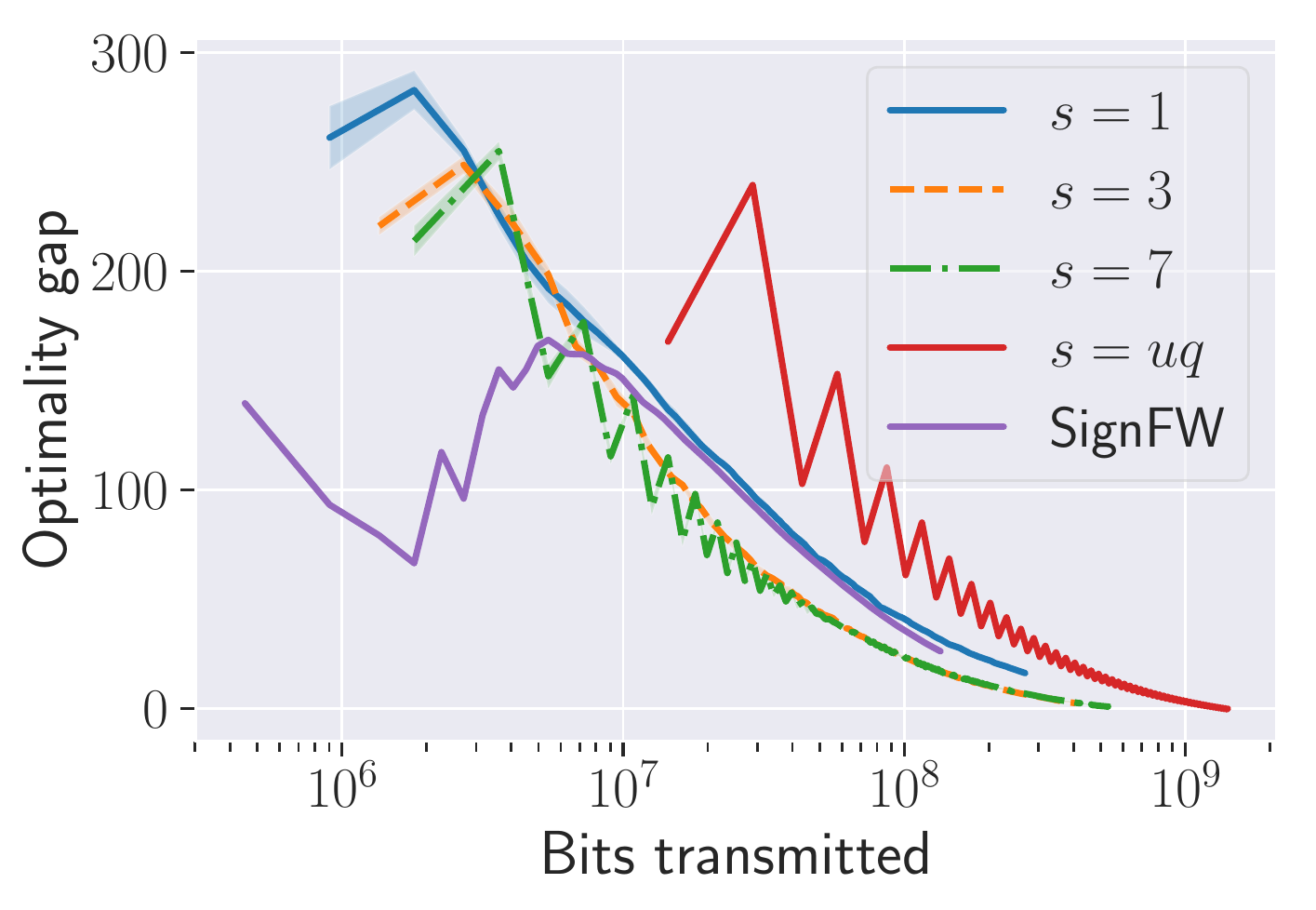} 
	\caption{Optimality gap vs.\ bits transmitted for the task of 
		multi-task least square regression. The Frank-Wolfe algorithm without 
		quantization is denoted by $ s=uq $.}
	\label{fig:mlsr}
\end{figure}

We conduct additional experiments on the CIFAR-10 dataset. The total number of 
images in the training set of CIFAR-10 is $ N = 50000 $. We assume that each 
worker stores 2500 images. We consider the loss function of multinomial 
logistic regression \eqref{mul_log} and the log loss of a three-layer neural 
network \eqref{nn}. In both \eqref{mul_log} and \eqref{nn}, the number of 
classes is $ C=10 $. In \eqref{nn}, the size of matrices $ W_1 $ and $ W_2 $ 
are 
$ 3072 \times 10 $ and $ 10\times 10 $, respectively. The constraints 
parameters are $ a_1=a_2=10 $. As are presented in \cref{fig:cifar-10}, the 
results are similar to those of the experiments on the MNIST dataset. For both 
objective functions, \qfwshort with \sscheme ($ s=1 $) achieves the best 
performance and the Frank-Wolfe without quantization has the worst performance. 

Our third setup studies a multi-task least square regression 
problem~\citep{zheng2018distributed}. We aim to minimize the following 
objective \begin{equation}\label{eq:mlsr}
	\min_W f(W) = \frac{1}{2}\|XW-Y\|_F = \frac{1}{2}\sum_{i=1}^N \sum_{j=1}^m 
	(x_i^\top w_j-y_{ij})^2\quad \text{s.t. } {\|W\|_1 \le 1}\,,
\end{equation}
where $ X\in \mathbb{R}^{N\times d} $ is termed the feature matrix, $ Y\in 
\mathbb{R}^{N\times m} $ is the response matrix, $ W\in \mathbb{R}^{d\times m}  
$ is the weight matrix that we aim to optimize, and $ \|\cdot\|_F $ denotes 
the 
Frobenius norm. In this setup, we use the synthetic data. We set $ N=50000 $ 
and $ m=d=300 $. Every component of $ X $ is sampled from the standard normal 
distribution and the true value of $ W^* $ is $ \frac{W'}{\|W'\|_1} $, where 
every entry of $ W' $ is sampled from the standard normal distribution. Then we 
form $ Y = XW^* $. We use $ 20 $ workers, each storing $ 2500 $ rows of the  
matrices $ X $ and $ Y $. We present the optimality gap versus the number of 
transmitted bits in \cref{fig:mlsr}. It can be observed that the Frank-Wolfe 
algorithm without quantization is the least communication-efficient. Although 
\signfw achieves the smallest optimality gap at the initial stage, \qfwshort 
with $ s=3 $ and $ s=7 $ outperform other algorithms eventually.

\end{document}